%% file: main.tex
\documentclass{article} \usepackage{c_style,times}

\usepackage{hyperref}
\hypersetup{
  colorlinks=true,
  linkcolor=blue,
  citecolor=blue,
  urlcolor=blue
}
\usepackage{url}
\usepackage{amsmath}
\usepackage{amssymb}
\usepackage{amsthm}
\usepackage{graphicx}
\usepackage{cleveref}
\usepackage{xcolor}
\usepackage{enumitem}
\usepackage{booktabs}
\usepackage{algorithm}
\usepackage{algpseudocode}
\usepackage{float}
\usepackage{subcaption}
\usepackage{wrapfig}
\usepackage{listings}

\lstdefinestyle{lean}{
  basicstyle=\ttfamily\footnotesize,
  backgroundcolor=\color{black!6},
  frame=none,
  breaklines=true,
  columns=fixed,
  basewidth=0.5em,
  keepspaces=true,
  showstringspaces=false,
  xleftmargin=0.5em,
  framexleftmargin=0.5em,
  aboveskip=0.6em,
  belowskip=0.6em,
  literate={∧}{{$\wedge$}}1 {∨}{{$\vee$}}1 {∀}{{$\forall$}}1 {∃}{{$\exists$}}1
           {→}{{$\to$}}1 {←}{{$\leftarrow$}}1 {↔}{{$\leftrightarrow$}}1
           {¬}{{$\lnot$}}1 {≠}{{$\neq$}}1 {≤}{{$\le$}}1 {≥}{{$\ge$}}1
           {∈}{{$\in$}}1 {∉}{{$\notin$}}1 {⟨}{{$\langle$}}1 {⟩}{{$\rangle$}}1
           {·}{{$\cdot$}}1 {∅}{{$\emptyset$}}1 {ℓ}{{$\ell$}}1 {…}{{...}}3
           {⊢}{{$\vdash$}}1 {↓}{{$\downarrow$}}1
}
\lstnewenvironment{leancode}{\lstset{style=lean}}{}

\usepackage{tcolorbox}
\tcbuselibrary{skins,breakable,listings}
\definecolor{promptframe}{gray}{0.32}
\definecolor{promptback}{gray}{0.965}
\lstdefinestyle{promptstyle}{
  basicstyle=\ttfamily\scriptsize,
  backgroundcolor=\color{promptback},
  numbers=none,
  frame=none,
  breaklines=true,
  breakatwhitespace=false,
  breakindent=0pt,
  columns=fullflexible,
  keepspaces=true,
  showstringspaces=false,
  literate={-}{{-\allowbreak}}1 {§}{{\S}}1 {—}{{\textemdash}}1 {–}{{\textendash}}1
           {→}{{$\rightarrow$}}1 {⇒}{{$\Rightarrow$}}1 {≥}{{$\geq$}}1 {≤}{{$\leq$}}1
           {×}{{$\times$}}1 {·}{{$\cdot$}}1 {…}{{...}}3 {↔}{{$\leftrightarrow$}}1
           {∀}{{$\forall$}}1 {✝}{{$\dagger$}}1 {⟨}{{$\langle$}}1 {⟩}{{$\rangle$}}1
}
\newtcbinputlisting{\promptbox}[2]{
  enhanced, breakable, listing only,
  listing file={#2},
  listing options={style=promptstyle},
  title={#1},
  fonttitle=\bfseries\large, coltitle=white,
  colframe=promptframe, colback=promptback,
  boxrule=1.2pt, arc=2mm,
  left=5mm, right=5mm, top=2mm, bottom=2mm,
}

\newcommand{\ToolName}{CoCo-Prover}
\newcommand{\ToolNameIt}{\textit{\ToolName{}}}
\newcommand{\CostReduction}[2]{\textbf{#1}\,\textcolor{green!60!black}{\ensuremath{\blacktriangledown\,#2\%}}}
\newcommand{\CostIncrease}[2]{\textbf{#1}\,\textcolor{red}{\ensuremath{\blacktriangle\,#2\%}}}

\title{Cost-Efficient Theorem Proving via Agent Orchestration in Program Verification}

\firstpageheader{Preprint.}

\author{\authorname{Shuangjie Yao}{1,3}\And
\authorname{Nikolaus Holzer}{1}\And
\authorname{Mark Paul Santolucito}{2}\AND
\authorname{Baishakhi Ray}{1}\And
\authorname{Suman Jana}{1}\And
\authorname{Dongdong She}{3}
}

\affiliations{\affiliation{1}{Columbia University}\And
\affiliation{2}{Barnard College, Columbia University}\\[0.4em]
\affiliation{3}{The Hong Kong University of Science and Technology}
}

\newcommand{\ie}{\emph{i.e.},\ }
\newcommand{\eg}{\emph{e.g.},\ }

\newtheorem{proposition}{Proposition}[section]
\newtheorem{lemma}[proposition]{Lemma}

\crefname{proposition}{proposition}{propositions}
\Crefname{proposition}{Proposition}{Propositions}
\crefname{lemma}{lemma}{lemmas}
\Crefname{lemma}{Lemma}{Lemmas}
\crefname{remark}{remark}{remarks}
\Crefname{remark}{Remark}{Remarks}

\begin{document}

\maketitle

\input{chapters/00_abstract.tex}

\input{chapters/01_introduction.tex}

\input{chapters/02_related_work.tex}

\input{chapters/04_methodology.tex}

\input{chapters/11_evaluation.tex}

\input{chapters/21_conclusion.tex}

\bibliography{miscellaneous/reference}
\bibliographystyle{c_style}

\appendix
\input{appendix/theory.tex}

\input{appendix/operational_details.tex}

\input{appendix/additional_results.tex}

\input{appendix/case_study.tex}

\input{appendix/prompts.tex}

\input{appendix/limitations.tex}

\end{document}

%% file: chapters/00_abstract.tex
\begin{abstract}
    Program verification establishes software correctness through machine-checkable proofs constructed in theorem provers. It's a guarantee especially valuable for code generated by large language models (LLMs), which is fluent but carries no assurance of correctness.
    Almost all existing provers, however, pursue pass rates alone at whatever sampling or search budget it takes, and overlook the \emph{success-vs-cost frontier}; yet real software often carries hundreds of interdependent proof obligations, so what matters at scale is not whether one theorem can be proved, but how many can be proved economically.
    We introduce \ToolNameIt{}, which formalizes cost-efficient program proving as \emph{metalevel decision-making under cost, grounded on two-level proof graphs}: an AND/OR proof hypergraph within each declaration is joined to a lemma-dependency graph across declarations; and at each step, it answers two questions: which open goals to select, and which actions to purchase on these goals.
    Selection stays symbolic as a topological pass over the proof graphs.
    Action choice is agent orchestration via metalevel decision-making: an agentic router treats every bounded specialist invocation as a separately priced, best-effort computation, matching heterogeneous specialist agents together with configurations, under evolved routing rules as evidence accumulates.
    On five program verification benchmarks in Lean 4 including function-level CLEVER, VERINA, and AlgoVeri, and repository-level NTP4VC and Vero, we show that \ToolName{} achieves a better success-vs-cost frontier than baselines including frontier coding agents and state-of-the-art LLM-based provers: it achieves the best solve rate on every benchmark and up to 100\% on two benchmarks. It also reduces cost by up to 30.9\% compared to the strongest baseline with the strongest LLM in our evaluation.
\end{abstract}

%% file: chapters/01_introduction.tex
\section{Introduction}
\label{sec:introduction}

Program verification uses formal mathematical reasoning to establish that a program satisfies specified correctness properties~\cite{floyd1967assigning,hoare1969axiomatic}, providing guarantees that testing, which examines only finitely many executions, cannot offer. Historically, obtaining such guarantees has been expensive, requiring substantial expert effort to construct machine-checkable proofs in interactive theorem provers (ITPs) such as Lean~4~\cite{demoura2021lean4}, Rocq~\cite{rocq2025manual}, and Isabelle/HOL~\cite{nipkow2002isabelle}. Large language models (LLMs) create an opportunity to reduce this burden: while they can now generate sophisticated programs from natural-language specifications~\cite{chen2021evaluatingllm, alharbi2025acgt}, they may also automate the costly process of constructing formal proofs.
This possibility has motivated \emph{verified code generation}~\cite{sun2024clover, yang2024formal}, where a model generates a program with a machine-checkable proof proving it satisfies the specification. Nevertheless, proof generation remains considerably harder than code generation shown in recent benchmarks VERINA~\cite{ye2026verina}, CLEVER~\cite{thakur2025clever}, and AlgoVeri~\cite{zhao2026algoveri}. Proof generation has therefore emerged as a central bottleneck in program verification and verified code generation.

Real software magnifies this bottleneck.
A repository can contain hundreds or thousands of proof obligations with sharply different proof economics~\cite{xin2026verisoftbench,ye2026vero}.
At the same time, a generated proof serves only to verify its program: the checker returns exactly one of two outcomes, and every accepted proof certifies equally.
Thus, the natural objective for scalable program verification is the \emph{success--vs--cost frontier}---the Pareto frontier between proof success and token cost (\emph{or} monetary expenditure practically), measured as success under matched cost limits and cost at matched success.

Recent LLM-based theorem provers have shown substantial promise, particularly in mathematical reasoning~\cite{ren2025dsproverv2,lin2026goedelproverv2,achim2025aristotle,sumanth2026hilbert,chung2026goedelarchitect}. Code proving, however, presents a different challenge~\Cref{fig:moti-proving}. Each task introduces a fresh formal context (\eg the implementation, specification, and often new data types and definitions) and proving correctness requires reasoning about the program's intermediate states. While program syntax often determines the overall proof structure and many side conditions can be discharged mechanically, a few critical obligations require discovering facts not explicit in the program or specification (\eg the right loop invariant, a sufficiently generalized induction hypothesis). These discoveries often determine whether the entire proof succeeds, yet they are precisely the steps that require substantial semantic reasoning.

This asymmetry creates a distinctive opportunity for code proving: most obligations are routine, but a small number are both difficult and decisive. Existing code provers~\cite{he2026nspg,tu2026agentic,zhao2026lemmanet,li2026goedelcodeprover}, however, largely ignore this asymmetry, pursuing provability through large sampling or iterative repair budgets and applying similar model-driven procedures across subgoals. Such strategies are particularly costly with frontier general-purpose LLMs and waste substantial computation on obligations that theorem-prover automation or inexpensive model calls could solve directly (\Cref{fig:motivation}). Effective code proving therefore requires not simply more reasoning, but identifying where deep reasoning matters and concentrating model computation there.

\input{figures/motivation.tex}

To address this challenge, we introduce \ToolName{}, a cost-aware prover that adapts its strategy and reasoning effort to the evolving proof state. Rather than applying the same proof routine to every obligation, \ToolName{} repeatedly decides what to prove next and how to prove it: whether an obligation can be discharged mechanically, needs direct proof generation, benefits from introducing a reusable helper lemma, or requires recovery from an unproductive proof path. It also controls how much reasoning effort to spend on each action. To make these decisions globally, \ToolName{} maintains a two-level proof graph that tracks both fine-grained proof progress and dependencies among lemmas.
We formalize this process as \emph{metalevel decision-making under cost}: each proving action is a computation with a cost and an uncertain, verifier-judged outcome, and the prover must decide which computation is worth performing given the current proof state~\cite{russell1991principles}. \ToolName{} realizes this formulation through an agentic router and four specialized proving agents, using the proof graph and accumulated verifier feedback to adapt its strategy as proving progresses. This turns code proving from a fixed proof-search routine into adaptive orchestration of heterogeneous proving strategies under cost.

We evaluate \ToolName{} on five Lean program verification benchmarks (function-level CLEVER~\cite{thakur2025clever}, VERINA~\cite{ye2026verina}, and AlgoVeri~\cite{zhao2026algoveri}, and repository-level NTP4VC~\cite{xu2026ntp4vc} and Vero~\cite{ye2026vero}) against state-of-the-art open-source agentic provers, and each model's official coding agent, with three frontier LLMs under matched caps.
No baseline exceeds \ToolName{}'s solve rate on any benchmark and backbone: it solves up to 100\% of function-level benchmarks and 60.5\% of the large repository-level benchmark Vero while reduces cost by up to 30.9\% against the strongest baseline, Humanize on the strongest backbone LLM, GPT-5.6-Terra.

In summary, our contributions are:
i) \textbf{Cost-efficient code proving.} We pose cost efficiency, rather than provability alone, as the objective for theorem proving in program verification at the scale of real-world software.
ii) \textbf{Metalevel formulation under cost over two-level proof graphs.} We are the first to formalize cost-efficient program proving as metalevel decision-making under cost, grounded on two-level proof graphs. And we instantiate it in \ToolName{} with agent orchestration.
iii) \textbf{Success--cost evaluation.} On five Lean benchmarks and three LLMs, no baseline exceeds \ToolName{}'s solve rate in any setting, and it reduces cost by up to 30.9\% against the strongest baseline.

%% file: figures/motivation.tex
\begin{figure}[t]
    \centering
    \begin{subfigure}[b]{0.55\linewidth}
        \centering
        \includegraphics[height=5.15cm]{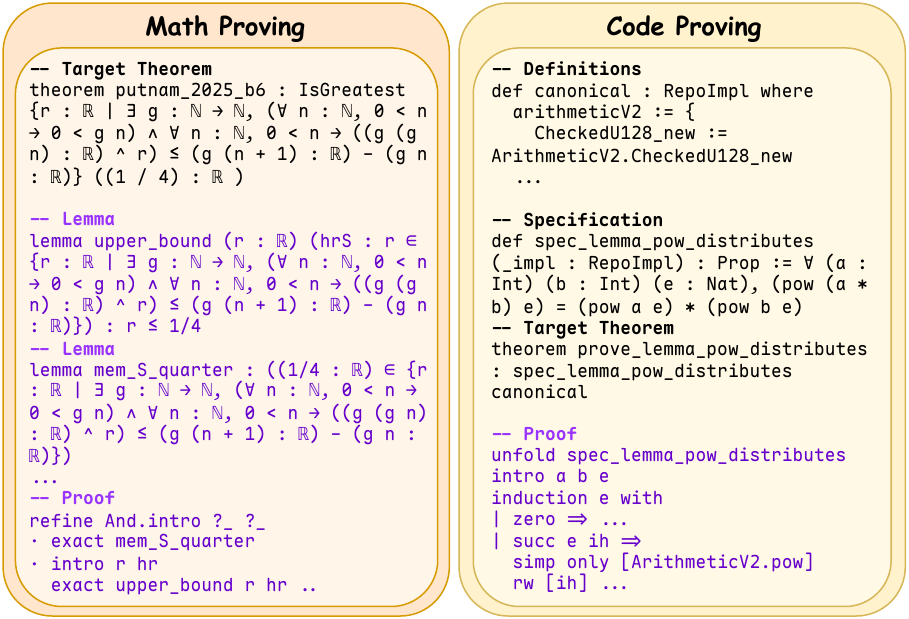}
        \caption{Math proving vs.\ code proving.}
        \label{fig:moti-proving}
    \end{subfigure}
    \hfill
    \begin{subfigure}[b]{0.44\linewidth}
        \centering
        \includegraphics[height=5.15cm]{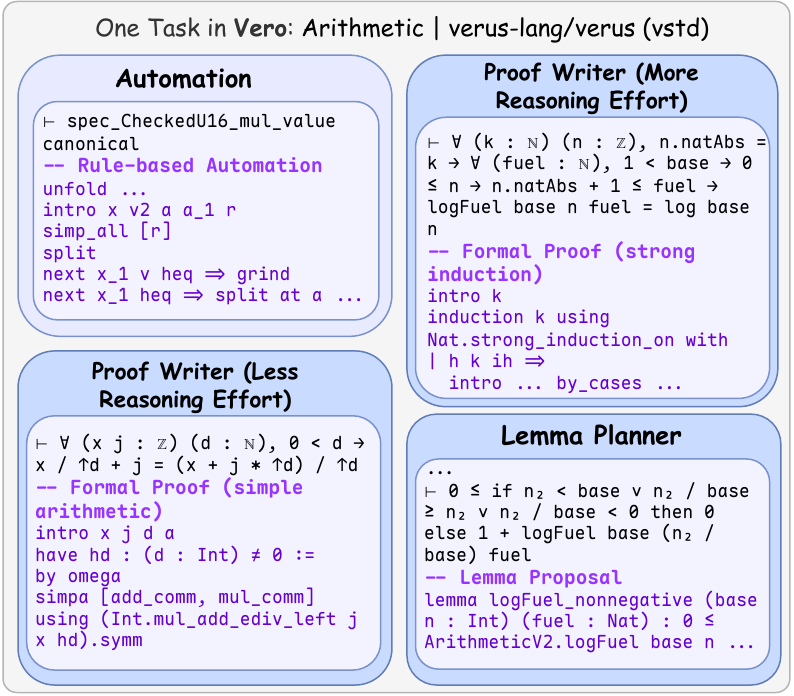}
        \caption{Goals within one code-proving task.}
        \label{fig:moti-diff}
    \end{subfigure}
    \caption{\textbf{The difference between math proving and code proving; and the difference between goals within one code proving task.}
        (\subref{fig:moti-proving}) A theorem in math proving (theorem from PutnamBench~\cite{tsoukalas2024putnambench}, proof from GoedelArchitect~\cite{chung2026goedelarchitect}) is stated and proved in the vocabulary of an existing library and strongly dependent on lemma decompostion, whereas a theorem in code proving (theorem from Vero~\cite{ye2026vero}) is stated over definitions the task itself introduces so the proof follows program syntax rather than library priors.
        (\subref{fig:moti-diff}) Obligations of a single Vero task need different strategies and reasoning efforts: one goal closes under rule-based automation with no model call, two need formal proofs at different reasoning efforts (simple arithmetic versus strong induction), and a recurring fact is better stated as a shared helper lemma.}
    \label{fig:motivation}
    \vspace{-0.9em}
\end{figure}

%% file: chapters/02_related_work.tex
\section{Related Work}
\label{sec:related-work}

\textbf{LLM-based provers.}
Fine-tuned provers generate whole Lean proofs and buy success through \emph{distributional sampling}: correct proofs are spread across many samples and extracted at pass@$k$ budgets by redrawing on unchanged context~\cite{lin2025goedelprover,wang2025kiminaproverpre,lin2026goedelproverv2}, with decomposition, where present, confined to a single generation's chain of thought~\cite{ren2025dsproverv2}; refinement agents temper this with verifier feedback yet still spend fixed per-theorem iteration budgets~\cite{thakur2024incontextlearningagentformal,axproverbase2026,zhou2025solving,ma2026oprover,liu2026numina}.
\ToolName{} instead treats every bounded invocation as best-effort given its evidence and state, and resamples only as a separate decision justified by changed evidence.
Structured provers including stepwise search over AND/OR tactic states~\cite{lample2022htps,hubert2026alphaproof,limperg2023aesop} and lemma decomposition through pools, sketches, or blueprints~\cite{chen2025seedprover,chen2025seedprover15,achim2025aristotle,sumanth2026hilbert,chung2026goedelarchitect,zhang2026planning} suffer from \emph{coarse rollback or rare reuse}: a falsified conjecture triggers wholesale sketch or blueprint regeneration or outright branch failure, the rollback unit being the whole attempt rather than exactly the invalidated dependents, and helpers are rarely shared across distinct targets.
\ToolName{}'s persistent two-level proof graph instead provide fine-grained control over the rollback unit and the helper reuse.
Code provers inherit both modes: NSPG runs one stepwise search per obligation under a flat timeout~\cite{he2026nspg}, AutoRocq maintains a per-lemma proof tree~\cite{tu2026agentic}, LemmaNet discovers helper lemmas from program semantics into a flat corpus~\cite{zhao2026lemmanet}, and Goedel-Code-Prover decomposes each verification task in isolation~\cite{li2026goedelcodeprover}, while the general coding agents merge these techniques but cost much~\cite{sosso2026agenticproving,thakur2025clever}.

\textbf{Cost-aware inference and agent orchestration.}
Query-level routers and cascades choose a model from surface features of an isolated query~\cite{chen2023frugalgpt,ong2024routellm}, and budget-aware tree search anneals exploration within a single query's reasoning tree under an LLM critic~\cite{li2026budgetaware}.
In the area of formal reasoning and theorem proving: Optimizing-Lean-Agent~\cite{kari2026optimizingcost} aims for math proving by routing over a binary action space using a logistic model over surface features of the failed attempts. This coarse continue-or-restart routing selects no goal and no proof strategy, and never conditions on decomposition structure or on other obligations. EconProver~\cite{li2026econprover} also targets math proving, but on the training side: reinforcement learning teaches a single prover to switch reflective chain-of-thought off on easy theorems and to diversify its parallel samples, so they decide neither which obligation to work on nor which kind of action to buy. Danus~\cite{liu2026danus} maintains a fact graph with transitive revocation for mathematical research agents but routes work to homogeneous general-purpose agents rather than specialists matched to goal types and makes no cost decision over that structure.

%% file: chapters/04_methodology.tex
\section{Methodology}
\label{sec:methodology}

\input{chapters/03_problem_formulation.tex}

\input{figures/agents.tex}

\subsection{System Design of \ToolName{}}
\label{sec:architecture-control-loop}

Guided by the metalevel analysis of \Cref{sec:problem-formulation}, \ToolName{} is an agent framework built around one principle: LLM agents make the semantic decisions and dispatch the decisions to the specialist agents as shown in \Cref{fig:router-agent-loop}.
\ToolName{} maintains the two-level proof graph together with a shared memory of artifact status, failure records, and the realized resource ledger.
A \emph{scheduler} selects the frontier goals on AND nodes symbolically according to discussion in \Cref{sec:problem-formulation}.
An \emph{agentic router} reasons about the selected goals and dispatches them to one of four specialist action agents. Each action agent writes its verified transitions, diagnostics, and realized cost back to the graph and shared memory.

\textbf{Evidence-grounded agentic router.}\label{sec:agent-routing}
The scheduler hands the router the whole dependency-ready frontier of open AND nodes. The router first groups it using $k$-means clustering as described in \Cref{app:goal-clustering}, so that semantically related goals are routed together.
A static-analysis check then runs before any model call: a goal that is \emph{definition-nested} is sent directly to Deterministic Automation.
At each decision point it receives an \emph{evidence packet}, instantiating $\mathrm{ev}(s,U)$: the selected Lean goals with their tactic states and relevant definitions, snapshots of earlier proof states, the dependent provisional lemmas, and the execution history with Lean diagnostics and counterexamples from the shared memory, together with its current routing rules.
The router approximates the metalevel ordering of \Cref{eq:metalevel-bellman} in the small-$\lambda$ regime of \Cref{prop:small-lambda}.
From this packet it returns $(G,o,\theta,j)$: the group $G$ of the goals it decides to spend on next, drawn from its own clustering of the frontier; a specialist option $o$; a configuration $\theta$ which sets the invocation's reasoning effort and working budget (\eg the maximum number of refinement iterations); and the preliminary analysis $j$ it produces, so that the action agents can continue the reasoning from it.
The routing rules are initialized with one priority, \emph{same provability, cheap-first}: among the actions equally likely to leave every remaining root provable, take the one with the lowest total expected cost of finishing them all. Both terms are task-level rather than goal-level, which means the comparison is predicted over the whole remaining task. This explains why proposing a helper lemma can be the cheapest choice when several goals will use it, even though proving it once costs more than one direct proof.
The initial rules are deliberately simple: a short natural-language description of each specialist including what kind of goal it suits together with the priority and a few worked routing examples.
The rules then \emph{evolve}: after observing an evidence packet and the outcome of its previous decision, the router may amend them. For instance, if the Proof Writer repeatedly fails on a kind of goal, the rules may evolve to increase reasoning effort or propose a shared helper lemma to decompose the goal.

\textbf{Specialist agents.}\label{sec:specialist-agents}
The four specialists exist to handle different kinds of proof goals.
\emph{\textbf{Deterministic Automation:}} this action doesn't involve any model call. It extends Aesop~\cite{limperg2023aesop}, a white-box rule-based proof search methods, which automatically tries different tactics in Lean to solve the current goal.
\emph{\textbf{Proof Writer:}} it's an agent interacting with Lean checker. It first continue on the reasoning results from the router and then write formal Lean code under a iterative refinement budget from the router. When the budget is exhausted, it keeps the partial successful proof and update the proof graphs.
\emph{\textbf{Lemma Planner:}} it proposes typed helper lemmas with explicit root-relevant consumers. A candidate is added to the dependency graph only after it elaborates and preserves acyclicity, it is checked by both random testing and LLM-based structural checking. Its proof obligations enter the frontier as new open goals, and control returns to the router with analysis tags.
\emph{\textbf{Recovery Reviewer:}} it diagnoses why an attempt failed, distinguishing two cases. First, the goal may be \emph{unprovable}: a wrong proof step or an incorrect helper lemma can leave a goal that is false, so the reviewer constructs concrete instances and evaluates them to look for a counterexample to disprove the goal. Second, the goal may be provable but on a \emph{hard path}: the current route has grown expensive while a cheaper one likely exists, so the reviewer compares continuing with switching on the work each still requires: the expected remaining cost, revalidation included, and whether either route looks less likely to close at all. The judgment is the model's rather than a certified optimality test, since a cheaper route can also be a less provable one. When it decides to recover, it backtracks to the best point in the proof graphs (\ie the nearest state whose verified work does not depend on the failure) so only the invalidated work is discarded.

%% file: chapters/03_problem_formulation.tex
\subsection{Problem Formulation}
\label{sec:problem-formulation}

We formulate cost-efficient program proving as \emph{metalevel decision-making under cost}.
At each step, the prover must determine (1) which goals to work on next, and (2) what proving strategy and configuration to use.
The first is structural and the latter one requires semantic reasoning and incur model cost.
Importantly, cost is considered globally at the task level: an expensive action on one goal may reduce total cost, for example by producing a helper lemma reused by several others.
We measure cost as monetary expenditure rather than token count, since different token types are priced differently.

\input{figures/proof_graphs.tex}

\textbf{Proof state and structural control.}
Given a fixed Lean environment $\Gamma_0$ and target theorems
$L_\star=\{\ell_1,\ldots,\ell_n\}$, the prover maintains the two-level proof graph in
\Cref{fig:coupled-proof-graph}.
Within each declaration, an AND/OR hypergraph records verified proof steps and their resulting subgoals; across declarations, a dependency graph connects target and helper lemmas.
Together, they preserve partial progress, expose reusable results, and track which work depends on provisional lemmas.

At decision epoch $t$,
\begin{equation}
  \begin{aligned}
    s_t & = (\mathfrak G_t,\mathcal F_t,\sigma_t,\mathcal M_t,\mathcal L_t), \\
    U_t & = \mathrm{Sched}(s_t),
  \end{aligned}
  \label{eq:controller-state-action}
\end{equation}
where $\mathfrak G_t=(\mathcal D_t,\{\mathcal H_{\ell,t}\}_{\ell\in L_t})$
is the two-level graph, $\sigma_t$ records artifact status,
$\mathcal M_t$ is shared memory, $\mathcal L_t$ is the computation ledger,
and $\mathcal F_t$ is the \emph{dependency-ready frontier}: open goals whose active prerequisites have been constructed.
A lemma becomes stable once all of its prerequisites are stable; goals depending on provisional lemmas may still proceed, with those dependencies tracked for rollback (\Cref{app:sharing-rollback}).
An incremental topological pass in the style of Kahn's algorithm~\cite{kahn1962topological}
maintains $\mathcal F_t$ as lemmas are added, stabilized, or retracted.
The scheduler exposes $U_t=\mathcal F_t$, restricting the action space without spending model computation.

\textbf{Action selection as purchased computation.}
Given $U_t$, the router decides \emph{where and how to spend computation}: it selects a group of goals, a proving strategy, and its reasoning budget.
Following metareasoning principles~\cite{russell1991principles}, we represent this as
\begin{equation}
  c_t=(G_t,o_t,\theta_t)\in\mathcal C(s_t,U_t),
  \qquad
  Y_t\sim P_{c_t}(\cdot\mid s_t,U_t),
  \qquad
  s_{t+1}=T(s_t,U_t,c_t,Y_t),
  \label{eq:purchased-computation}
\end{equation}
where $G_t\subseteq U_t$ is the selected goal group, $o_t$ is a specialist agent,
and $\theta_t$ specifies its configuration, such as reasoning effort or refinement budget.
The outcome $Y_t$ incurs cost
$\kappa_t=\kappa(s_t,U_t,c_t,Y_t)$.
The router's own inference cost is accounted for separately.

\textbf{Metalevel objective.}
The cheapest immediate action is not necessarily the most cost-efficient: a more expensive computation may increase the chance of completing the task or reduce future work.
For $\lambda\geq0$, we therefore score a run by
$\mathbf 1[\mathrm{solve}]-\lambda C$, where $C$ is total realized cost.
The optimal value satisfies
\begin{equation}
  Q_\lambda^*(s,U,c)=\mathbb E_Y\bigl[-\lambda\kappa+V_\lambda^*(s')\bigr],
  \quad
  V_\lambda^*(s)=\max_{c\in\mathcal C(s,U)}Q_\lambda^*(s,U,c),
  \label{eq:metalevel-bellman}
\end{equation}
writing $\kappa=\kappa(s,U,c,Y)$ and $s'=T(s,U,c,Y)$, with $U=\mathrm{Sched}(s)$, $V_\lambda^*=1$ on solved states, and a free stop action of value $0$.
This objective values a computation by both its immediate cost and its effect on the remaining verification task.
$Q_\lambda^*$ is normative rather than computed explicitly.
The router uses the current proof state and accumulated evidence to approximate its induced preference over computations.
We operate at a small positive $\lambda$, so successful verification dominates while cost distinguishes actions with comparable prospects of success.
Finally, the regret of a routing decision is
\begin{equation}
  \delta_\lambda(s,U,c)
  =V_\lambda^*(s)-Q_\lambda^*(s,U,c)\geq0.
  \label{eq:metalevel-regret}
\end{equation}
Adaptive routing is beneficial when the decision regret it avoids exceeds its own inference overhead; we formalize this result in \Cref{app:additional-theory}.

%% file: figures/proof_graphs.tex
\begin{wrapfigure}{r}{0.62\textwidth}
    \centering
    \vspace{-1.2em}
    \includegraphics[width=0.60\textwidth]{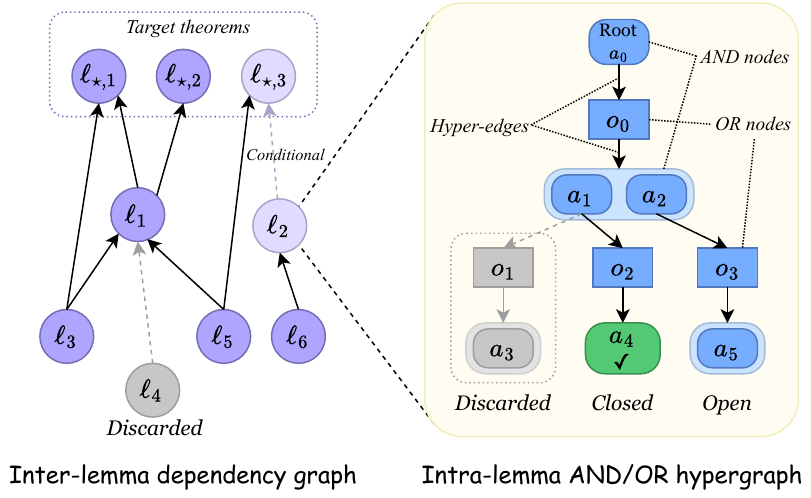}
    \caption{\textbf{The two-level proof graph.}
        \emph{Left}: the lemma-dependency DAG $\mathcal D=(L,E_{\mathcal D})$, with an edge $u\rightarrow v$ when the proof or plan of $v$ uses helper lemma $u$, the targets $L_\star\subseteq L$ as roots, and each lemma stable, provisional, or discarded.
        \emph{Right}: within one declaration, the AND/OR hypergraph $\mathcal H_\ell=(A_\ell,O_\ell,E_\ell)$ over goal states (AND nodes $A_\ell$) and proof steps (OR nodes $O_\ell$), where an executed step attaches a verified hyperedge in $E_\ell$ over the subgoals Lean returns.}
    \label{fig:coupled-proof-graph}
    \vspace{-1.0em}
\end{wrapfigure}

%% file: figures/agents.tex
\begin{figure}[t]
  \centering
  \includegraphics[width=0.98\linewidth]{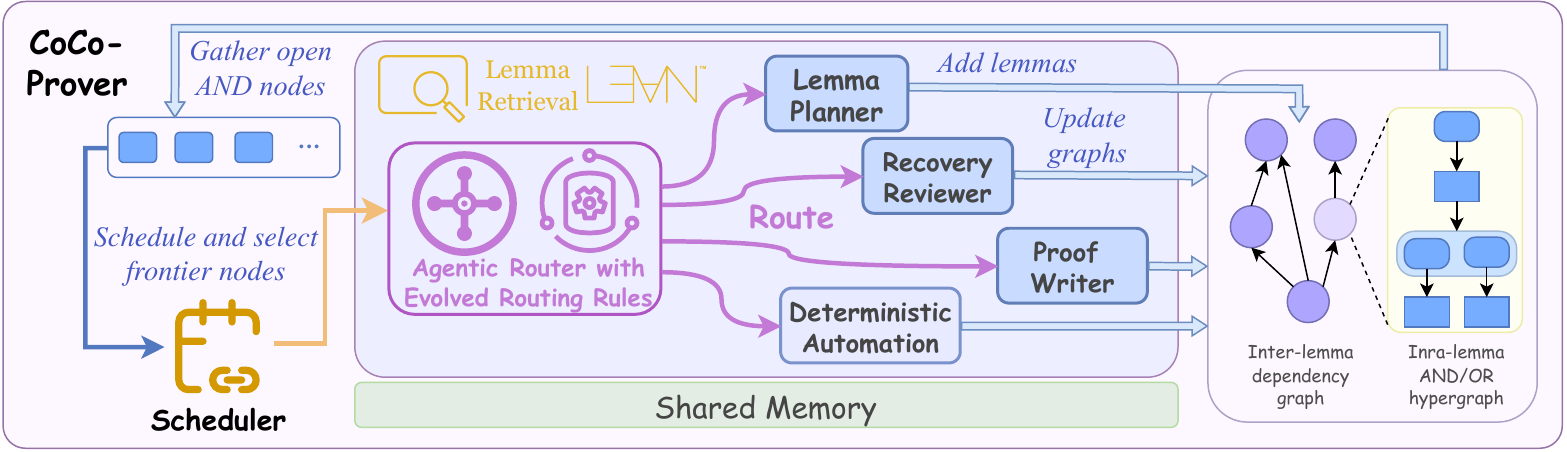}
  \caption{\textbf{Thw workflow of \ToolName{}.}
    \emph{Left}: the scheduler gathers open AND nodes from the proof graphs and hands the router every dependency-ready goal, grouping none of them.
    \emph{Middle}: the agentic router clusters the frontier it receives, then reasons about one group and routes it to one of four specialist agents together with the configuration. The routing rules are revised by the router itself as evidence packets accumulate. The agents alone interact with Lean checker, lemma retrieval, and shared memory.
    \emph{Right}: every agent writes back to the two-level graphs.}
  \label{fig:router-agent-loop}
\end{figure}

%% file: chapters/11_evaluation.tex
\section{Evaluation}
\label{sec:evaluation}

In this section, we first evaluate \ToolName{} on the performance of solve rate, cost and success-vs-cost frontier under a budget sweep. And then we conduct ablation studies to show the contribution of different components. \Cref{app:additional-results} details the experiment setup, and gives further results. Besides, we select some cases to showcase the workflow and benefits of \ToolName{} in \Cref{app:case-study}.

\textbf{Benchmarks.}
We evaluate on five Lean~4 program verification benchmarks.
Three are function-level: CLEVER~\cite{thakur2025clever}, VERINA~\cite{ye2026verina}, and AlgoVeri~\cite{zhao2026algoveri}, which provide function-level verifiable code generation problems.
Two are repository-level, where obligations are derived from upstream code repositories. NTP4VC~\cite{xu2026ntp4vc} focuses on verification conditions, generated from C and Why3 programs, and released in Lean~4, Rocq, and Isabelle, where we use the Lean 4 track. Vero~\cite{ye2026vero} is a repository-level verifiable code generation benchmark, where we use proof-only tasks.
The target theorems are constructed as follows.
VERINA and Vero ship both ground-truth specifications and ground-truth implementations, which we use the formed theorem tasks directly.
CLEVER and AlgoVeri provide ground-truth specifications but no ground-truth implementations, so we generate each implementation with GPT-5.6-Sol at high reasoning effort and manually check its correctness against the specification before any prover runs; the checked implementations are then fixed and shared by all methods.
NTP4VC's theorems are already well-formed verification conditions but it's a large benchmark so we only use the first 300 tasks in alphabetical order.
In every case the theorem statements are fixed inputs and code or specification synthesis is never a part of the measured task.

\textbf{Baselines.}
We compare against three baseline families under identical backbones, providers, and caps. We use three frontier models: Gemini-3.5-Flash, GPT-5.6-Terra, and DeepSeek-V4-Flash.
\emph{Ax-Prover-Base}~\cite{axproverbase2026} is a minimal iterative agent---propose a Lean proof, compile, review, and retry with a memory of recent failures and library search---representative of refinement agents and reported to be strongly cost-effective relative to repeated sampling.
\emph{Humanize}~\cite{humanize2026} is an open-source state-of-the-art agentic Lean~4 prover based on coding agents like Codex~\cite{codex2026} and Claude Code~\cite{cc2026}, which achieves 100\% solve rate on PutnamBench~\cite{tsoukalas2024putnambench}.
\emph{Coding agents} pair each backbone with its official general-purpose agent interface, equipped with tooling (\emph{i.e.,} Lean toolchains, retrieval)---Codex for GPT model, Gemini CLI~\cite{geminicli2026} for Gemini model, and DeepSeek Harness~\cite{deepseekharness2026} for DeepSeek model.
Ax-Prover-Base and Humanize prove one self-contained theorem per episode and do not operate over repository-scale dependency structure, so the repository-level comparison includes only the coding agents (\Cref{tab:provability-performance}).
We do not compare against other open-source provers such as Hilbert~\cite{sumanth2026hilbert}, Goedel-Code-Prover~\cite{li2026goedelcodeprover}, WybeCoder~\cite{gloeckle2026wybecoder}, and $\mathrm{P}^3$~\cite{li2026p3} because each is designed for a different setting, and \Cref{app:additional-experiment-setup} gives the reason for each.

\input{tables/solve_rate.tex}

\subsection{Performance}
\label{sec:eval-performance}

\textbf{Solve rates.}
As shown in \Cref{tab:provability-performance}, no baseline exceeds \ToolName{}'s solve rate in any benchmark--LLM pair.
On function-level benchmarks with a frontier
LLM there is little headroom: Humanize, the strongest baseline, already proves almost every CLEVER and VERINA problem, and \ToolName{} closes the remainder to 100\% on CLEVER and VERINA; on AlgoVeri with Gemini-3.5-Flash the two tie. This also reflects \ToolName{}'s advantage on different LLMs and its scalability.
The second is repository scale: on Vero the coding agents solve at most 7.0\% of tasks, against up to 60.5\% for \ToolName{}.
Vero's obligations are spread across a repository and share definitions and helper lemmas, so a prover that treats all obligations in a single context will be overwhelmed, whereas \ToolName{} uses proof graphs and agent orchestration to handle them effectively.

\textbf{Cost at matched success.}
We compare cost against one baseline at a time, and only on the problems that this baseline and \ToolName{} both solve.
Costs on unsolved problems are not comparable: a method that gives up early, or that reward-hacks a hard problem such as closing it with an axiom, weakening or rewriting the theorem statement (\ie the behavior we observe most often in the coding agents) spends little precisely where an honest proof is expensive, so counting those runs would reward failure.
On the commonly solved problems, \ToolName{} is cheaper than every baseline in all nine function-level benchmark--LLM pairs (\Cref{tab:cost-performance}).
The cost saving is largest against Ax-Prover-Base (up to 85.7\%), which re-samples a whole proof after each failure, and against every baseline under DeepSeek-V4-Flash, where failures are most frequent.
It is smallest against Humanize (from 1.3\% to 30.9\% under GPT-5.6-Terra and Gemini-3.5-Flash), the strongest baseline, since for most easy problems, Humanize and the basic coding agents can simply propose and refine proofs so that our gain is small.
The one case where a baseline is cheaper is Vero with GPT-5.6-Terra, where Codex solves a single task that \ToolName{} also solves. This is because this is only one task problem which only has a few easy obligations so it doesn't weaken \ToolName{}'s advantage.
The two settings therefore differ in what \ToolName{} buys.
On isolated function-level obligations, it principally reduces proof cost.
On repository-level verification, persistent proof structure can additionally change what is practically provable: on Vero, \ToolName{} solves up to 26 of 43 tasks while the coding agents solve at most three, which is also why too few commonly solved tasks remain there for a meaningful cost comparison.

\input{tables/cost.tex}

\textbf{Success--cost frontier under a budget sweep.}\label{sec:budget-sweep}
\Cref{fig:rate-at-budget} asks a complementary question to \Cref{tab:cost-performance}: sweeping the per-problem cap from \$0 to the \$30 of the main evaluation, how many problems does each method solve?
For each LLM we sum the problems solved over the three function-level benchmarks (CLEVER, VERINA, and AlgoVeri) at every budget, so each curve is one method's success--cost frontier.
As shown in \Cref{fig:rate-at-budget}, \ToolName{} achieves the highest final solve count under all three backbones and leads over much of the evaluated budget range.
Moreover, the baselines flatten early.
Under GPT-5.6-Terra, Codex and Ax-Prover-Base gain relatively few additional solves beyond roughly \$5 per problem, while \ToolName{} reaches 419. This is because the extra money buys more of the same discarded work.

\begin{figure}[t]
    \input{figures/rate_at_budget.tex}
    \vspace{1.2em}

    \input{tables/ablation.tex}
\end{figure}

\subsection{Ablation Study}
\label{sec:eval-ablation}

In this section, we conduct some ablation studies to understand the contribution of each component in \ToolName{}. Since rerunning all five benchmarks for every variant is too expensive, we use a fixed subset: 10 problems (randomly chosen, which span difficulty levels from easy to hard) from each of CLEVER, VERINA, and AlgoVeri, 30 in total.

We first conduct a run-to-run variance analysis to understand the stability of \ToolName{} since model outputs are stochastic. We firstly run the complete \ToolName{} three times with Gemini-3.5-Flash on 3 function-level benchmarks and then we run the complete \ToolName{} three times with each backbone LLM on these 30 problems and details are shown in \Cref{app:run-variance}.

\textbf{Component ablation.}
To attribute the gains to individual components, we disable one component at a time and rerun \ToolName{} on the 30 problems with Gemini-3.5-Flash.
\Cref{tab:ablation-cost} compares each variant against the complete system, whose cost is the mean of its three Gemini-3.5-Flash runs in \Cref{tab:stability}.
\emph{\ToolName{} w/o Router} removes adaptive action choice: Deterministic Automation still runs first on every scheduled goal, and the action agents then follow one fixed order over the same two-level graph.
Total cost rises by 39.9\% and one VERINA problem is no longer solved. This is because the fixed routine purchases more unnecessary action invocations.
\emph{\ToolName{} w/o Lemma Planner} removes shared-helper discovery, so every goal must be proved in isolation.
It is the costliest variant, at 41.6\% above the complete system, and it solves the fewest problems (27 of 30), losing two on VERINA and one on AlgoVeri.
This mirrors the failure the case study isolates in the baselines (\Cref{app:case-study}): without lemma proposal, a missing fact cannot become a separately provable subgoal, so it is re-derived inside monolithic attempts instead of proved once and reused.
\emph{\ToolName{} w/o Automation} removes the automation pass.
Its effect is smaller: cost rises by 11.5\% on CLEVER and 12.4\% on VERINA, where many goals close mechanically, but falls by 5.8\% on AlgoVeri, whose algorithmic goals rarely close by automation and frequent automation will split amonolithic goal into multiple meaningless subgoals, which increases cost.
Overall, the component ablation study supports the theoretical analysis and system design of \ToolName{}.

%% file: tables/solve_rate.tex
\begin{table}[t]
    \centering
    \begin{minipage}[c]{0.36\linewidth}
        \captionsetup{justification=raggedright,singlelinecheck=false}
        \caption{\textbf{Solve rates across function- and repository-level benchmarks.}
            A task counts as solved only when Lean accepts the root declaration and no generated axioms are used.}
        \label{tab:provability-performance}
    \end{minipage}\hfill
    \begin{minipage}[c]{0.61\linewidth}
        \centering
        \resizebox{\linewidth}{!}{\input{tables/provability_repo.tex}}
    \end{minipage}
    \vspace{0.75em}

    \input{tables/provability_function.tex}
\end{table}

%% file: tables/provability_repo.tex
\begingroup
\scriptsize
\setlength{\tabcolsep}{3pt}
\renewcommand{\arraystretch}{1.15}
\begin{tabular}{lcccccc}
    \toprule
    \multicolumn{7}{c}{\textbf{Solve-Rate Comparison Across 2 Repository-Level Benchmarks}} \\
    \midrule
                 & \multicolumn{2}{c}{\textbf{GPT-5.6-Terra}}
                 & \multicolumn{2}{c}{\textbf{Gemini-3.5-Flash}}
                 & \multicolumn{2}{c}{\textbf{DeepSeek-V4-Flash}}                                        \\
    \cmidrule(lr){2-3}\cmidrule(lr){4-5}\cmidrule(lr){6-7}
    \textbf{Method}
                 & \textbf{NTP4VC}                                & \textbf{Vero}
                 & \textbf{NTP4VC}                                & \textbf{Vero}
                 & \textbf{NTP4VC}                                & \textbf{Vero}                        \\
    \midrule
    Coding Agent & 85.3\% & 2.3\% & 79.7\% & 7.0\% & 39.0\% & 0\% \\
    \ToolName{}  & \textbf{89.3\%} & \textbf{53.5\%} & \textbf{94.7\%} & \textbf{60.5\%} & \textbf{95.0\%} & \textbf{14.0\%} \\
    \bottomrule
\end{tabular}
\endgroup

%% file: tables/provability_function.tex
\begingroup
    \scriptsize
    \setlength{\tabcolsep}{2pt}
    \renewcommand{\arraystretch}{1.15}
    \begin{tabular*}{\linewidth}{@{\extracolsep{\fill}}lccccccccc@{}}
            \toprule
            \multicolumn{10}{c}{\textbf{Solve-Rate Comparison Across 3 Function-Level Benchmarks}} \\
            \midrule
            & \multicolumn{3}{c}{\textbf{GPT-5.6-Terra}}
            & \multicolumn{3}{c}{\textbf{Gemini-3.5-Flash}}
            & \multicolumn{3}{c}{\textbf{DeepSeek-V4-Flash}} \\
            \cmidrule(lr){2-4}\cmidrule(lr){5-7}\cmidrule(lr){8-10}
            \textbf{Method}
            & \textbf{CLEVER} & \textbf{VERINA} & \textbf{AlgoVeri}
            & \textbf{CLEVER} & \textbf{VERINA} & \textbf{AlgoVeri}
            & \textbf{CLEVER} & \textbf{VERINA} & \textbf{AlgoVeri} \\
            \midrule
            Ax-Prover-Base & 89.4\% & 85.7\% & 81.8\% & 85.7\% & 72.5\% & 76.6\% & 73.9\% & 66.1\% & 58.4\% \\
            Humanize       & 97.5\% & 98.9\% & 85.7\% & 97.5\% & 95.2\% & 80.5\% & 91.9\% & 73.5\% & 66.2\% \\
            Coding Agent   & 91.9\% & 87.3\% & 67.5\% & 94.4\% & 87.8\% & 71.4\% & 78.9\% & 58.2\% & 79.2\% \\
            \ToolName{}    & \textbf{100\%} & \textbf{100\%} & \textbf{89.6\%} & \textbf{100\%} & \textbf{98.4\%} & \textbf{80.5\%} & \textbf{96.9\%} & \textbf{100\%} & \textbf{93.5\%} \\
            \bottomrule
    \end{tabular*}
\endgroup

%% file: tables/cost.tex
\begin{table}[t]
    \centering
    \caption{\textbf{Cost comparison on commonly solved problems.}
        For each baseline, we compare expenditure only on the intersection of problems solved successfully by both \ToolName{} and that baseline, yielding a like-for-like comparison of the cost of valid proofs.}
    \label{tab:cost-performance}
    \input{tables/expe_clever.tex}
    \vspace{0.5em}

    \input{tables/expe_verina.tex}
    \vspace{0.5em}

    \input{tables/expe_algoveri.tex}
    \vspace{0.5em}

    \input{tables/expe_ntp4vc.tex}
    \vspace{0.5em}

    \input{tables/expe_vero.tex}
\end{table}

%% file: tables/expe_clever.tex
\begingroup
    \small
    \setlength{\tabcolsep}{3pt}
    \renewcommand{\arraystretch}{1.15}
    \resizebox{\linewidth}{!}{        \begin{tabular}{lccccccccc}
            \toprule
            \multicolumn{10}{c}{\textbf{Solve-Cost on CLEVER Benchmark (Total 161 Problems)}} \\
            \midrule
            & \multicolumn{3}{c}{\textbf{GPT-5.6-Terra}}
            & \multicolumn{3}{c}{\textbf{Gemini-3.5-Flash}}
            & \multicolumn{3}{c}{\textbf{DeepSeek-V4-Flash}} \\
            \cmidrule(lr){2-4}\cmidrule(lr){5-7}\cmidrule(lr){8-10}
            \textbf{Method}
            & \shortstack{\textbf{\# Commonly}\\\textbf{Solved}}
            & \shortstack{\textbf{Cost in USD}\\\textbf{(Baseline)}}
            & \shortstack{\textbf{Cost in USD}\\\textbf{(\ToolName{})}}
            & \shortstack{\textbf{\# Commonly}\\\textbf{Solved}}
            & \shortstack{\textbf{Cost in USD}\\\textbf{(Baseline)}}
            & \shortstack{\textbf{Cost in USD}\\\textbf{(\ToolName{})}}
            & \shortstack{\textbf{\# Commonly}\\\textbf{Solved}}
            & \shortstack{\textbf{Cost in USD}\\\textbf{(Baseline)}}
            & \shortstack{\textbf{Cost in USD}\\\textbf{(\ToolName{})}} \\
            \midrule
            Ax-Prover-Base & 144 & 69.9  & \CostReduction{63.0}{9.8}  & 138 & 364.6 & \CostReduction{140.3}{61.5} & 115 & 53.7 & \CostReduction{14.3}{73.4} \\
            Humanize       & 157 & 136.6 & \CostReduction{94.4}{30.9} & 157 & 271.9 & \CostReduction{199.6}{26.6} & 143 & 64.9 & \CostReduction{29.0}{55.3} \\
            Coding Agent   & 148 & 90.4  & \CostReduction{72.5}{19.8} & 152 & 258.7 & \CostReduction{176.4}{31.8} & 124 & 56.9 & \CostReduction{24.1}{57.7} \\
            \ToolName{}    & 161 & -- & 100.2 & 161 & -- & 235.6 & 156 & -- & 38.7 \\
            \bottomrule
        \end{tabular}    }
\endgroup

%% file: tables/expe_verina.tex
\begingroup
    \small
    \setlength{\tabcolsep}{3pt}
    \renewcommand{\arraystretch}{1.15}
    \resizebox{\linewidth}{!}{        \begin{tabular}{lccccccccc}
            \toprule
            \multicolumn{10}{c}{\textbf{Solve-Cost on VERINA Benchmark (Total 189 Problems)}} \\
            \midrule
            & \multicolumn{3}{c}{\textbf{GPT-5.6-Terra}}
            & \multicolumn{3}{c}{\textbf{Gemini-3.5-Flash}}
            & \multicolumn{3}{c}{\textbf{DeepSeek-V4-Flash}} \\
            \cmidrule(lr){2-4}\cmidrule(lr){5-7}\cmidrule(lr){8-10}
            \textbf{Method}
            & \shortstack{\textbf{\# Commonly}\\\textbf{Solved}}
            & \shortstack{\textbf{Cost in USD}\\\textbf{(Baseline)}}
            & \shortstack{\textbf{Cost in USD}\\\textbf{(\ToolName{})}}
            & \shortstack{\textbf{\# Commonly}\\\textbf{Solved}}
            & \shortstack{\textbf{Cost in USD}\\\textbf{(Baseline)}}
            & \shortstack{\textbf{Cost in USD}\\\textbf{(\ToolName{})}}
            & \shortstack{\textbf{\# Commonly}\\\textbf{Solved}}
            & \shortstack{\textbf{Cost in USD}\\\textbf{(Baseline)}}
            & \shortstack{\textbf{Cost in USD}\\\textbf{(\ToolName{})}} \\
            \midrule
            Ax-Prover-Base & 162 & 143.1 & \CostReduction{106.0}{25.9} & 137 & 376.5 & \CostReduction{175.5}{53.4} & 125 & 117.2 & \CostReduction{16.7}{85.7} \\
            Humanize       & 187 & 299.5 & \CostReduction{244.5}{18.4} & 180 & 498.9 & \CostReduction{403.4}{19.1} & 139 & 108.2 & \CostReduction{25.4}{76.5} \\
            Coding Agent   & 165 & 127.4 & \CostReduction{111.8}{12.2} & 166 & 407.4 & \CostReduction{237.9}{41.6} & 110 & 50.4  & \CostReduction{15.2}{69.8} \\
            \ToolName{}    & 189 & -- & 245.4 & 186 & -- & 537.4 & 189 & -- & 80.3 \\
            \bottomrule
        \end{tabular}    }
\endgroup

%% file: tables/expe_algoveri.tex
\begingroup
    \small
    \setlength{\tabcolsep}{3pt}
    \renewcommand{\arraystretch}{1.15}
    \resizebox{\linewidth}{!}{        \begin{tabular}{lccccccccc}
            \toprule
            \multicolumn{10}{c}{\textbf{Solve-Cost on AlgoVeri Benchmark (Total 77 Problems)}} \\
            \midrule
            & \multicolumn{3}{c}{\textbf{GPT-5.6-Terra}}
            & \multicolumn{3}{c}{\textbf{Gemini-3.5-Flash}}
            & \multicolumn{3}{c}{\textbf{DeepSeek-V4-Flash}} \\
            \cmidrule(lr){2-4}\cmidrule(lr){5-7}\cmidrule(lr){8-10}
            \textbf{Method}
            & \shortstack{\textbf{\# Commonly}\\\textbf{Solved}}
            & \shortstack{\textbf{Cost in USD}\\\textbf{(Baseline)}}
            & \shortstack{\textbf{Cost in USD}\\\textbf{(\ToolName{})}}
            & \shortstack{\textbf{\# Commonly}\\\textbf{Solved}}
            & \shortstack{\textbf{Cost in USD}\\\textbf{(Baseline)}}
            & \shortstack{\textbf{Cost in USD}\\\textbf{(\ToolName{})}}
            & \shortstack{\textbf{\# Commonly}\\\textbf{Solved}}
            & \shortstack{\textbf{Cost in USD}\\\textbf{(Baseline)}}
            & \shortstack{\textbf{Cost in USD}\\\textbf{(\ToolName{})}} \\
            \midrule
            Ax-Prover-Base & 63 & 113.8 & \CostReduction{55.5}{51.2} & 55 & 189.9 & \CostReduction{90.0}{52.6}  & 45 & 17.4 & \CostReduction{10.7}{38.4} \\
            Humanize       & 66 & 67.4  & \CostReduction{66.6}{1.3}  & 58 & 157.3 & \CostReduction{112.1}{28.7} & 51 & 29.9 & \CostReduction{15.2}{49.1} \\
            Coding Agent   & 52 & 35.3  & \CostReduction{32.0}{9.5}  & 51 & 94.4  & \CostReduction{79.8}{15.5}  & 61 & 58.1 & \CostReduction{29.8}{48.6} \\
            \ToolName{}    & 69 & -- & 74.6 & 62 & -- & 167.7 & 72 & -- & 48.2 \\
            \bottomrule
        \end{tabular}    }
\endgroup

%% file: tables/expe_ntp4vc.tex
\begingroup
\small
\setlength{\tabcolsep}{3pt}
\renewcommand{\arraystretch}{1.15}
\resizebox{\linewidth}{!}{    \begin{tabular}{lccccccccc}
        \toprule
        \multicolumn{10}{c}{\textbf{Solve-Cost on NTP4VC Benchmark (Total 300 Problems)}}                                \\
        \midrule
                     & \multicolumn{3}{c}{\textbf{GPT-5.6-Terra}}
                     & \multicolumn{3}{c}{\textbf{Gemini-3.5-Flash}}
                     & \multicolumn{3}{c}{\textbf{DeepSeek-V4-Flash}}                                                    \\
        \cmidrule(lr){2-4}\cmidrule(lr){5-7}\cmidrule(lr){8-10}
        \textbf{Method}
                     & \shortstack{\textbf{\# Commonly}\\\textbf{Solved}}
                     & \shortstack{\textbf{Cost in USD}\\\textbf{(Baseline)}}
                     & \shortstack{\textbf{Cost in USD}\\\textbf{(\ToolName{})}}
                     & \shortstack{\textbf{\# Commonly}\\\textbf{Solved}}
                     & \shortstack{\textbf{Cost in USD}\\\textbf{(Baseline)}}
                     & \shortstack{\textbf{Cost in USD}\\\textbf{(\ToolName{})}}
                     & \shortstack{\textbf{\# Commonly}\\\textbf{Solved}}
                     & \shortstack{\textbf{Cost in USD}\\\textbf{(Baseline)}}
                     & \shortstack{\textbf{Cost in USD}\\\textbf{(\ToolName{})}}                                         \\
        \midrule
        Coding Agent & 247 & 186.5 & \CostReduction{162.2}{13.0} & 239 & 608.2 & \CostReduction{343.9}{43.5} & 117 & 30.7 & \CostReduction{18.6}{39.6} \\
        \ToolName{}  & 268 & -- & 241.3 & 284 & -- & 619.9 & 285 & -- & 116.1 \\
        \bottomrule
    \end{tabular}}
\endgroup

%% file: tables/expe_vero.tex
\begingroup
    \small
    \setlength{\tabcolsep}{3pt}
    \renewcommand{\arraystretch}{1.15}
    \resizebox{\linewidth}{!}{        \begin{tabular}{lccccccccc}
            \toprule
            \multicolumn{10}{c}{\textbf{Solve-Cost on Vero Benchmark (Total 43 Problems)}} \\
            \midrule
            & \multicolumn{3}{c}{\textbf{GPT-5.6-Terra}}
            & \multicolumn{3}{c}{\textbf{Gemini-3.5-Flash}}
            & \multicolumn{3}{c}{\textbf{DeepSeek-V4-Flash}} \\
            \cmidrule(lr){2-4}\cmidrule(lr){5-7}\cmidrule(lr){8-10}
            \textbf{Method}
            & \shortstack{\textbf{\# Commonly}\\\textbf{Solved}}
            & \shortstack{\textbf{Cost in USD}\\\textbf{(Baseline)}}
            & \shortstack{\textbf{Cost in USD}\\\textbf{(\ToolName{})}}
            & \shortstack{\textbf{\# Commonly}\\\textbf{Solved}}
            & \shortstack{\textbf{Cost in USD}\\\textbf{(Baseline)}}
            & \shortstack{\textbf{Cost in USD}\\\textbf{(\ToolName{})}}
            & \shortstack{\textbf{\# Commonly}\\\textbf{Solved}}
            & \shortstack{\textbf{Cost in USD}\\\textbf{(Baseline)}}
            & \shortstack{\textbf{Cost in USD}\\\textbf{(\ToolName{})}} \\
            \midrule
            Coding Agent   & 1  & 1.6 & \CostIncrease{3.6}{131.2} & 3  & 55.0 & \CostReduction{24.6}{55.2} & 0 & 0.0 & 0.0 \\
            \ToolName{}    & 23 & --  & 515.7 & 26 & -- & 859.5 & 6 & -- & 28.4 \\
            \bottomrule
        \end{tabular}    }
\endgroup

%% file: figures/rate_at_budget.tex
    \centering
    \captionsetup[subfigure]{justification=centering}
    \begin{subfigure}[t]{0.325\linewidth}
        \centering
        \includegraphics[width=\linewidth]{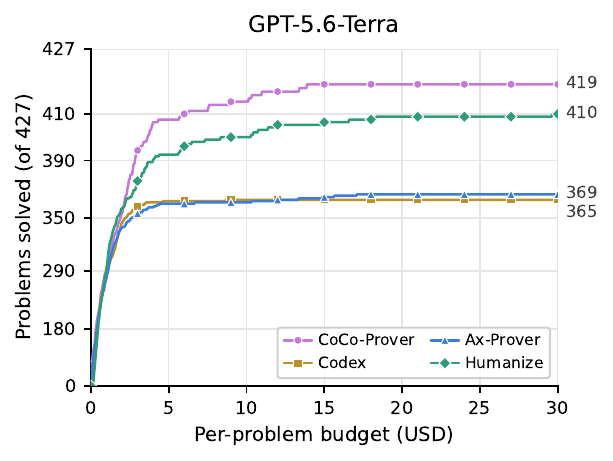}
        \caption{GPT-5.6-Terra.}
        \label{fig:rate-at-budget-gpt}
    \end{subfigure}\hfill
    \begin{subfigure}[t]{0.325\linewidth}
        \centering
        \includegraphics[width=\linewidth]{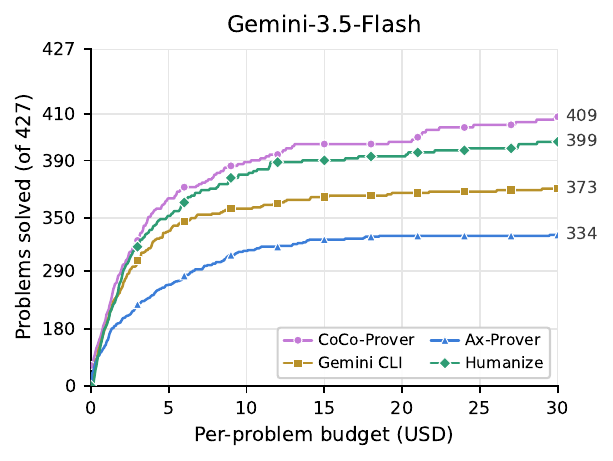}
        \caption{Gemini-3.5-Flash.}
        \label{fig:rate-at-budget-gemini}
    \end{subfigure}\hfill
    \begin{subfigure}[t]{0.325\linewidth}
        \centering
        \includegraphics[width=\linewidth]{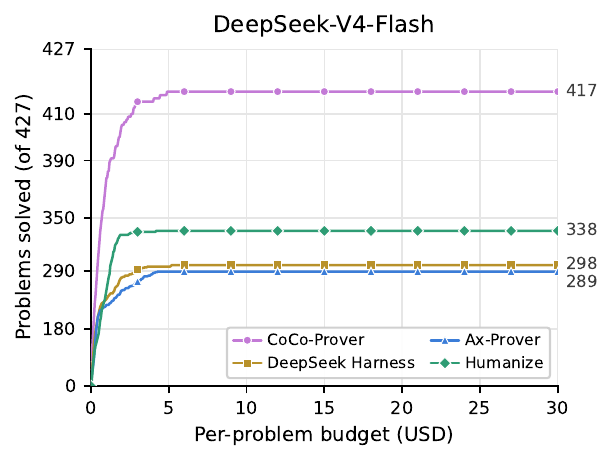}
        \caption{DeepSeek-V4-Flash.}
        \label{fig:rate-at-budget-deepseek}
    \end{subfigure}
    \caption{\textbf{Success--cost frontier on the 3 benchmarks.}
        Each curve gives the number of problems a method solves when every problem is capped at the per-problem budget on the horizontal axis, total 427 problems summed over CLEVER, VERINA, and AlgoVeri.}
    \label{fig:rate-at-budget}

%% file: tables/ablation.tex
\centering
\captionof{table}{\textbf{Component ablation across function-level benchmarks.}
    Using Gemini-3.5-Flash, we report total cost in USD and the number of problems solved (\#S) on a subset of the problems.
    Parenthesized percentages give the cost change relative to the complete system.}
\label{tab:ablation-cost}
\small
\setlength{\tabcolsep}{5pt}
\renewcommand{\arraystretch}{1.15}
\resizebox{\linewidth}{!}{    \begin{tabular}{lrrrrrrrr}
        \toprule
        \textbf{Ablated Version}
                                      & \multicolumn{2}{c}{\textbf{CLEVER}}
                                      & \multicolumn{2}{c}{\textbf{VERINA}}
                                      & \multicolumn{2}{c}{\textbf{AlgoVeri}}
                                      & \multicolumn{2}{c}{\textbf{Total}}                                                                                         \\
        \cmidrule(lr){2-3}\cmidrule(lr){4-5}\cmidrule(lr){6-7}\cmidrule(lr){8-9}
                                      & \textbf{Cost (USD)}                   & \textbf{\#S}
                                      & \textbf{Cost (USD)}                   & \textbf{\#S}
                                      & \textbf{Cost (USD)}                   & \textbf{\#S}
                                      & \textbf{Cost (USD)}                   & \textbf{\#S}                                                                       \\
        \midrule
        \ToolName{} w/o Router        & 41.02 (+43.2\%)                       & 10           & 60.38 (+38.8\%) & 9  & 53.80 (+38.6\%) & 10 & 155.20 (+39.9\%) & 29 \\
        \ToolName{} w/o Automation    & 31.93 (+11.5\%)                       & 10           & 48.89 (+12.4\%) & 9  & 36.57 (-5.8\%)  & 10 & 117.39 (+5.8\%)  & 29 \\
        \ToolName{} w/o Lemma Planner & 37.72 (+31.7\%)                       & 10           & 66.48 (+52.9\%) & 8  & 52.86 (+36.2\%) & 9  & 157.06 (+41.6\%) & 27 \\
        \ToolName{}                   & 28.64                                 & 10           & 43.49           & 10 & 38.82           & 10 & 110.94           & 30 \\
        \bottomrule
    \end{tabular}}

%% file: chapters/21_conclusion.tex
\section{Conclusion}
\label{sec:conclusion}

We presented \ToolName{}, an agent-orchestration framework for cost-efficient theorem proving in program verification with Lean~4.
\ToolName{} formalizes proving as metalevel decision-making under cost, grounded on two-level proof graphs. A symbolic scheduler selects every dependency-ready goal, and an evidence-grounded router groups them and purchases actions from four bounded specialist agents with Lean as the sole proof authority.
Across five Lean program verification benchmarks and three backbones, \ToolName{} achieves highest solve rate in any setting, and it reduces cost on commonly solved problems by up to 30.9\% against the strongest baseline, Humanize. Ablations also demonstrates the advantages of the system design with the shared-lemma discovery and adaptive routing.

%% file: appendix/theory.tex
\section{Additional Theoretical Analysis}
\label{app:additional-theory}

\paragraph{Roadmap.}
\Cref{app:metalevel-process} states the metalevel process of \Cref{sec:problem-formulation} in full, and \Cref{app:regret-proof} proves Theorem~1 and Corollary~2.
\Cref{app:metalevel-process} also names the operating point our protocol targets and shows that, in a finite abstraction of the process, the induced ordering is constant there (\Cref{prop:small-lambda,prop:rules-target-ordering}); that the deployment realizes the abstraction is assumed, not shown.
\Cref{app:faithful-reachability,app:policy-class,app:safe-filtering,app:shared-lemma-separation,app:kernel-soundness} establish structural properties of the two-level representation: it preserves reachability, graph information weakly---and on some instances strictly---raises the attainable value, the hard filters preserve the optimal value under their stated assumptions, sharing a helper can save a factor $K$, and returned proofs are kernel-sound.

\subsection{The metalevel process}
\label{app:metalevel-process}

\paragraph{States, selection, and computations.}
A metalevel state is $s=(\mathfrak G,\mathcal F,\sigma,\mathcal M,\mathcal L)$ as in \Cref{eq:controller-state-action}.
It doesn't invovle the router's rules. The analysis below needs only a common environment to every policy. Theorem~1 conditions on the history, so a router carrying internal state is covered.
$\mathrm{Sched}$ maintains the dependency-ready frontier (\Cref{app:scheduling}) and hands the whole $U=\mathcal F$ to the router.
The admissible computations $\mathcal C(s,U)$ are the triples $(G,o,\theta)$ of a group $G$.
The epoch's update records it in $\mathcal M'$ afterwards.
The contracts fix a finite grid of reasoning-effort and working-budget settings. That grid alone does not make $\mathcal C(s,U)$ finite, since $\theta$ also carries text: we assume in addition that every text-valued field of the action (\eg the analysis $j$ and the prepared context alike) is drawn from a finite vocabulary under a fixed token bound, which the specialist contracts impose.
Under that assumption, $\mathcal C(s,U)$ is finite and the maximum in \Cref{eq:metalevel-bellman} is attained.

\paragraph{Outcomes, costs, and transitions.}
A decision epoch begins at the pre-routing state $s$ and ends at $s'$, and two different things happen inside it.
The first is \emph{deliberation}: the router's calls, the decisions its guards reject, and the amendment of its own rules. This is policy-internal and leaves no trace in the state at all. Its charge $h$ goes to the router's own spending record. Over a run these charges accumulate as $H_\pi$.
The second is the purchased computation itself. The environment returns $Y\sim P_c(\cdot\mid s,U)$: the specialist's output (\eg a proof script, proposed helper lemmas with their elaboration results, a diagnosis, or a recovery proposal) together with Lean's verdict on it. Confining $Y$ to the environment's response is what makes $P_c$ common to every policy.
Its execution cost is $\kappa(s,U,c,Y)$, which never includes deliberation. Total spending is $C=\sum_{t<\tau}\kappa_t+H_\pi$, so each charge is counted once.
The controller and Lean then apply the update $s'=T(s,U,c,Y)$.
We assume:
\begin{itemize}[leftmargin=2.2em,itemsep=1pt,topsep=2pt]
  \item[(A1)] \emph{Markov outputs}: given $(s,U,c)$, the law of $Y$ depends neither on the rest of the history nor on which router produced $c$. This holds because everything the specialist is shown is carried by $c$;
  \item[(A2)] \emph{bounded computations}: $\kappa\le\bar c$ for every computation, and $\kappa\ge\kappa_{\min}>0$ for every paid one, that is, one that invokes a model.
        Some computations have $\kappa=0$ like Deterministic Automation;
  \item[(A3)] \emph{verifier determinism}: given $(s,U,c,Y)$, the update $T$ is deterministic. Neither rule amendment nor deliberation spending needs a clause here.$\mathrm{Sched}$ and the router's grouping are deterministic.
\end{itemize}
The stop computation costs nothing and moves to an absorbing state of value $0$.
Free computations would allow a policy to cycle forever at no charge, so throughout we restrict to \emph{proper} policies, those reaching an absorbing state with probability one and in finite expected time; the global iteration limit of \Cref{alg:cape-loop} enforces this in the implementation.
A solved state, in which every root and every generated dependency of its proof are stable, is absorbing with value $1$.

\paragraph{Value and Bellman equation.}
For a policy $\pi$ choosing $c_t\in\mathcal C(s_t,U_t)$ from the history, let $\tau$ be the number of computations before it enters an absorbing state, and let $C=\sum_{t<\tau}\kappa_t+H_\pi$ be its total realized cost, including its own inference cost $H_\pi$.
Its value is
\begin{equation}
  V^\pi_\lambda(s_0)=\Pr_\pi[\text{solve}]-\lambda\,\mathbb E_\pi[C],
  \label{eq:policy-value}
\end{equation}
and $V^*_\lambda=\sup_\pi V^\pi_\lambda$, taken over policies with zero inference cost, so that the optimum charges nothing for deliberation.
Because stopping is always admissible, $0\le V^*_\lambda\le1$ holds here.
Writing $W=1-V$ turns the problem into one with nonnegative stage costs $\lambda\kappa$ and terminal cost $0$ (solved) or $1$ (stopped). Under this positivity condition, the optimal value satisfies the Bellman equation~\eqref{eq:metalevel-bellman}~\cite{bertsekas2012dynamic}.

\paragraph{The frontier and the evaluation metric.}
Each policy induces a point $(\Pr_\pi[\text{solve}],\mathbb E_\pi[C])$.
Maximizing \Cref{eq:policy-value} for a fixed $\lambda$ selects a point on the success--cost boundary of the achievable set, and sweeping $\lambda$ recovers its supported points, \ie those on its convex hull.
The evaluation instead reports the threshold metric $\mathrm{Pass}_R(B)$ (\ie the fraction of tasks a policy $R$ solves before its cumulative cost reaches an external cap $B$ that is never given to the controller) together with cost on commonly solved problems.
$V^*_\lambda$ is the normative object behind these measurements, not a quantity we estimate.

\paragraph{The router as an ordinal approximation.}
At each state, $Q^*_\lambda$ induces a latent ordering of the admissible computations, $c\succ_s c'$ iff $Q^*_\lambda(s,U,c)>Q^*_\lambda(s,U,c')$, and an optimal policy picks a maximal element.
The router is asked to predict a maximal element of this ordering from $\mathrm{ev}(s,U)$.

\paragraph{The operating point.}
Our protocol does not choose an interior trade-off between success and cost.
The caps of are set generously and the prover works best-effort until it proves the target or exhausts its admissible computations. The preference is a small but positive $\lambda$ one, which means success dominates, and cost separates continuations that are equally likely to succeed.
The cap $B$ is a safety limit rather than a control input. It is never given to the controller, and $\mathrm{Pass}_R(B)$ records how often best-effort proving finishes beneath it.
The \Cref{prop:small-lambda} shows that, in a finite abstraction of this process, it is the regime (not a numerical value of $\lambda$) that determines the ordering the router has to predict.

\begin{proposition}[Constant ordering in the small-$\lambda$ regime]
  \label{prop:small-lambda}
  Assume (A1)--(A3) and restrict to proper policies. Assume further that the reachable state space is finite; $\mathcal C(s,U)$ is finite by the contracts.
  Then there is $\lambda_0>0$ such that for every state $s$, every frontier $U$, and every $c,c'\in\mathcal C(s,U)$, the sign of $Q^*_\lambda(s,U,c)-Q^*_\lambda(s,U,c')$ is the same for all $\lambda\in(0,\lambda_0]$.
  The ordering $\succ_s$ is therefore one fixed ordering on that interval, and its maximal elements are the computations that first maximize the attainable success probability and, among those, minimize expected remaining cost.
\end{proposition}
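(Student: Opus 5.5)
Since the reachable state space and every $\mathcal C(s,U)$ are finite, the plan is to reduce the claim to a standard Blackwell-type argument for finite stochastic shortest-path problems. Work in the cost formulation $W=1-V$ of \Cref{app:metalevel-process}: stage costs $\lambda\kappa\ge0$, terminal cost $0$ on solved states and $1$ on stop. By (A1) and (A3) this is a finite MDP whose transition law $P_c(\cdot\mid s,U)$ does not depend on $\lambda$. Restricting to proper policies, it suffices to consider stationary deterministic policies, and there are finitely many of them; call this set $\Pi$. For each $\pi\in\Pi$ and state $s$, write $p_\pi(s)$ for the solve probability and $K_\pi(s)$ for the expected execution cost, with $K_\pi(s)<\infty$ by properness and (A2). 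Then $V^\pi_\lambda(s)=p_\pi(s)-\lambda K_\pi(s)$ is affine in $\lambda$, and $V^*_\lambda(s)=\max_{\pi\in\Pi}V^\pi_\lambda(s)$ is a maximum of finitely many affine functions. Likewise, $Q^*_\lambda(s,U,c)=\mathbb E_Y[-\lambda\kappa+V^*_\lambda(s')]$ is a finite convex combination of such maxima (the support of $Y$ is finite, or at least its image under $T$ is finite), so it is piecewise affine in $\lambda$ with finitely many breakpoints.

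The key step follows from this. For any pair $c,c'$, the difference $D_{c,c'}(\lambda)=Q^*_\lambda(s,U,c)-Q^*_\lambda(s,U,c')$ is piecewise affine with finitely many breakpoints. Hence on some interval $(0,\lambda_{c,c'}]$ it is a single affine function, whose sign is either constant or identically zero there; an affine function that is not identically zero vanishes at most once, and we can shrink the interval to exclude that point. Taking $\lambda_0$ as the minimum over the finitely many triples $(s,c,c')$ gives the constant-ordering claim.

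To identify the maximal elements, use the first-order expansion near $0$. On $(0,\lambda_0]$, each $V^*_\lambda(s)$ equals $p_{\pi^\dagger}(s)-\lambda K_{\pi^\dagger}(s)$ for a fixed policy $\pi^\dagger$. This policy is lexicographically optimal: it maximizes $p$, and among maximizers it minimizes $K$. The reason is that as $\lambda\downarrow0$ the dominant affine piece is the one with largest intercept and then smallest slope. Substituting into $Q^*$ gives $Q^*_\lambda(s,U,c)=P^*(s,c)-\lambda\,\bigl(\mathbb E[\kappa]+\mathbb E[K_{\pi^\dagger}(s')]\bigr)$. Here $P^*(s,c)=\mathbb E_Y[p^*(s')]$ is the attainable success probability after $c$, and the bracketed term is the expected remaining cost under continuation with the lexicographic policy. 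Comparing these affine forms for small $\lambda$ yields exactly the lexicographic order in the statement.

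The main obstacle is justifying the reduction to finitely many stationary proper policies, with $V^*_\lambda$ attained by one of them uniformly for small $\lambda$. Free computations with $\kappa=0$ (Deterministic Automation, stop) allow zero-cost cycles, so the usual SSP theory of Bertsekas needs improper policies to incur positive cost. My first attempt would be to restrict by hypothesis to proper policies and invoke the finite-state, finite-action existence of optimal stationary policies for the transformed nonnegative-cost problem. If that is delicate, the fallback is to use the global iteration limit to make the horizon finite, which makes the problem a finite-horizon MDP where backward induction directly gives a piecewise-affine $V^*_\lambda$.
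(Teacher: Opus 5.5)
Your proposal is correct and follows essentially the same route as the paper's proof. Both arguments reduce to a finite stochastic shortest-path problem over finitely many deterministic stationary proper policies, use that each $V^\pi_\lambda(s)=p_\pi(s)-\lambda c_\pi(s)$ is affine so every difference of $Q^*_\lambda$ values is piecewise linear with finitely many breakpoints, choose $\lambda_0$ below the smallest positive sign change, and read off the lexicographic maximizers as $\lambda\downarrow0$. Like you, the paper simply invokes the standard fact that stationary policies suffice over proper policies, without treating the zero-cost-cycle subtlety you raise. Your step of fixing one uniformly optimal policy $\pi^\dagger$ near $0$ may require shrinking $\lambda_0$ slightly, but it is a harmless sharpening of the paper's ``best continuation on each linear piece'' argument.
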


\begin{proof}
  Under the finiteness hypothesis, the metalevel process is a finite stochastic shortest-path problem over proper policies, for which deterministic stationary policies suffice~\cite{bertsekas2012dynamic}. There are finitely many of them and each one is proper, so its success probability $p_\pi(s)$ and expected cost $c_\pi(s)$ are finite.
  By \Cref{eq:policy-value} each has value $V^\pi_\lambda(s)=p_\pi(s)-\lambda\,c_\pi(s)$, affine in $\lambda$, so $V^*_\lambda(s)=\max_\pi V^\pi_\lambda(s)$ is a maximum of finitely many affine functions: piecewise linear and convex in $\lambda$, with finitely many breakpoints.
  Since $Q^*_\lambda(s,U,c)=-\lambda\,\mathbb E_Y[\kappa]+\mathbb E_Y\bigl[V^*_\lambda(T(s,U,c,Y))\bigr]$ is an affine term plus a finite convex combination of such functions, it is piecewise linear in $\lambda$ with finitely many breakpoints, and so is every difference $Q^*_\lambda(s,U,c)-Q^*_\lambda(s,U,c')$.
  Each difference therefore changes sign at finitely many values of $\lambda$. As there are finitely many tuples $(s,U,c,c')$, taking $\lambda_0$ below the smallest positive such value gives the first claim.
  For the second, note that $Q^*_\lambda(s,U,c)$ is itself the value of the best continuation that begins with $c$, hence of the form $p(s,U,c)-\lambda\,\bar c(s,U,c)$ on each linear piece, where $p$ is the success probability attainable after $c$ and $\bar c$ is the expected remaining cost of the continuation attaining it.
  Letting $\lambda\downarrow0$, a computation whose $p$ is not maximal is strictly beaten for all small enough $\lambda$, and among those attaining the largest $p$ the value is decreasing in $\bar c$, so the maximizers on $(0,\lambda_0]$ are exactly the lexicographic ones.
\end{proof}

\begin{proposition}[The routing rules target this ordering]
  \label{prop:rules-target-ordering}
  Fix $\lambda\in(0,\lambda_0]$.
  \emph{(i)} Among computations that leave the attainable success probability unchanged, $\succ_s$ ranks by expected remaining cost, so when no admissible computation raises that probability the cheapest is maximal.
  In particular, let $c$ close the remaining work outright with probability $p_c$ at cost $\kappa_c$ and, under the proviso of \Cref{app:safe-filtering}, leave the state value-equivalent when it fails.
  Write $J^*(s)$ for the optimal expected remaining cost at $s$, so that $V^*_\lambda(s)=1-\lambda J^*(s)$ when success is attainable with certainty. Then
  \begin{equation}
    \delta_\lambda(s,U,c)=\lambda\bigl(\kappa_c-p_c\,J^*(s)\bigr),
    \label{eq:cheap-first-condition}
  \end{equation}
  so $c$ is maximal exactly when $\kappa_c\le p_c\,J^*(s)$.
  For a paid computation the condition is a genuine restriction, and being the cheapest admissible computation does not imply it.
\end{proposition}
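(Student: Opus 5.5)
The plan builds directly on \Cref{prop:small-lambda}: for $\lambda\in(0,\lambda_0]$ every $Q^*_\lambda(s,U,c)$ sits on a single linear piece of the form $p(s,U,c)-\lambda\,\bar c(s,U,c)$. I take that representation as the starting point.

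\emph{Part (i).} If no admissible computation changes the attainable success probability, then $p(s,U,c)$ equals a common value $p^*(s)$ for every $c$. The difference $Q^*_\lambda(s,U,c)-Q^*_\lambda(s,U,c')$ then reduces to $-\lambda(\bar c(s,U,c)-\bar c(s,U,c'))$. Since $\lambda>0$, the ordering $\succ_s$ is exactly increasing expected remaining cost, so the cheapest computation is maximal. This is the lexicographic characterization already proved, restricted to a single tier of $p$. Here ``cheapest'' means cheapest in expected remaining cost $\bar c$, not in the immediate charge $\kappa$.

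\emph{The regret identity.} I would evaluate $Q^*_\lambda(s,U,c)$ by splitting on the outcome $Y$ of $c$. With probability $p_c$ the computation closes the remaining work, so the next state is solved and, by the convention in \Cref{eq:metalevel-bellman}, contributes $V^*_\lambda=1$. With probability $1-p_c$ it fails, and by the value-equivalence proviso of \Cref{app:safe-filtering} the successor has value $V^*_\lambda(s)$. This gives
\[
Q^*_\lambda(s,U,c)=-\lambda\kappa_c+p_c+(1-p_c)V^*_\lambda(s).
\]
I read $\kappa_c$ as the expected execution cost if it varies with $Y$. Substituting $V^*_\lambda(s)=1-\lambda J^*(s)$, which holds when success is attainable with certainty, gives
\[
Q^*_\lambda=1-\lambda\kappa_c-(1-p_c)\lambda J^*(s).
\]
Hence
\[
\delta_\lambda=V^*_\lambda(s)-Q^*_\lambda=\lambda\bigl(\kappa_c-p_c J^*(s)\bigr),
\]
which is \Cref{eq:cheap-first-condition}. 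Because $\delta_\lambda\ge0$ always holds by \Cref{eq:metalevel-regret}, $c$ is maximal exactly when $\delta_\lambda=0$. So maximality is equivalent to $\kappa_c\le p_cJ^*(s)$, and in fact forces equality.

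\emph{Final remark.} For a paid computation, (A2) gives $\kappa_c\ge\kappa_{\min}>0$. A small two-option example shows that the cheapest admissible computation can violate the condition. Take a cheap computation with tiny $p_c$ alongside a free computation, such as Deterministic Automation with $\kappa=0$, that closes the work; then $J^*(s)=0$ and $\kappa_c>p_cJ^*(s)$.

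\emph{Main obstacle.} The hardest step is justifying the failure branch. The argument needs both that the successor value equals $V^*_\lambda(s)$ and that the identity $V^*_\lambda=1-\lambda J^*$ holds at that successor. I would handle this by invoking the proviso as a hypothesis that the failed state is value-equivalent to $s$, and by noting that certainty of success is preserved at a value-equivalent state.
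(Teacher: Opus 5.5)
Your argument for part (i) and for the regret identity is the paper's argument. Part (i) is the tie-restricted case of the lexicographic characterization in \Cref{prop:small-lambda}. The identity comes from splitting $Q^*_\lambda(s,U,c)=-\lambda\kappa_c+p_c+(1-p_c)V^*_\lambda(s)$ via the proviso and substituting $V^*_\lambda(s)=1-\lambda J^*(s)$. Your extra remark is correct and sharpens the paper's ``$\le$'': since $\delta_\lambda\ge0$ always, $\kappa_c\ge p_cJ^*(s)$ holds automatically, so maximality is equivalent to equality.

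The step that does not work is your example for the final clause. In it, the free closer (e.g.\ Deterministic Automation with $\kappa=0$) is itself admissible and strictly cheaper, so your paid computation is not the cheapest admissible computation. The example therefore establishes only the first half of the sentence: a paid computation can violate $\kappa_c\le p_cJ^*(s)$, and when $J^*(s)=0$ every paid computation does. It says nothing about whether being cheapest implies the condition. Note also that the free stop computation is always admissible, so ``cheapest'' has to be read among the computations that do proof work.

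What you need is an instance where the violating computation is the cheapest such computation. Take two paid computations:
\begin{itemize}
\item $c_1$ with $\kappa_{c_1}=1$, closing with probability $1/10$ and leaving a value-equivalent state on failure;
\item $c_2$ with $\kappa_{c_2}=2$, closing surely.
\end{itemize}
Then $J^*(s)=\min\bigl(2,\,1+\tfrac{9}{10}J^*(s)\bigr)=2$. So $c_1$ is the cheapest, yet $\kappa_{c_1}=1>p_{c_1}J^*(s)=1/5$, and $\delta_\lambda(s,U,c_1)=\tfrac45\lambda>0$.

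This is exactly the distinction you drew in part (i) between immediate cost $\kappa$ and expected remaining cost $\bar c$: here $\bar c(c_1)=1+\tfrac{9}{10}\cdot 2=2.8>2=\bar c(c_2)$. The paper's proof gives no argument for this clause, so this is the one place your proposal goes beyond it, and the example needs this repair to support what you claim.
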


\begin{proof}
  The first sentence of \emph{(i)} is the second claim of \Cref{prop:small-lambda} restricted to the computations that tie on attainable success.
  For \Cref{eq:cheap-first-condition}, the proviso makes the failure successor value-equivalent to $s$, so $Q^*_\lambda(s,U,c)=-\lambda\kappa_c+p_c+(1-p_c)V^*_\lambda(s)$; substituting $V^*_\lambda(s)=1-\lambda J^*(s)$ and subtracting from $V^*_\lambda(s)$ gives the identity.
\end{proof}

Within this abstraction the two initial routing rules are statements about one ordering rather than heuristics attached to a particular $\lambda$.
A working budget $\theta$ is an operational constraint: once it is exhausted, continuing on the same goal is simply outside $\mathcal C(s,U)$, and the controller returns to the router, which may re-decide.
Whether this is also value-preserving is a comparison of whole continuations: writing $h_{\mathrm{ret}}$ for the routing overhead, returning dominates continuing exactly when $p_{\mathrm{ret}}-\lambda\bigl(h_{\mathrm{ret}}+\mathbb E[C_{\mathrm{ret}}]\bigr)\ge p_{\mathrm{cont}}-\lambda\,\mathbb E[C_{\mathrm{cont}}]$.
Retaining the partial proof and the router's analysis, so that work is resumed rather than redone, keeps $p_{\mathrm{ret}}$ and $\mathbb E[C_{\mathrm{ret}}]$ close to their continuing counterparts, but it does not by itself establish the inequality: if one further refinement would have closed the goal for $1$, a return costing $0.2$ that resumes the same refinement totals $1.2$.
We therefore treat the bounded contract as a restriction of the action space, justified operationally by retention, and do not claim it is free.

\subsection{Regret decomposition: proofs of Theorem~1 and Corollary~2}
\label{app:regret-proof}

This subsection states and proves the result summarized at the end of \Cref{sec:problem-formulation}: adaptive routing is beneficial when the decision regret it avoids exceeds its own inference overhead.
Both statements are written in terms of the metalevel regret $\delta_\lambda$ of \Cref{eq:metalevel-regret}.

\noindent\textbf{Theorem 1 (regret decomposition).}
Let router $R$ reach an absorbing state after finitely many computations in expectation, with value $V_\lambda^R$, finite expected inference cost $H_R$ counted in $C$, and the cap encoded as an explicit stop. Then
\begin{equation}
  V_\lambda^*(s_0)-V_\lambda^R(s_0)
  =\mathbb E_R\Bigl[\textstyle\sum_{t<\tau}\delta_\lambda(s_t,U_t,c_t)\Bigr]+\lambda\,\mathbb E_R[H_R],
  \label{eq:regret-decomposition}
\end{equation}
where $\tau$ is the number of computations.

\noindent\textbf{Corollary 2 (when routing pays for itself).}
Router $R$ outperforms a fixed routine $R_0$ exactly when the regret it avoids exceeds its extra decision-making cost:
\begin{equation}
  \mathbb E_{R_0}\Bigl[\textstyle\sum_{t}\delta_\lambda\Bigr]-\mathbb E_{R}\Bigl[\textstyle\sum_{t}\delta_\lambda\Bigr]
  \;>\;
  \lambda\bigl(\mathbb E_R[H_R]-\mathbb E_{R_0}[H_{R_0}]\bigr).
  \label{eq:routing-pays}
\end{equation}
Cost efficiency is therefore the allocation of purchased computations with low cumulative regret.

\begin{proof}[Proof of Theorem~1]
  Fix a router $R$ with $\mathbb E_R[\tau]<\infty$, write $\delta_t=\delta_\lambda(s_t,U_t,c_t)$, and let $\mathcal H_t$ be the history up to the choice of $c_t$.
  By \Cref{eq:metalevel-bellman} and (A1), on $\{t<\tau\}$,
  \[
    \mathbb E_R\bigl[Q^*_\lambda(s_t,U_t,c_t)\,\big|\,\mathcal H_t\bigr]
    =\mathbb E_R\bigl[-\lambda\kappa_t+V^*_\lambda(s_{t+1})\,\big|\,\mathcal H_t\bigr],
  \]
  so that $\mathbb E_R[\delta_t\mid\mathcal H_t]=\mathbb E_R[V^*_\lambda(s_t)-V^*_\lambda(s_{t+1})+\lambda\kappa_t\mid\mathcal H_t]$.
  Summing over $t<\tau$ and taking expectations telescopes to
  \[
    \mathbb E_R\Bigl[\sum_{t<\tau}\delta_t\Bigr]
    =V^*_\lambda(s_0)-\mathbb E_R\bigl[V^*_\lambda(s_\tau)\bigr]+\lambda\,\mathbb E_R\Bigl[\sum_{t<\tau}\kappa_t\Bigr],
  \]
  where exchanging sum and expectation is justified because every summand is bounded ($0\le V^*_\lambda\le1$, and $\kappa\le\bar c$ by (A2)) and $\mathbb E_R[\tau]<\infty$.
  The final state is solved, with value $1$, or stopped, with value $0$, so $\mathbb E_R[V^*_\lambda(s_\tau)]=\Pr_R[\text{solve}]$.
  Substituting $\Pr_R[\text{solve}]=V^R_\lambda(s_0)+\lambda\,\mathbb E_R[\sum_{t<\tau}\kappa_t]+\lambda\,\mathbb E_R[H_R]$ from \Cref{eq:policy-value} gives \Cref{eq:regret-decomposition}.
\end{proof}

\begin{proof}[Proof of Corollary~2]
  Apply \Cref{eq:regret-decomposition} to $R$ and to $R_0$ from the same $s_0$.
  The common term $V^*_\lambda(s_0)$ cancels, so $V^R_\lambda(s_0)>V^{R_0}_\lambda(s_0)$ if and only if \Cref{eq:routing-pays} holds.
\end{proof}

Theorem~1 is the performance-difference identity of reinforcement learning~\cite{kakade2002approximately}, specialized to purchased computations with an explicit charge for deliberation.
What it fixes is the target of the design: a router is worthwhile precisely to the extent that its choices lose less continuation value than a fixed routine's, net of its own inference cost.

\paragraph{Truncated runs.}
The telescoping argument assumes the trajectory reaches the modeled absorbing state and that $\tau$ counts the transition into it.
A run stopped by an external time or expenditure cap ends instead at a state $s_T$ that may still carry positive continuation value, and for the truncated value $V^R_{\mathrm{trunc}}(s_0)=\mathbb E[Z_T]-\lambda\,\mathbb E[\sum_{t<T}\kappa_t+H_R]$, with $Z_T$ the solve indicator, the same calculation gives
\begin{equation}
  V^*_\lambda(s_0)-V^R_{\mathrm{trunc}}(s_0)
  =\mathbb E_R\Bigl[\textstyle\sum_{t<T}\delta_t\Bigr]+\lambda\,\mathbb E_R[H_R]+\mathbb E_R\bigl[V^*_\lambda(s_T)-Z_T\bigr].
  \label{eq:truncated-decomposition}
\end{equation}
The last term is the value abandoned at the cap and cannot be dropped: truncating an unsolved state of value $0.9$ before any computation gives a gap of $0.9$ with no recorded regret.
\Cref{eq:regret-decomposition} is therefore the identity for runs that end in the modeled stop, which we obtain by appending an explicit stop computation at the cap and charging it the abandoned value as its regret.
Both statements additionally require $\mathbb E_R[H_R]<\infty$, which $\mathbb E_R[\tau]<\infty$ and (A2) do not imply: the outer horizon bounds completed epochs, so a router that keeps returning malformed decisions could deliberate without end. We therefore assume a finite cap on router retries within an epoch, with a defined fallback when it is reached.

\subsection{Faithful two-level graph states preserve reachability}
\label{app:faithful-reachability}

This subsection concerns primitive Lean steps which are deterministic.
For a sequence $w=(a_1,\ldots,a_k)$ of primitive steps, let $F^w(s)$ be the state reached from $s$ and $C(w)$ its cost; $\tau$ is reserved for the number of computations in \Cref{app:metalevel-process}.
Following the verifier-defined reachability view~\cite{sonoda2026why}, define
\begin{equation}
  \mathrm{Reach}(s)
  =\mathbf 1\!\left\{\exists w:\ F^w(s)\in\mathcal G\right\},
  \qquad
  C^\star(s)
  =\inf_{w:F^w(s)\in\mathcal G} C(w),
  \label{eq:reachability-cost}
\end{equation}
where $\mathcal G$ is the solved set and $C^\star(s)=\infty$ when no successful sequence exists.

Let $\mathcal R$ be the lower-level Lean search space with verifier transition $F_{\mathcal R}$ and solved set $\mathcal G_{\mathcal R}$.
Let $e:\mathcal R\rightarrow\mathcal S$ encode a lower-level state in \ToolName{}'s two-level state space $\mathcal S$, whose verified primitive transition is $F_{\mathcal S}$ and whose solved set is $\mathcal G_{\mathcal S}$.

\begin{proposition}[Faithful reachability]
  \label{prop:faithful-reachability}
  Suppose that, for every reachable lower-level state $r$ and primitive step $a$, the encoding commutes with the verifier transition:
  \begin{equation}
    e\!\left(F_{\mathcal R}(r,a)\right)
    =F_{\mathcal S}(e(r),a).
    \label{eq:transition-preservation}
  \end{equation}
  Suppose further that solved states are preserved in both directions, that the two spaces admit the same primitive steps at corresponding states, and that the encoding does not alter step costs.
  Then
  \begin{equation}
    \mathrm{Reach}^{\mathcal R}(r)
    =\mathrm{Reach}^{\mathcal S}(e(r)),
    \qquad
    C_{\mathcal R}^{\star}(r)
    =C_{\mathcal S}^{\star}(e(r)).
    \label{eq:faithful-reachability-cost}
  \end{equation}
\end{proposition}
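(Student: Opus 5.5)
The plan is to prove a single stronger statement by induction on the length of a step sequence. For every reachable $r$ and every finite sequence $w=(a_1,\ldots,a_k)$ admissible from $r$, we claim
\[
  e\bigl(F^w_{\mathcal R}(r)\bigr)=F^w_{\mathcal S}\bigl(e(r)\bigr),
\]
with $w$ also admissible from $e(r)$.

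For $k=0$ the claim is trivial. For the inductive step, write $w=w'a$ and let $r'=F^{w'}_{\mathcal R}(r)$, which is reachable. The induction hypothesis gives $e(r')=F^{w'}_{\mathcal S}(e(r))$. The assumption that both spaces admit the same primitive steps at corresponding states makes $a$ admissible at $e(r')$. Then \eqref{eq:transition-preservation} gives $e(F_{\mathcal R}(r',a))=F_{\mathcal S}(e(r'),a)$, which is the claim for $w$.

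The argument uses the deterministic-primitive-step setting of this subsection: each $F^w$ is a function, so there is no distributional issue.

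With this in hand, reachability transfers in one direction immediately. Suppose $w$ leads from $r$ to a state in $\mathcal G_{\mathcal R}$. Then $F^w_{\mathcal S}(e(r))=e(F^w_{\mathcal R}(r))$, and this state lies in $\mathcal G_{\mathcal S}$ because solved states are preserved forward.

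For the converse, take $w$ admissible from $e(r)$ with $F^w_{\mathcal S}(e(r))\in\mathcal G_{\mathcal S}$. This is the delicate point. We need that every sequence admissible from $e(r)$ is admissible from $r$, which again follows from "same primitive steps at corresponding states" applied inductively along the path. We apply it at the pairs $(F^{w'}_{\mathcal R}(r),\,e(F^{w'}_{\mathcal R}(r)))$, which by the lemma are exactly the states visited in $\mathcal S$. So $w$ runs in $\mathcal R$ and $e(F^w_{\mathcal R}(r))=F^w_{\mathcal S}(e(r))\in\mathcal G_{\mathcal S}$. The backward direction of solved-state preservation, namely $e(x)\in\mathcal G_{\mathcal S}\Rightarrow x\in\mathcal G_{\mathcal R}$, then puts $F^w_{\mathcal R}(r)\in\mathcal G_{\mathcal R}$.

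Hence the sets of successful sequences from $r$ and from $e(r)$ coincide, and $\mathrm{Reach}^{\mathcal R}(r)=\mathrm{Reach}^{\mathcal S}(e(r))$.

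For costs, the encoding does not alter step costs, so $C_{\mathcal R}(w)=C_{\mathcal S}(w)$ for each $w$ in this common set. The infima in \eqref{eq:reachability-cost} are therefore taken over the same set of identical values and agree. If the set is empty, both sides equal $\infty$, which matches the convention.

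I expect the main obstacle to be the converse direction. The hypotheses must be read carefully so that admissibility in $\mathcal S$ pulls back to $\mathcal R$ only along encoded reachable states, where the commuting square \eqref{eq:transition-preservation} is assumed. States of $\mathcal S$ outside the image of $e$ are never visited, by the lemma, so they cause no trouble.
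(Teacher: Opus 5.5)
Your proposal is correct and follows the paper's argument. You induct along the step sequence using the commuting square \eqref{eq:transition-preservation}, transfer solved states in both directions, and take the infimum over the common set of equal-cost successful sequences. You are more explicit than the paper's brief ``the same argument lifts'' about two points: pulling admissibility back from $\mathcal S$ to $\mathcal R$ along encoded reachable states, and the empty-set convention $C^\star=\infty$.
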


\begin{proof}
  If a lower-level sequence reaches a solved state, induction over the transition-preservation identity in \Cref{eq:transition-preservation} maps every prefix to the two-level representation, and solved-set preservation maps its endpoint to $\mathcal G_{\mathcal S}$.
  Conversely, the same argument lifts a successful sequence in $\mathcal S$ through the faithful encoding.
  Because corresponding sequences have equal cost, taking the infimum over successful sequences gives the cost equality in \Cref{eq:faithful-reachability-cost}.
  This is the faithful-abstraction argument of the verifier-defined reachability view~\cite{sonoda2026why} specialized to the two-level graph.
\end{proof}

Every certificate reachable by primitive Lean steps is reachable in the representation of the two-level proof graphs at the same primitive cost.

\subsection{Graph information raises the attainable value}
\label{app:policy-class}

Let $I_{\mathrm{loc}}(s,G)$ contain the goals of the supplied group and their local attempt history.
A target-local router has the form $R_{\mathrm{loc}}(s,G)=f(I_{\mathrm{loc}}(s,G))$, whereas a graph-informed router may use the full state $s$, including other goals, dependency edges, artifact status, and realized spend.
Let $\Pi_{\mathrm{loc}}$ and $\Pi_{\mathrm{graph}}$ denote the corresponding policy classes, and $V^*_{\lambda,\Pi}(s)=\sup_{\pi\in\Pi}V^\pi_\lambda(s)$.

\begin{proposition}[Information-class dominance]
  \label{prop:information-class-dominance}
  $V^*_{\lambda,\Pi_{\mathrm{graph}}}(s)\ge V^*_{\lambda,\Pi_{\mathrm{loc}}}(s)$ for every state $s$ and every $\lambda\ge0$.
\end{proposition}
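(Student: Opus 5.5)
The plan is to show class inclusion, $\Pi_{\mathrm{loc}}\subseteq\Pi_{\mathrm{graph}}$, and then use monotonicity of the supremum over a larger set. Everything else in the statement, including $\lambda\ge0$ and the cost accounting of \Cref{eq:policy-value}, enters only through the fact that $V^\pi_\lambda(s)$ is a function of the policy $\pi$ alone. It does not depend on which class $\pi$ is viewed as belonging to.

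\textbf{Step 1: inclusion.} Take any target-local router $R_{\mathrm{loc}}(s,G)=f(I_{\mathrm{loc}}(s,G))$. The local information $I_{\mathrm{loc}}(s,G)$ consists of the supplied goals and their local attempt history. All of this is recoverable from the full state $s=(\mathfrak G,\mathcal F,\sigma,\mathcal M,\mathcal L)$, since attempt histories live in $\mathcal M$ and the goals live in $\mathfrak G$. Hence the map $I_{\mathrm{loc}}$ is a function of $(s,G)$. The composite $\tilde R(s,G)=f(I_{\mathrm{loc}}(s,G))$ is then a graph-informed router that simply ignores the extra coordinates, so $\tilde R\in\Pi_{\mathrm{graph}}$.

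\textbf{Step 2: equal values.} I would argue that $\tilde R$ and $R_{\mathrm{loc}}$ induce the same law over trajectories. The reason is that they select identical computations $c_t$ at every history. Under (A1) and (A3), the outcome law $P_c(\cdot\mid s,U)$ and the update $T$ are common to every policy. The inference charges $H$ also coincide, because the implementation is identical and we only re-index its inputs. By induction on $t$, the joint laws of $(s_t,c_t,Y_t,\kappa_t)$ therefore agree, and so do the solve probability and $\mathbb E[C]$. This gives $V^{\tilde R}_\lambda(s)=V^{R_{\mathrm{loc}}}_\lambda(s)$.

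\textbf{Step 3: conclusion.} The value of every local policy is therefore attained by some graph-informed policy. Taking suprema yields $V^*_{\lambda,\Pi_{\mathrm{graph}}}(s)\ge V^*_{\lambda,\Pi_{\mathrm{loc}}}(s)$ for every $s$ and every $\lambda\ge0$, with no properness or finiteness hypothesis needed beyond those used to define $V^\pi_\lambda$.

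The main obstacle is Step 1: making precise that $I_{\mathrm{loc}}$ is measurable with respect to, and determined by, the state $s$ (plus history, for routers carrying internal state). If local history were not stored in $\mathcal M$, I would instead let graph-informed routers condition on the full history $\mathcal H_t$, as Theorem~1 does, and argue the inclusion at the level of history-dependent policies.
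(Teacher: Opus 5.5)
Your proposal is correct and is essentially the paper's argument: the paper also notes that a graph-informed router can ignore its additional inputs, so $\Pi_{\mathrm{loc}}\subseteq\Pi_{\mathrm{graph}}$, and concludes because a supremum over a superset is no smaller. Your Step~2 claim that the inference charges $H$ coincide is exactly the caveat the paper states explicitly (if reading graph evidence were itself more expensive, inclusion of decision rules would not suffice), so it is best stated as the assumption that ignoring the extra inputs costs nothing.
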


\begin{proof}
  A graph-informed router can ignore its additional inputs and process them at no extra inference cost, so $\Pi_{\mathrm{loc}}\subseteq\Pi_{\mathrm{graph}}$, and a supremum over a superset is no smaller. If reading graph evidence were itself more expensive, inclusion of decision rules would not suffice and the comparison would have to charge that difference.
\end{proof}

The gap between the classes shows up when a single policy must serve instances that look alike locally: the value of a helper lemma depends on how many consumers it serves. The next result is a minimax statement about one fixed local policy---on each instance separately, a local policy hardcoded for that instance can be optimal.

\begin{proposition}[Strict separation]
  \label{prop:strict-separation}
  Consider two instances.
  In instance $A$, $K\ge2$ consumers $g_1,\ldots,g_K$ each need a fact that can be supplied by one helper lemma $h$; in instance $B$, only $g_1$ exists.
  The symbolic rule supplies the group $\{g_1\}$ at \emph{every} decision until the router commits---either closing $g_1$ directly or proposing the helper---and every computation before that commitment returns the same output in both instances.
  So the whole observation history $I_{\mathrm{loc}}$ admits up to the commitment, and the actions available along it, are identical in the two instances; what differs is the root set, which only a graph-informed router reads.
  Supplying $\{g_1\}$ only at the first decision would not suffice: a rule that moved to $g_2$ after a free failed attempt would hand a local router the second consumer, and reading it would be enough to choose correctly on both instances.
  Every computation is deterministic: the Proof Writer closes a consumer directly at cost $a$, the Lemma Planner proposes and proves $h$ at cost $b$, each consumer then closes from $h$ at residual cost $d<a$, and every other computation makes no progress and returns the same output on both instances.
  Assume $a<b+d$ and $b+Kd<Ka$, so the helper is a losing trade for one consumer but a winning trade for $K$, and $\lambda Ka<1$, so solving is always worth its cost.
  Then one router in $\Pi_{\mathrm{graph}}$ is optimal on both instances, while every \emph{single} router $R\in\Pi_{\mathrm{loc}}$ satisfies
  \begin{equation}
    \max_{I\in\{A,B\}}\Bigl(V^*_{\lambda,\Pi_{\mathrm{graph}}}-V^R_\lambda\Bigr)(I)
    \;\ge\;
    \lambda\,\frac{\Delta_B\,\Gamma_A}{\Delta_B+\Gamma_A},
    \label{eq:strict-separation}
  \end{equation}
  where $\Delta_B=b+d-a$ and $\Gamma_A=\min\bigl(a-d,\;Ka-b-Kd\bigr)$ are positive by assumption.
\end{proposition}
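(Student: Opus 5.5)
The plan is to reduce the bound to a two-point minimax over a single number: the probability $q$ with which the local router's commitment is ``propose the helper''. I would first compute the graph-informed optimum on each instance, then show that any $R\in\Pi_{\mathrm{loc}}$ loses at least $\lambda q\,\Delta_B$ on $B$ and at least $\lambda(1-q)\,\Gamma_A$ on $A$, and finally minimize the maximum of the two losses over $q\in[0,1]$.

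\textbf{Step 1: the graph-informed optimum.} Since every computation is deterministic and nonnegatively priced, I would enumerate the possible commitments. These are: stop, close $g_1$ directly, or propose $h$. Non-progress computations only add cost, and inference overhead $H$ only adds cost as well, so both can be ignored for an upper bound on value.
\begin{itemize}
\item On $B$ the best policy closes $g_1$ directly, with value $1-\lambda a$. The helper route has value $1-\lambda(b+d)$, which is worse because $a<b+d$.
\item On $A$ the helper route costs $b+Kd$. Closing $g_1$ directly first leaves a best completion of total cost $a+\min\bigl((K-1)a,\;b+(K-1)d\bigr)$. Its excess over $b+Kd$ is exactly $\min(Ka-b-Kd,\;a-d)=\Gamma_A>0$, so the helper route is optimal, with value $1-\lambda(b+Kd)$.
\end{itemize}
Stopping (value $0$) is strictly worse on both instances because $\lambda Ka<1$ and $b+d<Ka$. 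The router that reads the root set and picks direct on $B$ and helper on $A$ is in $\Pi_{\mathrm{graph}}$ and attains both optima, which gives the first claim.

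\textbf{Step 2: lower-bounding the loss of a local router.} By hypothesis, the symbolic rule supplies $\{g_1\}$ until commitment and every pre-commitment output coincides across $A$ and $B$. Hence the law of the history $I_{\mathrm{loc}}$ up to commitment is the same on both instances. In particular, the commitment's distribution over $\{\text{direct},\text{helper},\text{stop}\}$ is the same, say with probabilities $(p,q,r)$; a randomized local router is allowed. Conditioning on the commitment, I would lower-bound the loss of each branch.
\begin{itemize}
\item On $B$, the helper branch loses at least $\lambda\Delta_B$ and the stop branch loses $1-\lambda a\ge\lambda\Delta_B$, using $\lambda(b+d)<1$.
\item On $A$, the direct branch loses at least $\lambda\Gamma_A$, by the best-completion calculation of Step 1. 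The stop branch loses $1-\lambda(b+Kd)\ge\lambda\Gamma_A$, using $\lambda Ka<1$.
\end{itemize}
Any extra pre-commitment cost or deliberation only increases these losses. Therefore the loss on $B$ is at least $\lambda(q+r)\Delta_B$ and the loss on $A$ is at least $\lambda(p+r)\Gamma_A$. Both are at least $\lambda x\Delta_B$ and $\lambda(1-x)\Gamma_A$ respectively, where $x=q+r$, since $x+(p+r)\ge1$.

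\textbf{Step 3: minimizing over $x$.} The maximum of $\lambda x\Delta_B$ and $\lambda(1-x)\Gamma_A$ over $x\in[0,1]$ is minimized where the two are equal, namely at $x=\Gamma_A/(\Delta_B+\Gamma_A)$. There both equal $\lambda\Delta_B\Gamma_A/(\Delta_B+\Gamma_A)$, which is the bound \eqref{eq:strict-separation}.

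\textbf{Main obstacle.} The delicate step is Step 2's claim that the commitment law is identical on both instances. Time-to-commitment and free or no-progress calls could in principle differ between $A$ and $B$. The statement's hypotheses, that $\{g_1\}$ is supplied at every decision and that all pre-commitment outputs are equal, are exactly what rule this out. I would make the argument explicit by coupling the two runs step by step until commitment. The remaining care is in the $A$-direct branch: I must allow the router to propose $h$ after closing $g_1$, which is why $\Gamma_A$ takes the minimum of the two terms.
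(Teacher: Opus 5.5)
Your proposal is correct and follows essentially the same route as the paper. Both compute the graph-informed optimum on each instance, use the identical pre-commitment history to fix a common commitment probability, bound the excess in each instance by $\Delta_B$ and $\Gamma_A$, and minimize $\max\bigl(x\Delta_B,(1-x)\Gamma_A\bigr)$. Your three-way split $(p,q,r)$, with stopping as its own branch, is a more explicit version of the paper's one-line remark that, since $\lambda Ka<1$, stopping loses at least as much as either excess.
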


\begin{proof}
  A graph-informed router reads the number of consumers from $\mathcal D$ and chooses the helper in $A$ (cost $b+Kd$) and the direct proof in $B$ (cost $a$), both optimal under the assumptions.
  By hypothesis the group supplied and every output observed agree across the instances until the commitment, so a local router reaches that decision on $g_1$ with the same observation history in both, and chooses the Lemma Planner there with the same probability $p$.
  In $B$, that choice costs at least $b+d$, an excess of $\Delta_B$ over the optimum.
  In $A$, choosing the Proof Writer on $g_1$ costs $a$ for $g_1$ and at least $\min\bigl(b+(K-1)d,\,(K-1)a\bigr)$ for the rest, an excess of at least $\Gamma_A$ over $b+Kd$.
  Because $\lambda Ka<1$, stopping loses at least as much as either excess, and among solving runs value differences equal $\lambda$ times cost differences, so the local router's worst-case value loss is at least $\lambda\max\bigl(p\Delta_B,(1-p)\Gamma_A\bigr)$, which is minimized at $p=\Gamma_A/(\Delta_B+\Gamma_A)$ and gives \Cref{eq:strict-separation}.
\end{proof}

For example, with $K=2$, $a=5$, $b=6$, and $d=0$, the helper costs $6$ against $5$ from either consumer's view, yet $6$ against $10$ globally, and every local router loses at least $4\lambda/5$ on one of the two instances.
The separation is therefore uniform over the two instances, and it needs $\lambda>0$: at $\lambda=0$ both classes attain success probability one.

\subsection{Safe filtering}
\label{app:safe-filtering}

\ToolName{} shrinks $\mathcal C(s,U)$ with two hard filters.
The first is the \emph{no identical deterministic repetition} clause of $\textsc{Guards}$ (\Cref{alg:cape-loop}), which the Deterministic Automation branch supports by caching a no-progress result against reruns on an unchanged state.
The second is the subtraction of $\textsc{RootDisc}(\mathcal D_t)$ from the frontier, together with the Lemma Planner's requirement that a proposed helper name root-relevant consumers (\Cref{sec:specialist-agents}), so that work with no path to a root is never scheduled.
This subsection asks when discarding those options is free, that is, when it leaves $V^*_\lambda(s)$ unchanged.
Both rely on the same proviso, which we state as \emph{erasability}: the filtered computation can be deleted from any trajectory that contains it, leaving the subsequent admissible actions, their output laws and prices, the scheduling, and the terminal behavior unchanged, so in particular $V^*_\lambda(s')=V^*_\lambda(s)$ for its successor $s'$.
Equality of the optimal value alone is not enough. A free computation with $\delta_\lambda=0$ can be the only route to the optimal continuation, which is a free step that sets a flag on which a cheaper proof action depends ties with the optimum, yet deleting it strictly lowers the value. Thus, the deletion argument needs the stronger, behavioral condition.

\begin{lemma}[Deterministic no-progress elimination]
  \label{lem:no-progress-elimination}
  Suppose a computation $c$ has a deterministic output and an earlier invocation from the same normalized proof state, environment, and configuration left that state unchanged.
  Then, for $\lambda>0$, $\delta_\lambda(s,U,c)\ge\lambda\kappa_{\min}>0$, if $c$ is paid, so removing it from $\mathcal C(s,U)$ does not change $V^*_\lambda(s)$.
  If $c$ is free, $\delta_\lambda(s,U,c)=0$: it is a maximizer, and removing it preserves the value because erasability makes it a self-loop on everything the continuation can observe. It is removed to keep policies proper.
\end{lemma}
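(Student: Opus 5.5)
The plan is to compute $Q^*_\lambda(s,U,c)$ directly from the Bellman equation~\eqref{eq:metalevel-bellman}, using the two hypotheses in turn. Determinism of $c$ makes the expectation over $Y$ collapse to a single outcome. The earlier no-progress invocation from the same normalized state, environment and configuration then identifies that outcome: by (A1) the output law depends only on $(s,U,c)$, so the output is the one previously observed, and by (A3) the update is deterministic, so it again leaves the proof state unchanged. Under the erasability proviso of this subsection, this means the successor $s'$ is value-equivalent to $s$, i.e.\ $V^*_\lambda(s')=V^*_\lambda(s)$. I would state explicitly that only the memory/ledger entries may change and that these are exactly what erasability declares invisible to the continuation.

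With that in hand, the Bellman equation gives
\[
Q^*_\lambda(s,U,c)=-\lambda\kappa(s,U,c,Y)+V^*_\lambda(s).
\]
The regret~\eqref{eq:metalevel-regret} is therefore $\delta_\lambda(s,U,c)=\lambda\kappa$. The proof then splits into two cases.

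\emph{Paid case.} Assumption (A2) gives $\kappa\ge\kappa_{\min}>0$, so $\delta_\lambda\ge\lambda\kappa_{\min}>0$ for $\lambda>0$. Hence $c$ is never a maximizer in $V^*_\lambda(s)=\max_{c'}Q^*_\lambda(s,U,c')$. Since $\mathcal C(s,U)$ is finite, the maximum is attained by some other element, and removing $c$ leaves the maximum unchanged. To conclude that $V^*_\lambda$ is unchanged at $s$ as a function of the whole process, and not just this single maximum, I would argue that an optimal stationary policy (which exists as in the proof of \Cref{prop:small-lambda}) never selects $c$ at any state where the hypothesis holds. The restricted class therefore still contains an optimal policy.

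\emph{Free case.} Here $\kappa=0$, so $\delta_\lambda=0$ and $c$ is a maximizer. The remaining task is to show that deleting it preserves the value. I would take any trajectory of any policy that uses $c$ and excise those epochs. By erasability, the subsequent admissible actions, output laws, prices, scheduling and terminal behavior are unchanged. The modified policy therefore induces the same distribution over solve events and the same total cost $C$, because $c$ contributed zero cost. So every value attainable with $c$ is attainable without it, and the supremum is unchanged. Finally, I would note that $c$ is a zero-cost self-loop, so repeatedly choosing it yields a non-proper policy. Removing it is thus needed to stay within the proper class rather than merely permitted.

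The main obstacle is the first step. The state $s$ includes memory $\mathcal M$ and ledger $\mathcal L$, so "left that state unchanged" holds only for the normalized proof state, not for $s$ itself, and the identification $V^*_\lambda(s')=V^*_\lambda(s)$ is not automatic. The paper's own remark shows that equality of optimal value alone does not suffice. I would therefore lean explicitly on the erasability proviso to bridge this gap, treating it as the assumption that the updated memory entries do not affect any quantity the continuation observes.
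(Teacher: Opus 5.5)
Your proposal is correct and follows essentially the same route as the paper: determinism together with the erasability proviso gives $Q^*_\lambda(s,U,c)=-\lambda\kappa+V^*_\lambda(s)$. From there, (A2) yields the strict regret bound in the paid case, and erasability justifies deleting $c$ in the free case.

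Your extra global-policy and excision arguments only make explicit what the paper leaves implicit. One caution: the stationary-policy step borrows the finite-state hypothesis of \Cref{prop:small-lambda}, which this lemma does not assume. The excision argument via erasability covers the paid case equally well without that hypothesis, and it strictly improves value there.
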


\begin{proof}
  By determinism, the repeated invocation again leaves the proof state unchanged, so under the proviso $Q^*_\lambda(s,U,c)=-\lambda\kappa+V^*_\lambda(s)$. For a paid $c$, (A2) gives $\kappa\ge\kappa_{\min}$ and hence $Q^*_\lambda(s,U,c)\le V^*_\lambda(s)-\lambda\kappa_{\min}$; for a free $c$ the same identity gives $Q^*_\lambda(s,U,c)=V^*_\lambda(s)$.
  Thus a paid $c$ is never maximal, and the maximum in \Cref{eq:metalevel-bellman} is attained without it. A free $c$ is maximal; erasability is what lets the optimal continuation be executed in its absence, so dropping it leaves $V^*_\lambda(s)$ unchanged.
\end{proof}

\begin{lemma}[Root-disconnected work elimination]
  \label{lem:disconnected-work-elimination}
  A proposed declaration with no dependency path to an active consumer in any root's proof region does not occur in the dependency closure returned from the \emph{current} graph.
  Assume in addition that such work can be postponed until a consumer is attached, at no greater cost and with the same success distribution. Then, under the proviso, a paid computation that works only on such a declaration has $\delta_\lambda\ge\lambda\kappa_{\min}$, so deferring it does not change $V^*_\lambda(s)$.
\end{lemma}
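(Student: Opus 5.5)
The proof has two parts, mirroring the two sentences of \Cref{lem:disconnected-work-elimination}: a structural claim about the dependency closure, and a value claim that reuses the argument of \Cref{lem:no-progress-elimination}.

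\emph{Structural part.} The dependency closure returned from $\mathcal D_t$ is, by definition, the set of declarations reachable backwards along edges $u\rightarrow v$ from the roots $L_\star$, through active consumers in their proof regions. If a declaration $x$ has no dependency path to an active consumer in any root's proof region, then no root reaches $x$ by backward traversal. So $x$ is not in the closure, and neither is anything reachable only through $x$. This is just the definition of reachability in the DAG, and I would state it by contraposition: if $x$ were in the closure, the traversal that added it would be exactly such a path. The same fact tells us that the solved predicate, which requires every root and every generated dependency of its proof to be stable, never consults $\sigma(x)$. Work confined to $x$ therefore cannot by itself move the state toward a solved state under the current graph.

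\emph{Value part.} Fix a paid computation $c$ that works only on such a declaration. I would compare $c$ with the continuation that defers it. Under the postponement assumption and the erasability proviso, the immediate successor $s'$ of $c$ is value-equivalent to $s$: $V^*_\lambda(s')=V^*_\lambda(s)$. The reason is that anything $c$ produced can be reproduced, once a consumer is attached, at no greater cost and with the same success distribution. Hence
\[
Q^*_\lambda(s,U,c)=-\lambda\,\mathbb E_Y[\kappa]+V^*_\lambda(s)\le V^*_\lambda(s)-\lambda\kappa_{\min},
\]
using (A2) for paid computations. This gives $\delta_\lambda(s,U,c)\ge\lambda\kappa_{\min}>0$ for $\lambda>0$. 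A computation with strictly positive regret is never a maximizer in \Cref{eq:metalevel-bellman}, and $\mathcal C(s,U)$ is finite, so the maximum is still attained after removing it. Thus deferring $c$ leaves $V^*_\lambda(s)$ unchanged, exactly as in the paid case of \Cref{lem:no-progress-elimination}.

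\emph{Main obstacle.} The hard step is justifying $\mathbb E_Y[V^*_\lambda(s')]\le V^*_\lambda(s)$ rigorously. The output of $c$ is stochastic, and it can change the state: it might make $x$ provisional, or supply a fact a later consumer uses. The inequality therefore does not follow from root-disconnection alone. My approach is a coupling argument. Take any policy from $s'$ and build a policy from $s$ that skips $c$ and, at the first moment a consumer of $x$ is attached, performs the postponed work. By assumption this postponed work has the same success distribution and no greater cost, and erasability guarantees that the intervening admissible actions, their output laws, and their prices are unchanged. The constructed policy from $s$ matches the value of the original policy from $s'$ minus nothing, while the path through $c$ additionally pays $\kappa\ge\kappa_{\min}$. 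Taking suprema gives $V^*_\lambda(s)\ge\mathbb E_Y[V^*_\lambda(s')]$, which is what the value part needs.
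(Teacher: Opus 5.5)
Your two-part structure is the paper's. The closure claim holds by definition: every lemma in a returned closure has a path through its consumers to a root. The value claim is the paid case of \Cref{lem:no-progress-elimination}, once the successor of $c$ is value-equivalent to $s$. The paper takes that value-equivalence directly from the proviso, because erasability (\Cref{app:safe-filtering}) is defined to include $V^*_\lambda(s')=V^*_\lambda(s)$. With it, $Q^*_\lambda(s,U,c)=-\lambda\,\mathbb E_Y[\kappa]+V^*_\lambda(s)\le V^*_\lambda(s)-\lambda\kappa_{\min}$ follows at once. Postponement plays a different role in the paper. Root-disconnection is a property of the current graph only, and a later plan may attach a consumer, so the assumption is what makes \emph{deferring} the work, rather than discarding it, harmless.

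The step you call the main obstacle, and the coupling you give for it, do not work. Your policy from $s$ that skips $c$ still has to pay for the postponed work whenever a consumer gets attached. Its value is therefore $V^{\pi'}_\lambda(s')-\lambda\,\mathbb E[\kappa_{\mathrm{post}}\mathbf 1_{\mathrm{attach}}]$, not $V^{\pi'}_\lambda(s')$ ``minus nothing''. Taking suprema gives only $V^*_\lambda(s)\ge\mathbb E_Y[V^*_\lambda(s')]-\lambda\,\mathbb E[\kappa]$, that is, $\delta_\lambda\ge0$.

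Postponement alone cannot repair this. Suppose every continuation attaches a consumer of $x$ and the postponed work costs exactly $\kappa$. Then doing $c$ now ties with doing it later, and $\delta_\lambda=0$.

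For the same reason, your stated reason for value-equivalence runs the wrong way. Reproducibility at no greater cost only bounds what $c$'s product is worth to the continuation, by the postponed cost. It does not make that product worthless, which is exactly the extra content the proviso supplies. Cite the erasability clause $V^*_\lambda(s')=V^*_\lambda(s)$ directly and drop the coupling; with that change your argument is complete and coincides with the paper's.
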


\begin{proof}
  Every generated lemma used by a returned root proof has, by definition, a path through its consumers to that root, so a lemma with no such path is absent from every closure and cannot affect whether a state is solved.
  A later plan may attach a consumer, which is why the postponement assumption is needed where it makes deferral. Under the proviso the computation then leaves $V^*_\lambda$ unchanged while costing at least $\kappa_{\min}$, and the argument of \Cref{lem:no-progress-elimination} applies.
\end{proof}

Consumer-first admission is therefore a search restriction justified by the postponement assumption. The remaining semantic choices are not certified as safe and are left to the router with bounded trials.

\subsection{Shared-lemma cost separation}
\label{app:shared-lemma-separation}

The previous two subsections compare routers on the same shared graph.
The next result compares representations: a shared lemma-dependency graph against isolated proof hypergraphs.

\begin{proposition}[Shared-lemma cost separation]
  \label{prop:shared-lemma-separation}
  Fix an abstract instance in which $K$ consumers each need a fact supplied only by an absent helper $h$, producing $h$ costs $c_h$, and consumer $i$ then closes at residual cost $d_i$. Compare two \emph{stipulated} strategies: an isolated-graph prover that cannot transfer generated artifacts, and a graph prover that proves $h$ once. Their costs are
  \begin{equation}
    C_{\mathrm{local}}
    =Kc_h+\sum_{i=1}^{K}d_i,
    \qquad
    C_{\mathrm{graph}}
    =c_h+\sum_{i=1}^{K}d_i.
    \label{eq:shared-lemma-separation}
  \end{equation}
  When $c_h>0$ and $d_i=0$ for every consumer, $C_{\mathrm{local}}/C_{\mathrm{graph}}=K$; with $c_h=0$ both totals equal $\sum_i d_i$, so the ratio is $1$, or undefined when the residuals vanish too.
\end{proposition}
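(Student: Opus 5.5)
The plan is to make the two cost expressions in \Cref{eq:shared-lemma-separation} follow directly from the stipulated behaviour of each strategy, and then read off the ratio. Because both strategies are stipulated rather than optimized, no Bellman argument is involved. The proof is an accounting exercise over the lemma-dependency graph $\mathcal D$ on one side and the isolated hypergraphs $\mathcal H_{\ell}$ on the other.

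\textbf{Isolated-graph prover.} Each consumer $g_i$ lives in its own hypergraph with no cross-declaration edges. By hypothesis the needed fact is supplied only by $h$, and generated artifacts cannot be transferred between consumers. So every successful proof of $g_i$ must construct $h$ inside its own region, at cost $c_h$, and then pay the residual $d_i$. Summing over $i=1,\dots,K$ gives $C_{\mathrm{local}}=Kc_h+\sum_i d_i$. The one point to state explicitly is that "cannot transfer" rules out any amortization, so the $K$ copies of $h$ are paid separately.

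\textbf{Graph prover.} The prover adds $h$ once as a node of $\mathcal D$ with edges $h\to g_i$ for every $i$. It proves $h$ once at cost $c_h$; after that $h$ is stable and is available to every consumer through the dependency closure. Each consumer then closes at residual cost $d_i$, which gives $C_{\mathrm{graph}}=c_h+\sum_i d_i$. Acyclicity and the stability bookkeeping of \Cref{sec:problem-formulation} ensure that reuse triggers no re-proof. Soundness of the reuse is taken from kernel checking and is not part of the cost claim.

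\textbf{Ratio cases.} If $d_i=0$ for all $i$ and $c_h>0$, the ratio is $Kc_h/c_h=K$. If $c_h=0$, both totals equal $\sum_i d_i$, so the ratio is $1$ when that sum is positive and $0/0$, hence undefined, when it vanishes.

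The only real obstacle is definitional rather than computational. One has to make clear that the isolated prover's cost is forced by the stipulation and is not a lower bound over all local strategies. A local strategy could in principle avoid $h$, but the hypothesis "supplied only by $h$" excludes that. I would state this dependence on the stipulation explicitly instead of claiming optimality for either strategy.
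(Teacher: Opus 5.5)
Your proposal is correct and matches the paper's proof: both treat the result as direct accounting from the stipulated costs, with the isolated prover paying $c_h$ inside each of the $K$ graphs and the graph prover proving $h$ once and reusing the checked node at no further generation cost, and then read off the ratio. Your explicit treatment of the $c_h=0$ case and your caveat that the costs are stipulated rather than optimal agree with the paper's framing.
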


\begin{proof}
  The accounting is immediate from the stipulated costs.
  An isolated-graph prover that cannot transfer generated artifacts pays $c_h$ inside each graph.
  The graph-informed prover proves $h$ once, records one node, and connects it to all consumers. Deterministic use of the checked declaration incurs no additional model-generation cost.
  Substitution in \Cref{eq:shared-lemma-separation} gives the stated ratio when the residual costs vanish.
\end{proof}

Consequently, for any external cap $B$ satisfying
\begin{equation}
  c_h+\sum_i d_i
  \le B
  <Kc_h+\sum_i d_i,
  \label{eq:shared-lemma-threshold}
\end{equation}
the stipulated graph strategy completes within the cap while the stipulated isolated strategy does not. This compares the two strategies, not every strategy each representation admits.
Failed helper attempts and rollback appear as the realized term $r_h$ in \Cref{eq:sharing-costs}.

\paragraph{Lemma planning as nonlocal value of computation.}
Measured against $V^*_\lambda$, no purchase can show a gain: $Q^*_\lambda(s,U,c_{\mathrm{LP}})-V^*_\lambda(s)=-\delta_\lambda(s,U,c_{\mathrm{LP}})\le0$, since the optimum already includes the option of buying the helper.
The amortization argument is therefore stated against a \emph{direct-only} reference policy $\pi_{\mathrm{dir}}$, which never proposes a helper and proves every consumer separately.
For two strategies the value difference is $V_{\mathrm{share}}-V_{\mathrm{dir}}=(p_{\mathrm{share}}-p_{\mathrm{dir}})+\lambda\bigl(\mathbb E[C_{\mathrm{dir}}]-\mathbb E[C_{\mathrm{share}}]\bigr)$, so a cost saving settles the comparison only when the two succeed equally often.
Assume therefore that sharing does not change the success probability, write $\Delta J_i$ for the expected cost that $h$ saves consumer $i$ relative to $\pi_{\mathrm{dir}}$, and treat the consumers as otherwise independent. Then proposing and proving $h$ improves on $\pi_{\mathrm{dir}}$ by roughly
\begin{equation}
  \lambda\Bigl(\sum_{i\in\mathrm{Cons}(h)}\Delta J_i-c_h-\mathbb E[r_h]\Bigr),
  \label{eq:lemma-voc}
\end{equation}
where $r_h$ is the realized cost of failed helper attempts and rollback.
The equal-success assumption is a real restriction rather than a formality: a helper that turns out false, too hard, or abandoned lowers $p_{\mathrm{share}}$, and $r_h$ prices those attempts without pricing their effect on whether the task is solved at all. A strategy that spends $1$ and succeeds half the time beats one that spends $10$ and always succeeds on cost, yet their values are $\tfrac12-\lambda$ and $1-10\lambda$, so the cheap strategy loses on value exactly when $\lambda<1/18$ and wins above it.
For instance, in the $K=2$, $a=5$, $b=6$, $d=0$ instance of \Cref{app:policy-class} with $\lambda=0.1$, this reads $0.1(10-6)=0.4$ against the direct-only policy, while the advantage over $V^*_\lambda$ is zero.
One granularity caveat: the implemented Lemma Planner only proposes and elaborates $h$, returning control to the router (\Cref{sec:specialist-agents}), so its successor value still carries the cost of proving $h$. \Cref{eq:lemma-voc} prices the macro-action that includes that proof, and $c_h$ must not be charged twice.
A lemma can thus be a losing purchase from each consumer's view and a winning one globally, as in the $K=2$ example of \Cref{app:policy-class}. This is why the router's evidence packet includes consumers and conditional descendants.

\subsection{Kernel-relative soundness}
\label{app:kernel-soundness}

\begin{proposition}[Kernel-relative soundness]
  \label{prop:kernel-soundness}
  Assume that (i) the declared Lean environment $\Gamma_0$ is accepted, (ii) \textup{\ToolName{}} promotes a generated lemma only after Lean accepts its proof, (iii) every generated dependency of root $\ell\in L_\star$ is stable, and (iv) no prohibited placeholder or unapproved axiom occurs.
  If \textup{\ToolName{}} returns a stable proof of $\ell$, then $\Gamma_0\vdash\ell$.
\end{proposition}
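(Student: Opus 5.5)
The argument reduces soundness to Lean's kernel by induction over the lemma-dependency DAG $\mathcal D$. Let $\mathrm{cl}(\ell)$ be the dependency closure of the root $\ell$: the set of generated declarations reachable from $\ell$ along edges of $\mathcal D$, together with $\ell$ itself. By (iii), every member of $\mathrm{cl}(\ell)$ is stable. Because the Lemma Planner admits a candidate only if acyclicity is preserved, $\mathrm{cl}(\ell)$ is a finite DAG. I fix a topological order $h_1,\ldots,h_m=\ell$ of it, chosen so that each $h_i$ uses only the $h_j$ with $j<i$. Let $\Gamma_i=\Gamma_0,h_1,\ldots,h_i$ be the environment obtained by appending these declarations in order.

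The induction claim is that each $\Gamma_i$ is a well-formed environment accepted by the kernel and that $\Gamma_0\vdash h_i$.

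\emph{Base case.} For $i=0$ this is exactly hypothesis (i).

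\emph{Inductive step.} Since $h_i$ is stable, (ii) says Lean accepted its proof term. I take this to mean acceptance in an environment containing only $\Gamma_0$ and declarations that $h_i$ actually uses. Those declarations lie in $\mathrm{cl}(\ell)$ and precede $h_i$ in the order, so by the inductive hypothesis each is derivable from $\Gamma_0$. Hypothesis (iv) rules out any \texttt{sorry} or unapproved axiom in the proof term, so its only non-$\Gamma_0$ constants are the earlier $h_j$. Each such constant can be unfolded, i.e.\ replaced by its own kernel-checked proof, giving $\Gamma_0\vdash h_i$. Equivalently, derivability is monotone under conservative definitional extension, and adding a theorem whose proof has been checked is such an extension. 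Taking $i=m$ yields $\Gamma_0\vdash\ell$.

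The main obstacle is making precise the gap between "stable" in the graph bookkeeping and "accepted in the final environment." A lemma may have been checked while depending on provisional lemmas that were later retracted by rollback and replaced. I would handle this in two steps. First, I use the rule from Section 3.1 that a lemma is stable only once all its prerequisites are stable. Second, I read (ii) as requiring re-elaboration whenever a dependency's statement changes, which is the revalidation performed on rollback in Appendix~\ref{app:sharing-rollback}. Together these ensure that the statement each $h_j$ was checked against is the one present in the final closure. If this reading cannot be justified from the implementation, the fallback is to strengthen the statement slightly: replay the entire closure through the kernel once at return time. That makes the induction immediate, because a single final kernel check of $\Gamma_m$ certifies every step at once.
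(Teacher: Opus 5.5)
Your proposal is correct and matches the paper's argument: topologically order the generated dependency DAG, show by induction that each stable declaration has a kernel-accepted proof using only $\Gamma_0$ and earlier stable declarations, and then apply this to $\ell$. Your discussion of rollback and re-elaboration spells out what the paper builds into hypotheses (ii)--(iii) and its promotion rule (a declaration becomes stable only when its own proof is kernel-accepted and all its dependencies are stable), so it adds detail rather than a different route.
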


\begin{proof}
  Topologically order the generated dependency DAG.
  Induction over this order shows that each stable generated declaration has a kernel-accepted proof using only $\Gamma_0$ and earlier stable declarations.
  Applying the argument to the accepted proof of $\ell$ yields the result.
\end{proof}

%% file: appendix/operational_details.tex
\section{Operational Details}
\label{app:operational}

\subsection{Two-level proof graphs}
\label{app:proof-graphs}

This subsection gives the full definition of the two-level proof graphs summarized in \Cref{sec:problem-formulation} and drawn in \Cref{fig:coupled-proof-graph}.

\textbf{Inter-lemma dependency graph.}
Across declarations, the lemma-dependency graph $\mathcal D=(L,E_{\mathcal D})$ is a directed acyclic graph that carries an edge $u\rightarrow v$ whenever the current proof or plan for $v$ uses helper lemma $u$.
The targets $L_\star\subseteq L$ are its roots, so helper-lemma work is useful only through a dependency path to some root.

\textbf{Intra-lemma AND/OR hypergraph.}
Within every target theorem $\ell_{\star,i}\in L_\star$ or generated helper lemma $\ell$, a lazily constructed AND/OR proof hypergraph $\mathcal H_\ell=(A_\ell,O_\ell,E_\ell)$ is a tree rooted at the lemma's initial goal.
\emph{AND nodes} $A_\ell$ are normalized Lean goal states, all of whose subgoals must be proved; \emph{OR nodes} $O_\ell$ are candidate proof steps (a tactic or tactic sequence), siblings being alternatives for the same goal.
Executing a step $o$ attaches the verified hyperedge $e=(o,\{g_1,\ldots,g_k\})\in E_\ell$ recording exactly the subgoals Lean returns, with $k=0$ when the step closes its goal.
A goal keeps at most one valid alternative at a time; an alternative discarded by the Recovery Reviewer (\Cref{sec:specialist-agents}) is retained as failure evidence while a new one is added.

\textbf{Artifact status.}
Each artifact is labeled \emph{open}, \emph{provisional}, \emph{conditionally solved}, \emph{stable}, or \emph{rejected}, and a goal with no admissible action is \emph{blocked}.
Only a Lean-accepted declaration whose generated dependencies are stable can itself become stable, and conditional plans guide search but never certify a root.

\textbf{Nonlocal effects.}
This structure makes the two nonlocal effects of our setting explicit: one stable helper lemma serving several consumers, and a failed provisional helper lemma invalidating exactly its conditional descendants, so a failure rolls back to the nearest safe point instead of discarding whole attempts.

\subsection{End-to-end algorithm}
\label{app:end-to-end-algorithm}

\Cref{alg:cape-loop} states the complete control loop of \ToolName{} (\Cref{sec:architecture-control-loop}): each epoch builds the frontier symbolically, lets the router group it and pick one group, purchases one routed action, and writes its verified result back, with initialization, evidence construction, decision validation, the two write-back paths, admissibility failure, and termination made explicit.

\begin{algorithm}[h]
  \caption{\ToolName{}: metalevel decision-making under cost on the two-level proof graph}
  \label{alg:cape-loop}
  \small
  \begin{algorithmic}[1]
    \Require environment $\Gamma_0$; targets $L_\star=\{\ell_1,\ldots,\ell_n\}$; specialist agents $O=\{\mathrm{DA},\mathrm{PW},\mathrm{LP},\mathrm{RR}\}$ with contracts $\{\Theta_o\}_{o\in O}$; horizon $t_{\max}$ (also stopped by the run's wall-clock and expenditure caps)
    \State $\mathcal D_0\gets(L_\star,\emptyset)$;\quad $\mathcal H_{\ell,0}\gets\{\mathrm{root}(\ell)\}$ and $\sigma_0(\ell)\gets\textit{open}$ for each $\ell\in L_\star$;\quad $\mathcal M_0,\mathcal L_0\gets\emptyset$
    \State $s_0\gets(\mathfrak G_0,\mathcal F_0,\sigma_0,\mathcal M_0,\mathcal L_0)$;\quad $\rho_0\gets$ initial routing rules \Comment{$\rho_t$ is the router's own variable, not part of $s_t$}
    \For{$t=0,1,2,\ldots$ \textbf{while} $s_t\notin\mathcal G$ \textbf{and} $t<t_{\max}$}
    \State $\sigma_t\gets\textsc{Propagate}(\mathfrak G_t,\sigma_t)$ \Comment{status closure over $\mathcal D_t$}
    \State $\mathcal F_t\gets\{g\in\textsc{Open}(\mathfrak G_t):\textsc{Prereq}(g)\subseteq\textsc{Elaborated}(\sigma_t)\}\setminus\textsc{RootDisc}(\mathcal D_t)$ \Comment{selection: incremental Kahn pass}
    \If{$\mathcal F_t=\emptyset$} \Return $\textsc{Report}(\sigma_t,\mathcal M_t,\mathcal L_t)$ \EndIf
    \State $U_t\gets\textsc{Sched}(s_t)=\mathcal F_t$ \Comment{the whole dependency-ready frontier, handed over in full}
    \State $\Gamma_t\gets\textsc{Cluster}(U_t)$;\quad $\mathcal C(s_t,U_t)\gets\{(G,o,\theta):G\in\Gamma_t,\ o\in O,\ \theta\in\Theta_o,\ \textsc{Guards}(G,o,\theta,s_t)=\top\}$ \Comment{router-side grouping, no LLM call}
    \If{$\mathcal C(s_t,U_t)=\emptyset$} $\sigma_t(U_t)\gets\textit{blocked}$;\ \textbf{continue} \EndIf
    \If{some $G\in\Gamma_t$ has $\textsc{DefNested}(G)\ \wedge\ (G,\mathrm{DA},\theta_{\mathrm{DA}})\in\mathcal C(s_t,U_t)$} \Comment{static analysis: no model call}
    \State $(G_t,o_t,\theta_t,j_t)\gets(G,\mathrm{DA},\theta_{\mathrm{DA}},\varnothing)$ \Comment{definition-nested goals go straight to automation}
    \Else
    \State $\mathrm{ev}(s_t,U_t)\gets\bigl\langle\,U_t,\ N_{\mathfrak G_t}(U_t),\ \mathcal M_t{\restriction}_{U_t},\ X_t(\cdot),\ \mathcal L_t,\ \{\Theta_o\}_{o \in O}\,\bigr\rangle$ \Comment{environmental evidence; the router adds its own rules $\rho_t$} \Comment{evidence packet; $\rho_t$: routing rules}
    \For{$r=1,\ldots,r_{\max}$} \Comment{action: metalevel decision under cost}
    \State $(G_t,o_t,\theta_t,j_t)\gets\textsc{Router}\bigl(\mathrm{ev}(s_t,U_t),\Gamma_t,\rho_t\bigr)$;\quad charge the call to $H$ \Comment{pick a group, an agent, a configuration; $j_t$: preliminary analysis}
    \State \textbf{break if} $(G_t,o_t,\theta_t)\in\mathcal C(s_t,U_t)$ \Comment{malformed or guard-violating decisions are rejected}
    \EndFor
    \If{no admissible decision after $r_{\max}$ retries} $(G_t,o_t,\theta_t,j_t)\gets\textsc{Default}(\mathcal C(s_t,U_t))$ \Comment{finite deliberation per epoch}\EndIf
    \EndIf
    \State $Y_t\gets\textsc{Exec}_{o_t}(G_t;\theta_t,j_t)$ \Comment{environment response on the chosen group; execution cost $\kappa_t$, deliberation charged to $H$}
    \If{$o_t=\mathrm{DA}$} \Comment{automation: Lean-side search, no model call}
    \State attach the verified steps of $Y_t$ to $\mathcal H_{\ell,t}$; cache a no-progress result against reruns on the unchanged state
    \ElsIf{$o_t=\mathrm{PW}$} \Comment{proof writer: iterative refinement within $\theta_t$}
    \State attach the verified steps of $Y_t$, keeping a partial proof when the refinement budget is exhausted
    \ElsIf{$o_t=\mathrm{LP}$} \Comment{lemma planner: elaborate, then back to the router}
    \State $\mathcal D_{t+1}\gets\mathcal D_t\cup\textsc{Elab}_{\circlearrowleft}(Y_t)$;
    \State new obligations $\to\mathcal F$
    \ElsIf{$o_t=\mathrm{RR}$} \Comment{recovery: nearest-safe-point backtracking}
    \If{$Y_t$ refutes some $g\in G_t$} reject that $g$ alone \textbf{else} switch path when judged no less provable and cheaper to finish \EndIf
    \If{$Y_t$ retracts artifact $u$} reopen $\mathrm{CondDesc}_t(u)$; retain unaffected verified subgraphs \EndIf
    \EndIf
    \State $\rho_{t+1}\gets\textsc{Amend}(\rho_t,U_t,c_t,Y_t)$ if the epoch was routed, else $\rho_t$ \Comment{rules evolve on the outcome just observed}
    \State $s_{t+1}\gets T(s_t,U_t,c_t,Y_t)$ \Comment{attach verified (hyper)edges; update $\sigma,\mathcal M$; append $\kappa_t$ to $\mathcal L$}
    \ForAll{$v$ proposed for promotion}
    \State $\sigma_{t+1}(v)\gets\textit{stable}$ \textbf{iff} $\textsc{Kernel}(v)=\top\ \wedge\ \textsc{Deps}(v)\subseteq\textsc{Stable}(\sigma_{t+1})$
    \EndFor
    \EndFor
    \State \Return $\{\ell\in L_\star:\sigma(\ell)=\textit{stable}\}$ with dependency closures;\ $\textsc{Report}(\sigma,\mathcal M,\mathcal L)$ for the rest
  \end{algorithmic}
\end{algorithm}

The operators are as follows.
$\textsc{Propagate}$ closes artifact status over $\mathcal D_t$: a stabilized helper lemma releases its consumers, a rejected or changed provisional artifact marks its conditional descendants invalidated.
$\textsc{Open}(\mathfrak G_t)$ is the set of open AND nodes; $\textsc{Prereq}(g)$ is the set of active prerequisites in $\mathcal D_t$; $\textsc{Elaborated}(\sigma_t)$ is the set of declarations with an elaborated statement, stable or provisional, against which a proof can be constructed, while $\textsc{Stable}(\sigma_t)$ is the smaller set that may be promoted; $\textsc{RootDisc}(\mathcal D_t)$ is the set of lemmas with no dependency path to a root, which are discarded; $\textsc{Sched}$ is the structural stage of \Cref{eq:controller-state-action}: it maintains the frontier and hands \emph{all} of it to the agentic router, $U_t=\mathcal F_t$. Grouping is the router's: $\textsc{Cluster}$ partitions $U_t$ into $\Gamma_t$ by embedding $k$-means (\Cref{app:goal-clustering}), the static-analysis check sends definition-nested groups straight to Deterministic Automation, and the router then chooses which group $G_t\in\Gamma_t$ to buy a computation for; $T$ is the deterministic update of \Cref{eq:purchased-computation}.
$\textsc{DefNested}$ is the static-analysis check: a goal whose obstruction is only unfolded definitions is dispatched to Deterministic Automation without a routing call, but only when that action itself passes the guards---after a no-progress automation attempt on an unchanged goal the guards exclude it, and the goal is routed instead--- and $\textsc{Amend}$ is the router's revision of its own rules $\rho_t$ after observing an outcome.
$\textsc{Guards}$ evaluates the mechanically checkable admissibility conditions (contract membership, acyclicity, available identifiers, no identical deterministic repetition); $r_{\max}$ bounds the router calls in one epoch, so deliberation is finite even when decisions are repeatedly malformed. $\textsc{Default}$ is the deterministic fallback used when that bound is reached: the admissible triple with the lowest contract cost bound (\ie Deterministic Automation whenever it is admissible) with groups ordered by $d_\sigma$ then statement hash, agents by a fixed order, and an empty analysis $j$.
Everything the chosen specialist sees is part of the decision: besides $j_t$, the excerpt of the router's conversation that the invocation carries is a field of $\theta_t$, and the update records it in $\mathcal M_{t+1}$ afterwards. Deliberation charges go to $H$, so $T$ stays a function of its four arguments.
One pass of the loop is a \emph{decision epoch}: it recomputes the frontier, reclusters it, and purchases exactly one computation on one group.
A \emph{controller iteration} is the reporting unit the case studies count (\Cref{app:case-study}): a boundary in the log that groups consecutive epochs, imposing no restriction on which actions they may take.
In $\mathrm{ev}(s_t,U_t)$, $N_{\mathfrak G_t}(U_t)$ is the graph neighborhood of $U_t$ (prerequisites, consumers, blocking relations, conditional descendants), $\mathcal M_t{\restriction}_{U_t}$ is the failure memory restricted to $U_t$, and $X_t(\cdot)$ is the at-risk spend of \Cref{eq:rollback-exposure}.
$O$ abbreviates the four specialist agents; $\textsc{Elab}_{\circlearrowleft}(Y_t)$ keeps only proposed lemmas that elaborate, preserve acyclicity, and survive random testing and the structural check; $\textsc{Kernel}(v)$ is Lean kernel acceptance of $v$'s proof and $\textsc{Deps}(v)$ are its generated dependent lemmas.
$\textsc{Report}$ emits the structured failure and diagnostics.

\begin{algorithm}[h]
  \caption{Dependency-aware scheduling: incremental Kahn maintenance of the frontier}
  \label{alg:scheduling}
  \begin{algorithmic}[1]
    \Statex \emph{Maintained state:} $d(\ell)=\bigl|\{u:(u\rightarrow\ell)\in E_{\mathcal D,t},\ u\ \text{not elaborated}\}\bigr|$ and $d_\sigma(\ell)=\bigl|\{u:(u\rightarrow\ell)\in E_{\mathcal D,t},\ \sigma(u)\neq\textit{stable}\}\bigr|$ for every $\ell\in L_t$;\quad $\mathcal F=\{g\in\textsc{Open}(\mathfrak G):d(\mathrm{decl}(g))=0\}\setminus\textsc{RootDisc}(\mathcal D)$
    \Procedure{OnEdgeAdded}{$u\rightarrow\ell$}
    \If{$u$ not elaborated} $d(\ell)\gets d(\ell)+1$; remove $\ell$'s goals from $\mathcal F$ \EndIf;\quad \textbf{if} $\sigma(u)\neq\textit{stable}$ \textbf{then} $d_\sigma(\ell)\gets d_\sigma(\ell)+1$
    \EndProcedure
    \Procedure{OnElaborated}{$u$}
    \ForAll{$(u\rightarrow\ell)\in E_{\mathcal D,t}$} $d(\ell)\gets d(\ell)-1$; \textbf{if} $d(\ell)=0$ \textbf{then} add $\ell$'s open goals to $\mathcal F$ \EndFor
    \EndProcedure
    \Procedure{OnStabilized}{$u$}
    \ForAll{$(u\rightarrow\ell)\in E_{\mathcal D,t}$} $d_\sigma(\ell)\gets d_\sigma(\ell)-1$ \Comment{$d_\sigma(\ell)=0$: $\ell$ may be promoted} \EndFor
    \EndProcedure
    \Procedure{OnRetracted}{$u$}
    \State reopen $\mathrm{CondDesc}(u)$, retaining unaffected verified subgraphs; restore $d(\cdot)$ and $d_\sigma(\cdot)$ on the reopened region; update $\mathcal F$
    \EndProcedure
    \Procedure{Sched}{$s$}
    \State \Return $\mathcal F$ in full, ordered by $d_\sigma$ \Comment{grouping and the static check are the router's (\Cref{sec:agent-routing})}
    \EndProcedure
  \end{algorithmic}
\end{algorithm}

\subsection{Dependency-aware scheduling}
\label{app:scheduling}

The scheduler discharges the selection half of the metalevel symbolically: it maintains the dependency-ready frontier and hands it to the router \emph{in full}, so no model call is spent on selection.
\Cref{alg:scheduling} states the maintenance precisely.
The frontier is kept by an \emph{incremental Kahn pass}~\cite{kahn1962topological} over the dynamic dependency graph $\mathcal D_t$: for every lemma/theorem $\ell$, the scheduler keeps the \emph{unelaborated in-degree} $d(\ell)$ and the frontier $\mathcal F_t$ consists exactly of the open AND nodes of declarations with $d(\ell)=0$, minus root-disconnected work.
A second counter, the \emph{unstable in-degree} $d_\sigma(\ell)$, counts every prerequisite that is not yet stable, elaborated or not: it does not gate the frontier, but $d_\sigma(\ell)=0$ marks the declarations whose proofs can be promoted, and the scheduler orders the frontier by it, so unconditional work is preferred to conditional work.
Kahn's algorithm processes zero-in-degree nodes of a static graph but here the graph changes as lemmas are added, elaborated, stabilized, and retracted, so the counters are updated per event at cost proportional to the affected out-degree. Elaborating $u$ decrements $d(\ell)$ for each consumer $\ell$ and releases those reaching zero; stabilizing $u$ decrements $d_\sigma(\ell)$ and enables promotion; retracting $u$ reopens exactly its conditional descendants and restores their counters rather than re-sorting the graph.
Two properties of the frontier follow from its definition and carry the scheduler's cost rationale: work with no dependency path to a root never enters it, and no goal is opened beneath a prerequisite that has not even been elaborated. Work beneath a \emph{provisional} prerequisite is admitted and its exposure is what \Cref{eq:rollback-exposure} measures and the Recovery Reviewer unwinds.

Groups that reach the Lemma Planner are additionally validated for compatibility: a valid group requires compatible environments and similarity in normalized targets, hypotheses, source definitions, or recursion structure.
This restriction prevents surface similarity from becoming an unchecked dependency.

\subsection{Goal clustering by semantic embedding}
\label{app:goal-clustering}

Batching reduces the cost of router and specialist invocations: a single call jointly analyzes several related goals, and proposals for shared helpers draw evidence from multiple potential consumers. These savings, however, depend on grouping goals that are genuinely related.
The router forms such groups, on the frontier the scheduler hands it, by applying $k$-means clustering to semantic embeddings of the normalized goal statements. DreamProver~\cite{zhang2026dreamprover} uses the similar clustering method but they use it to evolve lemmas.

\textbf{Embedding and similarity.}
Each open goal $g \in \mathcal F_t$ is rendered as its normalized statement using the tactic state and embedded with the Lean-statement embedding model of Lean Finder~\cite{lu2026leanfinder}, which is trained on formal Lean statements.
Writing $\phi(g) \in \mathbb R^d$ for the raw embedding, we store the unit-normalized vector $x_g = \phi(g)/\lVert\phi(g)\rVert_2$ in an index keyed by the statement hash, so embeddings are computed once per normalized statement and reused as the frontier evolves. Similarity between goals is cosine similarity: $\mathrm{sim}(g,g') = \langle x_g, x_{g'}\rangle$.

\textbf{$k$-means on the unit sphere.}
Given the current frontier embeddings $\{x_g\}_{g\in\mathcal F_t}$, spherical $k$-means seeks unit centroids $\mu_1,\ldots,\mu_k$, with $\lVert\mu_j\rVert_2=1$, and an assignment into clusters $C_1,\ldots,C_k$ minimizing
\begin{equation}
  \sum_{j=1}^{k}\ \sum_{g\in C_j} \bigl\lVert x_g-\mu_j \bigr\rVert_2^2 ,
  \label{eq:kmeans-objective}
\end{equation}
Because both $x_g$ and $\mu_j$ lie on the unit sphere and an unconstrained mean centroid has $\lVert x_g-\mu_j\rVert_2^2 = 1+\lVert\mu_j\rVert_2^2-2\langle x_g,\mu_j\rangle$, we have $\lVert x_g-\mu_j\rVert_2^2 = 2 - 2\langle x_g,\mu_j\rangle$, so minimizing \Cref{eq:kmeans-objective} is equivalent to maximizing within-cluster cosine similarity, under the similarity the index already uses.
We run Lloyd's iteration: initialize centroids by $k$-means\texttt{++} seeding from a fixed seed derived from the hash of the frontier, so that the grouping is a deterministic function of the state. We repeat: (i) assign each $x_g$ to its nearest centroid and (ii) update each $\mu_j$ to the mean of its cluster, re-normalized to the unit sphere; stop when assignments are unchanged. An empty cluster, or one whose mean vanishes, is reseeded deterministically with the frontier goal furthest from its assigned centroid, ties broken by statement hash, so that the grouping stays a total deterministic function.
The cluster count is set from the batch budget, $k = \lceil |\mathcal F_t| / b \rceil$ for a fixed target batch size $b$, and singleton clusters are permitted.

\subsection{Sharing and rollback accounting}
\label{app:sharing-rollback}

Shared helper lemmas can reduce duplicate generation, but failed helper lemmas create rollback costs.
\ToolName{} exposes both effects without asking the router to fill unknown quantities.
If $K$ consumers would otherwise rediscover helper lemma $h$, define
\begin{equation}
  \begin{aligned}
    C_{\mathrm{iso}}(h)
     & =\sum_{i=1}^{K}(c_{h,i}+d_i), \\
    C_{\mathrm{share}}(h)
     & =c_h+\sum_{i=1}^{K}d_i+r_h,
  \end{aligned}
  \label{eq:sharing-costs}
\end{equation}
where $c_{h,i}$ is the work spent generating helper lemma $h$ for consumer $i$, $c_h$ is the realized cost of proving the shared helper lemma once, $d_i$ is the remaining work for consumer $i$, and $r_h$ is expenditure on failed attempts to prove the helper lemma and on rollback.
By \Cref{eq:sharing-costs}, sharing saves money exactly when $C_{\mathrm{share}}(h)<C_{\mathrm{iso}}(h)$.

The lemma-dependency graph also makes current exposure measurable.
For provisional artifact $u$, \ToolName{} reports
\begin{equation}
  X_t(u)=\sum_{v\in\mathrm{CondDesc}_t(u)} c_t^{\mathrm{sunk}}(v),
  \label{eq:rollback-exposure}
\end{equation}
where $\mathrm{CondDesc}_t(u)$ is the set of conditional descendants that require revalidation if $u$ changes, and $c_t^{\mathrm{sunk}}(v)$ is the realized cost already charged to descendant $v$ by epoch $t$.
A large value in \Cref{eq:rollback-exposure} is evidence for resolving $u$ before further deepening.
This quantity enters the router's evidence packet in \Cref{sec:agent-routing}. When a helper lemma is rejected, \ToolName{} reopens its conditional descendants while retaining verified subgraphs unaffected by the failed dependency.

\subsection{Message passing between the router and the action agents}
\label{app:router-contract}

\ToolName{} treats the router's context as the head of the conversation that the chosen agent continues, rather than as a separate request.
Concretely, a routing decision hands the action agent the router's message history together with the preliminary analysis $j$ that the router produced while making the decision.
The agent appends its own turns to that history instead of opening a fresh one, so the shared prefix is served from the provider's prompt cache at the cache-hit price, and the agent inherits the router's reading of the goals rather than re-deriving it.
The ledger records cached and uncached input tokens separately, so the realized effect of this reuse is visible in the reported costs rather than assumed.

\subsection{Structural checking of proposed lemmas}
\label{app:structural-check}

A helper lemma proposed by the Lemma Planner enters the dependency graph only after it elaborates and preserves acyclicity, but elaboration only establishes that the statement is well-formed, not that it is true.
Committing to a false helper lemma is expensive: consumers are proved against it, and the work beneath it must be discarded when Lean eventually refuses the proof.
\ToolName{} therefore tries to refute a candidate before any proof effort is spent on it, in two escalating steps.

\textbf{Step 1: random testing.}
For a candidate whose statement is executable, we sample concrete values for its universally quantified variables, instantiate the statement, and evaluate it.
This is cheap, requires no model call, and catches the common failures of a plausible-looking conjecture such as a missing side condition, an off-by-one bound, or a hypothesis that the planner assumed but never stated.
A single falsifying instance disproves the lemma outright.

\textbf{Step 2: LLM-proposed structured instances.}
Random sampling explores the space uniformly and rarely reaches the corner cases where verification conditions actually fail: a singleton, an index at the boundary, an overflow value, or the state just before and after a loop's last iteration.
Constructing those instances takes reasoning about the statement rather than sampling, so when Step~1 finds no counterexample, we ask a model to propose a small set of structured instances, and evaluate them the same way.
Step~2 runs only if Step~1 passes, so its cost is paid only for candidates that have already survived the free check.

If either step produces a counterexample, the candidate is rejected. The counterexamples, with the instance that triggered it, are returned to the Lemma Planner, which refines the statement, most often by strengthening a hypothesis or restricting a bound, and proposes a revised lemma.
If both steps pass, the lemma is admitted to the dependency graph as \emph{provisional}.
Passing these checks is evidence rather than a guarantee: testing can refute a candidate but cannot promote it, and only Lean acceptance makes a declaration stable.

%% file: appendix/additional_results.tex
\section{Additional Experiment Results}
\label{app:additional-results}

\subsection{Additional Explanation of Experiment Setup}
\label{app:additional-experiment-setup}

\textbf{Experiment setup.}
Every method runs under the same external limits: each function-level problem is capped at 5 hours of wall-clock time and \$30 of expenditure (including NTP4VC since one task has one obligation in its benchmark design), and each repository-level task at 8 hours and \$60.
All methods access each backbone model through the same official provider: Gemini-3.5-Flash via Google Vertex AI, GPT-5.6-Terra via OpenAI, and DeepSeek-V4-Flash via DeepSeek, so token prices and caching behavior are identical across methods.
Cost is measured the same way for every method: we collect the token counts of every model call by type (uncached input, cached input, cache write, and output tokens, with reasoning tokens billed as output) and convert them to US dollars under the provider's official pricing (\Cref{tab:model-pricing}).
All experiments run on a machine with 16 vCPUs and 256\,GB of memory, which hosts Lean checking and all local computation while model inference runs through the provider APIs, with 12 concurrent prover runs.
We use Lean Finder~\cite{lu2026leanfinder} as the retrieval tool for the methods and deploy it as a service on a machine with 4 vCPUs, 16\,GB of memory and one NVIDIA L4 GPU. After each run, the generated Lean proof is checked by the independent Lean proof reviewer checker described below.

\textbf{Model pricing.}
\Cref{tab:model-pricing} lists the official prices used to convert token counts into the dollar costs reported throughout the paper.
We fetched the GPT and Gemini prices from the providers' official pricing pages on July~19, 2026, and the DeepSeek prices on August~17, 2026.
For GPT and Gemini we use the standard-tier prices; DeepSeek prices vary by time of day, and we use the peak-hour rates (off-peak rates are half), so no DeepSeek cost is understated.
Every run of every method is costed under this single price schedule, so cost comparisons do not depend on when a run was executed.
The three backbones are shared by \ToolName{} and all baselines; GPT-5.6-Sol is used only to generate the CLEVER and AlgoVeri implementations before any prover runs, so its cost is not charged to any method.

\begin{table}[h]
    \centering
    \caption{\textbf{Official model prices}, in US dollars per million tokens.
        \emph{Input} applies to uncached input tokens, \emph{Cache hits} to input tokens read from the provider's prompt cache, and \emph{Output} to output tokens, including reasoning tokens.
        Sources: OpenAI, \url{https://developers.openai.com/api/docs/pricing}, and Google, \url{https://ai.google.dev/gemini-api/docs/pricing}, both fetched with standard tier on July~19, 2026; DeepSeek, \url{https://api-docs.deepseek.com/quick_start/pricing/} with peak-hour rates, fetched on August~17, 2026.}
    \label{tab:model-pricing}
    \small
    \begin{tabular}{llrrrr}
        \toprule
        \textbf{Model}    & \textbf{Provider} & \textbf{Input} & \textbf{Cache hits} & \textbf{Cache write} & \textbf{Output} \\
        \midrule
        GPT-5.6-Terra     & OpenAI            & 2.50           & 0.250               & 3.125                & 15.00           \\
        Gemini-3.5-Flash  & Google Vertex AI  & 1.50           & 0.150               & --                   & 9.00            \\
        DeepSeek-V4-Flash & DeepSeek          & 0.44           & 0.014               & --                   & 1.32            \\
        \bottomrule
    \end{tabular}
\end{table}

\textbf{Excluded baselines.}
We do not compare against four other open-source provers, each for a different reason.

\emph{Hilbert}~\cite{sumanth2026hilbert} was built for MiniF2F and PutnamBench, where a problem is a self-contained theorem: the file header holds only \texttt{import Mathlib}, \texttt{open} declarations, and \texttt{set\_option} lines, and everything the theorem mentions is Mathlib vocabulary the model already knows.
Program verification benchmarks break this assumption.
Their header carries the actual mathematical content---\texttt{def implementation} and \texttt{def problem\_spec}---and the theorem is a single line that refers to these definitions.
Hilbert's decomposition does not treat the header as part of the problem, and this also causes the wasted assemblies we observed: when stating a subgoal, the model adds hypotheses about \texttt{implementation}, not recognizing it as a concrete definition it could unfold, so the subgoal's signature no longer matches its use site and the sketch fails to assemble.

\emph{Goedel-Code-Prover}~\cite{li2026goedelcodeprover} relies mainly on its own fine-tuned model; its agentic framework is a simple pipeline of lemma decomposition and iterative refinement whose effectiveness comes from sampling many model answers.
Under our protocol---one attempt per problem (pass@1) with a frontier backbone such as Gemini-3.5-Flash---it failed to prove problems within a \$5 budget that both \ToolName{} and the coding agents prove for under \$0.50 each.
Given this gap, which stems from a framework designed around a fine-tuned model and repeated sampling rather than from the backbone, a matched-budget comparison would not be informative.

\emph{WybeCoder}~\cite{gloeckle2026wybecoder} and $\mathrm{P}^3$~\cite{li2026p3} target end-to-end verifiable code generation, in which generating the implementation and generating its proof guide each other.
In our setting the implementation and specification are fixed inputs (\Cref{sec:evaluation}), so their central mechanism does not apply to pure proof generation.

\textbf{Benchmark licenses.}
\Cref{tab:benchmark-licenses} lists the license of each benchmark we evaluate on, as stated in its official repository.
We use all five benchmarks only for research evaluation, in accordance with their licenses, and credit each through its original paper; we do not redistribute any benchmark data.
The implementations we generate with GPT-5.6-Sol for CLEVER and AlgoVeri are derived from their specifications and remain subject to those benchmarks' licenses.
NTP4VC's tooling is MIT-licensed, while each verification condition, including its Lean~4 translation, keeps the license of the C project it is derived from.
Each Vero task is a Lean~4 translation of a third-party repository: tasks derived from permissively licensed upstream projects are distributed under Vero's Apache-2.0 license with attribution, while tasks derived from copyleft upstream projects keep their upstream licenses.

\begin{table}[h]
    \centering
    \caption{\textbf{Benchmark licenses.}
        $^\ddagger$NTP4VC's tooling is released under MIT; its verification conditions keep the licenses of their upstream C projects, which include GPL-2.0, GPL-3.0, LGPL-2.1, BSD-3-Clause, and MIT.
        $^\dagger$Vero's pipeline, specifications, and permissively licensed tasks are released under Apache-2.0; tasks derived from copyleft upstream projects keep their upstream licenses: \texttt{flocq} and \texttt{portion} (LGPL-3.0-or-later) and \texttt{huffman} (LGPL-2.1-or-later).}
    \label{tab:benchmark-licenses}
    \footnotesize
    \setlength{\tabcolsep}{4pt}
    \begin{tabular}{lll}
        \toprule
        \textbf{Benchmark}                & \textbf{License}     & \textbf{Repository}                                                                        \\
        \midrule
        CLEVER~\cite{thakur2025clever}   & MIT                  & \href{https://github.com/trishullab/clever}{\texttt{github.com/trishullab/clever}}         \\
        VERINA~\cite{ye2026verina}       & Apache-2.0           & \href{https://github.com/sunblaze-ucb/verina}{\texttt{github.com/sunblaze-ucb/verina}}     \\
        AlgoVeri~\cite{zhao2026algoveri} & Apache-2.0           & \href{https://github.com/haoyuzhao123/algoveri}{\texttt{github.com/haoyuzhao123/algoveri}} \\
        NTP4VC~\cite{xu2026ntp4vc}       & Mixed$^\ddagger$     & \href{https://github.com/xqyww123/NTP4VC}{\texttt{github.com/xqyww123/NTP4VC}}             \\
        Vero~\cite{ye2026vero}           & Apache-2.0$^\dagger$ & \href{https://github.com/sunblaze-ucb/vero}{\texttt{github.com/sunblaze-ucb/vero}}         \\
        \bottomrule
    \end{tabular}
\end{table}

\textbf{Baseline versions.}
\Cref{tab:baseline-versions} pins every baseline to the exact release or commit we ran, so each comparison can be reproduced against the same code.
The three coding agents are installed from npm at the listed versions, each driving the backbone of its own provider; Humanize and Ax-Prover-Base are run from the listed commits of their public repositories.

\textbf{Lean proof reviewer checker.}
A run counts as solved only if its proof survives an independent review, applied identically to \ToolName{} and to every baseline.
The review is needed because a failing agent can end a run in ways that compile but prove nothing: leaving a placeholder, quietly editing the theorem it was asked to prove, or assuming the goal as a new axiom.
After a run finishes, we take the produced Lean file and apply five checks, the first three mechanical, the last two aimed at proofs that pass the mechanical checks yet are not genuine.
\begin{enumerate}[leftmargin=1.5em,itemsep=1pt,topsep=2pt]
    \item \textbf{No unsolved goals.}
          We re-elaborate the file with the Lean checker and reject any run whose target theorems still contain an unproved hole, whether written as \texttt{sorry} or \texttt{admit}, or left open by a placeholder tactic such as \texttt{apply?}; the elaborator's own \texttt{sorry}-containing and error warnings are treated as failures rather than diagnostics.
    \item \textbf{Unmodified statements.}
          The theorem the agent proves must be the theorem it was given.
          We compare each target declaration against the benchmark original by string match on the statement and by re-elaborating it in the benchmark environment, so a weakened hypothesis, a renamed constant, a changed binder, or a restated conclusion is rejected even when the accompanying proof compiles.
    \item \textbf{No new axioms.}
          We ask the Lean checker for the axioms each target theorem's proof actually depends on and require that set to lie within the benchmark's own environment.
          A proof that introduces an \texttt{axiom}, admits a declaration through an axiomatic definition, or otherwise extends the trusted base is rejected.
    \item \textbf{Automated review.}
          The three checks above are syntactic, so we additionally run Codex with GPT-5.6-Sol over each accepted proof, with the original statement and the produced file, and ask it to report placeholders, unproved obligations, and proofs that are vacuous or only appear to establish the statement.
          Any proof it flags is re-examined by hand.
    \item \textbf{Manual audit.}
          Finally, we sample passing proofs at random across benchmarks, methods, and backbones and inspect them manually to confirm that they are genuine and that the preceding checks are not missing a failure mode.
\end{enumerate}
Every solve rate and cost reported in this paper counts only runs that pass all five checks.

\begin{table}[h]
    \centering
    \caption{\textbf{Baseline versions.}
        Coding agents are pinned by npm release; Humanize and Ax-Prover-Base by Git commit, abbreviated to seven characters (each entry links to the full commit).
        Full commit hashes: Humanize \texttt{8e21ff211daa731fa826c491e1f12ed284135c0d}; Ax-Prover-Base \texttt{06dfadc9ab439755af5efcfe0add95bfef2733c7}.}
    \label{tab:baseline-versions}
    \footnotesize
    \setlength{\tabcolsep}{4pt}
    \begin{tabular}{llll}
        \toprule
        \textbf{Baseline} & \textbf{Backbone} & \textbf{Package / GitHub repository}                                                                                                      & \textbf{Version} \\
        \midrule
        Codex             & GPT-5.6-Terra     & \href{https://www.npmjs.com/package/@openai/codex/v/0.146.0}{\texttt{@openai/codex}}                                                      & 0.146.0          \\
        Gemini CLI        & Gemini-3.5-Flash  & \href{https://www.npmjs.com/package/@google/gemini-cli/v/0.55.1}{\texttt{@google/gemini-cli}}                                             & 0.55.1           \\
        DeepSeek Harness  & DeepSeek-V4-Flash & \href{https://www.npmjs.com/package/@deepseek-ai/dsh/v/0.1.2-alpha.2}{\texttt{@deepseek-ai/dsh}}                                          & 0.1.2-alpha.2    \\
        \midrule
        Humanize          & all three         & \href{https://github.com/humanfia/putnambench/tree/8e21ff211daa731fa826c491e1f12ed284135c0d}{\texttt{humanfia/putnambench}}               & \texttt{8e21ff2} \\
        Ax-Prover-Base    & all three         & \href{https://github.com/Axiomatic-AI/ax-prover-base/tree/06dfadc9ab439755af5efcfe0add95bfef2733c7}{\texttt{Axiomatic-AI/ax-prover-base}} & \texttt{06dfadc} \\
        \bottomrule
    \end{tabular}
\end{table}

\subsection{Per-Problem Cost and Time}
\label{app:per-problem-cost-time}

\Cref{fig:clever-cost-time} opens the CLEVER results into one point per solved problem, comparing \ToolName{} with the coding agent of each backbone.
Two patterns hold across backbones.
At comparable wall-clock time its points also lie below the coding agent's and avoid their high-cost tail, which reaches about \$30 per problem under Gemini-3.5-Flash against about \$13 for \ToolName{}.
The average cost also covers the harder problems the baselines never solve so it is higher than the coding agents on GPT model.
\ToolName{} has a higher mean wall-clock time under GPT-5.6-Terra and Gemini-3.5-Flash, but a lower mean under DeepSeek-V4-Flash.
Thus, wall-clock time is the one axis where \ToolName{} does not win: a coding agent writes a whole proof in one shot, whereas \ToolName{} selects, routes, and Lean-checks one goal before purchasing the next.
That sequential, verified progress is what keeps cost low but it takes longer, which is also discussed in \Cref{app:limitations}.

\input{figures/bench_data_points.tex}

\subsection{Discussion of DeepSeek Model on Baselines}
\label{app:discussion-deepseek-baselines}

The gap between \ToolName{} and the baselines is widest under DeepSeek-V4-Flash, and the reason is that the baselines are not built for how this model spends its reasoning budget.
Two of the three baseline families run inside an agent scaffold designed around another provider's model.
Humanize is built on general coding agents such as Codex and Claude Code, so running it on DeepSeek-V4-Flash means driving a Codex-shaped loop with a model the loop was not tuned for. The coding-agent baseline uses DeepSeek's own harness, which is the intended scaffold but still gives the model one large context holding the whole task.
Both arrangements share the same assumption: the model is asked to reason about a long context and produce a proof in one turn.

DeepSeek-V4-Flash usually has long reasoning traces and spends much of its wall-clock budget inside them, and the effect compounds in an agentic setting, where each turn carries the accumulated transcript.
We observe two resulting failure modes.
In the first, a run is still reasoning when the wall-clock cap expires.
In the second, the model finishes reasoning and then declines the task, reporting that the goal is too hard or that it cannot complete the proof.
Both leave the run unsolved after real spending, which is why the cost saving on commonly solved problems is largest there.

However, \ToolName{}'s fine-grained control on proof graphs supplies exactly that: the router fixes a reasoning effort and working budget per invocation, and each specialist sees only its own goals with the prepared context it needs.
This is why the framework can transfer across backbone LLMs.

\subsection{Run-to-Run Variance}
\label{app:run-variance}

Model outputs are stochastic, so a single run can overstate or understate a system's cost.
We measure this variance at two scales: over the three function-level benchmarks in full, which tells us whether the headline results rest on a lucky run, and over the 30-problem ablation subset, which sets the noise floor against which a component ablation must be read.
Both use the same provider, caps, and proof-review procedure as the main evaluation.

\textbf{Full function-level benchmarks.}
We run the complete \ToolName{} three times with Gemini-3.5-Flash on all of CLEVER, VERINA, and AlgoVeri, 427 problems in total.
Total cost over the three benchmarks is \$969.8 on average with a standard deviation of \$89.5, a coefficient of variation of 9.2\%.
The spread is therefore an order of magnitude smaller than the cost gaps the comparison rests on (\Cref{tab:cost-performance}), so the reported savings are not an artifact of which run we report.
The solved set is identical across the three runs on all three benchmarks, so the solve rates of \Cref{tab:provability-performance} are stable and the variance is in cost alone.

\textbf{Ablation subset across backbones.}
We then run the complete \ToolName{} three times with each of the three backbones on the 30 ablation problems of \Cref{sec:eval-ablation}---10 each from CLEVER, VERINA, and AlgoVeri, spanning difficulty levels from easy to hard.
\Cref{tab:stability} reports, for each backbone and benchmark, the mean and standard deviation of cost over the three runs, together with the coefficient of variation (CV, the standard deviation divided by the mean); every run solves all 30 problems, so the spread is in cost alone and not in what was proved.

Six of the nine cells have a CV below 9\%, and the two largest, AlgoVeri with GPT-5.6-Terra (34.2\%) and CLEVER with DeepSeek-V4-Flash (22.0\%), arise for different reasons.
The AlgoVeri figure is genuine cost dispersion: the standard deviation is \$5.29 on a \$15.47 mean, the largest absolute spread in the table, and AlgoVeri's algorithmic goals admit proofs of very different lengths, so a single unlucky decomposition changes the bill substantially.
The DeepSeek figure is an artifact of scale: its standard deviation is only \$0.93 but DeepSeek-V4-Flash is cheap enough (\$3.00--\$7.66 per benchmark) that a small absolute spread is a large relative one.
Absolute variation is therefore modest everywhere, and the ranking of backbones by cost is stable across all three runs.

The Gemini-3.5-Flash means are the complete-system figures used in the component ablation (\Cref{tab:ablation-cost}), and their CVs (2.4\%--8.3\%) indicate how large an ablation difference must be to exceed run-to-run noise.
The router and Lemma Planner ablations, at 39.9\% and 41.6\% above the complete system, are far outside this range; the automation ablation, at 5.8\% overall, is comparable to it, and we therefore read that component's effect as small rather than precisely measured.

\input{tables/stability.tex}

\subsection{More Component Ablation Discussion}
\label{app:more-component-ablations}

The component ablation of \Cref{sec:eval-ablation} removes one action agent at a time, or the router's adaptive choice among them, while keeping the rest of the system intact.
Two further components of \ToolName{} admit no such ablation, and we explain here why we do not report one rather than leaving the omission unexplained.

\textbf{The two-level proof graph.}
The graph is not a component that the rest of the system uses. It is the state the rest of the system is defined over.
Removing it leaves nothing to select from: there is no dependency-ready frontier to maintain, no topological order in which goals become available, no consumers to share a helper lemma with, and no conditional descendants to reopen when a provisional lemma is retracted.
The scheduler, the router's grouping, and the Recovery Reviewer's nearest-safe-point backtracking all read the graph, so a variant without it is not \ToolName{} with one part disabled but a different system---one that proves each obligation in isolation, which is what the baselines already do.
The controlled comparison for the graph is therefore the baseline comparison of \Cref{sec:eval-performance}, and the case studies of \Cref{app:case-study} show concretely what the graph buys on individual tasks.

\textbf{The adaptive routing rules.}
Removing the explicit rules does not produce a router that stops adapting.
The evidence packet still carries the execution history from shared memory so a router without written rules can read the same signals and change its behavior in response to them, only implicitly and without a logged, versioned artifact we can inspect.
Such an ablation would therefore measure the value of \emph{writing the rules down}, not the value of adaptation, and a difference in either direction would be hard to attribute: a null result would not show that adaptation is unnecessary, and a positive one would not separate the rules from the extra tokens that carrying them adds to every packet.
What the experiments do isolate is adaptive action choice as a whole: \emph{\ToolName{} w/o Router} replaces it with one fixed order over the same graph and raises cost by 39.9\% (\Cref{tab:ablation-cost}).

%% file: figures/bench_data_points.tex
\begin{figure}[t]
    \centering
    \captionsetup[subfigure]{justification=centering}
    \begin{subfigure}[t]{0.325\linewidth}
        \centering
        \includegraphics[width=\linewidth]{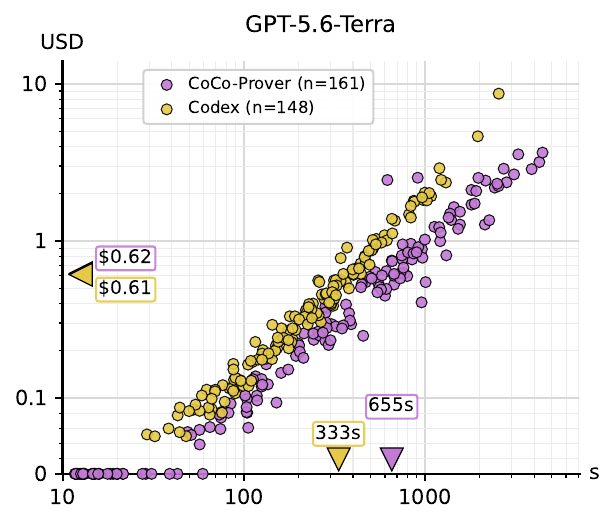}
        \caption{GPT-5.6-Terra: \ToolName{} vs.\ Codex.}
        \label{fig:clever-cost-time-gpt}
    \end{subfigure}\hfill
    \begin{subfigure}[t]{0.325\linewidth}
        \centering
        \includegraphics[width=\linewidth]{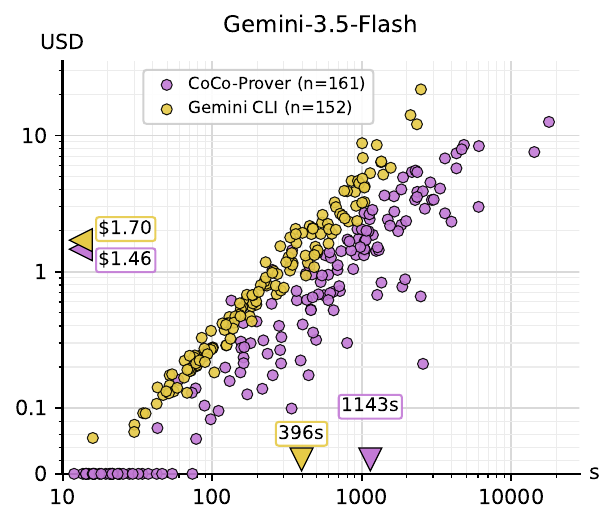}
        \caption{Gemini-3.5-Flash: \ToolName{} vs.\ Gemini CLI.}
        \label{fig:clever-cost-time-gemini}
    \end{subfigure}\hfill
    \begin{subfigure}[t]{0.325\linewidth}
        \centering
        \includegraphics[width=\linewidth]{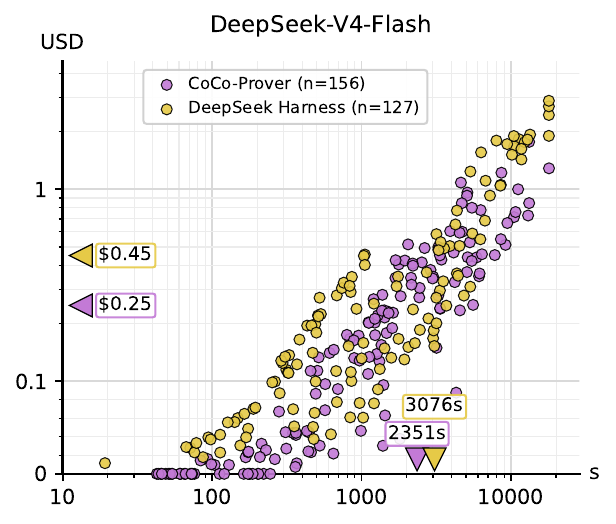}
        \caption{DeepSeek-V4-Flash: \ToolName{} vs.\ DeepSeek Harness.}
        \label{fig:clever-cost-time-deepseek}
    \end{subfigure}
    \caption{\textbf{Per-problem cost and wall-clock time of every solved problem in CLEVER benchmark, for \ToolName{} and each coding agent.}
        Each point is one problem solved and plotted at its wall-clock time and monetary cost.
        Triangles on the axes mark each system's mean cost ($\blacktriangleleft$) and mean time ($\blacktriangledown$) over its own solved set.}
    \label{fig:clever-cost-time}
    \vspace{-0.5em}
\end{figure}

%% file: tables/stability.tex
\begin{table}[h]
    \centering
    \caption{\textbf{Run-to-run variance of \ToolName{}.}
        We run the complete system three times with each backbone on the 30 ablation problems (10 each from CLEVER, VERINA, and AlgoVeri) and report the mean $\pm$ standard deviation of total cost in USD.
        Parenthesized percentages are coefficients of variation (standard deviation divided by the mean).
        Every run solves all 30 problems.
        The Gemini-3.5-Flash means are the complete-system figures used in \Cref{tab:ablation-cost}.}
    \label{tab:stability}
    \small
    \setlength{\tabcolsep}{4pt}
    \renewcommand{\arraystretch}{1.15}
    \resizebox{\linewidth}{!}{        \begin{tabular}{lccc}
            \toprule
            \textbf{Backbone}
                              & \textbf{CLEVER Cost (USD)}
                              & \textbf{VERINA Cost (USD)}
                              & \textbf{AlgoVeri Cost (USD)}                                                        \\
            \midrule
            GPT-5.6-Terra     & 17.85 $\pm$ 0.80 (4.5\%)     & 30.32 $\pm$ 2.54 (8.4\%) & 15.47 $\pm$ 5.29 (34.2\%) \\
            Gemini-3.5-Flash  & 28.64 $\pm$ 1.61 (5.6\%)     & 43.49 $\pm$ 1.06 (2.4\%) & 38.82 $\pm$ 3.22 (8.3\%)  \\
            DeepSeek-V4-Flash & 4.22 $\pm$ 0.93 (22.0\%)     & 7.66 $\pm$ 0.91 (11.9\%) & 3.00 $\pm$ 0.23 (7.7\%)   \\
            \bottomrule
        \end{tabular}    }
\end{table}

%% file: appendix/case_study.tex
\section{Case Studies}
\label{app:case-study}

We walk through two tasks end to end: a function-level task, CLEVER \texttt{prob\_15} (\Cref{app:case-study-clever}), where all systems face one obligation and the question is cost; and a repository-level task, Vero \texttt{prob\_14} (\Cref{app:case-study-vero}), with 56 interdependent obligations, where the question is whether provability can be reached at all.

\subsection{CLEVER \texttt{prob\_15}: a function-level cost comparison}
\label{app:case-study-clever}

\ToolName{} proves CLEVER \texttt{prob\_15} for \$2.36; the strongest baselines prove it for \$8.67 (Codex) and \$10.74 (Humanize), and Ax-Prover-Base spends \$5.48 and fails.
All four systems use the same LLM (GPT-5.6-Terra) at the same provider prices under the caps of the main evaluation.

\subsubsection{The task}
\label{app:case-study-task}

\texttt{prob\_15} is HumanEval's \texttt{string\_sequence}: return the numbers $0..n$ joined by single spaces.

\begin{lstlisting}[style=lean]
def problem_spec
(implementation: Nat → String)
(n: Nat) :=
let spec (result: String) :=
  let result_nums := result.splitOn " ";
  result_nums.length = n + 1 ∧
  ∀ i, i < n + 1 → result_nums[i]! = i.repr;
∃ result, implementation n = result ∧ spec result

def implementation (n: Nat) : String :=
  " ".intercalate ((List.range (n + 1)).map (fun i => i.repr))

theorem correctness (n: Nat) : problem_spec implementation n := sorry
\end{lstlisting}

The specification reads the output back with \texttt{String.splitOn " "}, so the proof must show that splitting on the separator undoes \texttt{String.intercalate} whenever no piece contains the separator and Lean~4 core has no such theorem:
\texttt{String.splitOn} is a byte-position scanner (\texttt{String.splitOnAux}) using well-founded recursion over UTF-8 offsets; \texttt{String.intercalate} is implemented by a private accumulator worker \texttt{String.intercalate.go}; the only ready-made fact is the list-level \texttt{List.splitOn\_intercalate}; and the proof also needs ``\texttt{Nat.repr} never produces a space,'' which requires reasoning about \texttt{Nat.toDigits}.
A proof therefore has to connect three layers: the string API, character lists, and the byte-level scanner. Every baseline named exactly this connection as its obstacle (\Cref{app:case-study-baselines}).

\subsubsection{Head-to-head outcome}
\label{app:case-study-results}

\begin{table}[h]
  \centering
  \caption{\textbf{CLEVER \texttt{prob\_15} under identical LLM, prices, and caps.}
    Ax-Prover-Base's 288{,}669 output tokens include 228{,}209 reasoning tokens.
    Codex and Humanize find proofs of similar size and structure to ours; most of their extra cost is cache reads from re-sending an ever-growing context at every step.
    Ax-Prover-Base keeps its context small but spends \$4.33 on output, mostly reasoning, re-deriving a monolithic proof each iteration, and never compiles a complete proof.}
  \label{tab:case-study-headline}
  \small
  \resizebox{\linewidth}{!}{    \begin{tabular}{lrrrr}
      \toprule
                            & \textbf{\ToolName{}} & Codex                & Humanize              & Ax-Prover-Base          \\
      \midrule
      Outcome               & \textbf{proved}      & proved               & proved                & failed (\texttt{sorry}) \\
      Total cost (USD)      & \textbf{2.36}        & 8.67 ($3.7\times$)   & 10.74 ($4.5\times$)   & 5.48                    \\
      Wall-clock time (s)   & 2{,}816              & 2{,}550              & 4{,}538               & 6{,}893                 \\
      Uncached input tokens & 425{,}737            & 408{,}271            & 1{,}000{,}932         & 452{,}135               \\
      Cache-read tokens     & 2{,}115{,}584        & 26{,}675{,}200       & 23{,}777{,}280        & 77{,}568                \\
      Total prompt tokens   & \textbf{2.54M}       & 27.08M               & 24.78M                & 0.53M                   \\
      Output tokens         & 51{,}310             & 65{,}566             & 152{,}920             & 288{,}669               \\
      Proof size            & 191 lines, 9 lemmas  & 149 lines, 8 helpers & 184 lines, 13 helpers & ---                     \\
      \midrule
      Cost: uncached input  & 1.06                 & 1.02                 & 2.50                  & 1.13                    \\
      Cost: output          & 0.77                 & 0.98                 & 2.29                  & 4.33                    \\
      Cost: cache reads     & 0.53                 & \textbf{6.67}        & \textbf{5.94}         & 0.02                    \\
      \bottomrule
    \end{tabular}  }
\end{table}

\Cref{tab:case-study-headline} summarizes the four runs.
Two observations frame the rest of the section.
First, the two successful baselines reach essentially the same mathematics: Codex's helper lemma set mirrors our lemma graph so the $3.7\times$--$4.5\times$ cost gap is organizational.
Second, the failure is organizational too: Ax-Prover-Base correctly identified every missing fact yet had no mechanism to turn a blocker into a separately provable goal.

\subsubsection{How \ToolName{} proved it}
\label{app:case-study-ours}

The run took 8 controller iterations and 18 routed agent invocations: 7 planning invocations (router reasoning and the Lemma Planner, at a high-thinking configuration $\theta$) and 11 Proof Writer invocations (at a low-thinking $\theta$).
It had zero backtracks, zero retracted lemmas, and zero rejected proof scripts: every lemma was proposed once and proved once, every Proof Writer invocation closed its target, so no target needed a second invocation, and nothing that was proved was ever redone.

\begin{table}[h]
  \centering
  \caption{\textbf{Iteration timeline} (declaration numbers from \Cref{tab:case-study-decls}).
    Each iteration pushes the open gap exactly one layer down: from the specification, to the string API, to character lists, to the byte scanner.
    Deterministic Automation opened iteration~1 by producing the five-tactic skeleton of \texttt{correctness} at zero model cost; on the eight lower-level goals it failed fast (heartbeat timeout, no progress, or a partial script below the quality threshold) and the controller moved on without spending model budget there.}
  \label{tab:case-study-timeline}
  \small
  \resizebox{\linewidth}{!}{    \begin{tabular}{clllr}
      \toprule
      Iter. & Open goal(s) at start            & New helper lemmas proposed                                                          & Closed by Proof Writer                         & Cost (\$) \\
      \midrule
      1     & \texttt{correctness}             & \texttt{implementation\_splitOn}                                                    & both conjuncts of the spec                     & 0.263     \\
      2     & \texttt{implementation\_splitOn} & round-trip (\#4) + no-space (\#3)                                                   & \texttt{implementation\_splitOn}               & 0.262     \\
      3     & \#3, \#4                         & \texttt{toList}/\texttt{splitOn} bridges (\#5, \#6)                                 & \#4 (\texttt{calc} chain); \#3                 & 0.453     \\
      4     & \#5, \#6                         & private worker (\#7) + predicate split (\#8)                                        & \#5, \#6                                       & 0.266     \\
      5     & \#7, \#8                         & byte-scanner agreement (\#9)                                                        & \#7 (induction generalizing \texttt{acc}); \#8 & 0.263     \\
      6     & \#9                              & UTF-8 offset lemma (\#10)                                                           & \#9 (25 calls, \$0.325: hardest step)          & 0.612     \\
      7     & \#10                             & --- (plan via \texttt{String.next\_of\_valid})                                      & \#10                                           & 0.128     \\
      8     & ---                              & \multicolumn{2}{l}{\textbf{proof complete} at 2{,}815\,s (run ended at 2{,}816\,s)} &                                                            \\
      \bottomrule
    \end{tabular}  }
\end{table}

\Cref{tab:case-study-timeline} gives the timeline, and \Cref{fig:lemmas-case-1} the lemma-dependency graph it builds: each iteration extends a nine-lemma chain, pushing the open gap one layer further down, from the specification to the string API, then to character lists, and finally to the byte-level scanner and its UTF-8 offset arithmetic.

\begin{figure}[h]
  \centering
  \includegraphics[width=\linewidth]{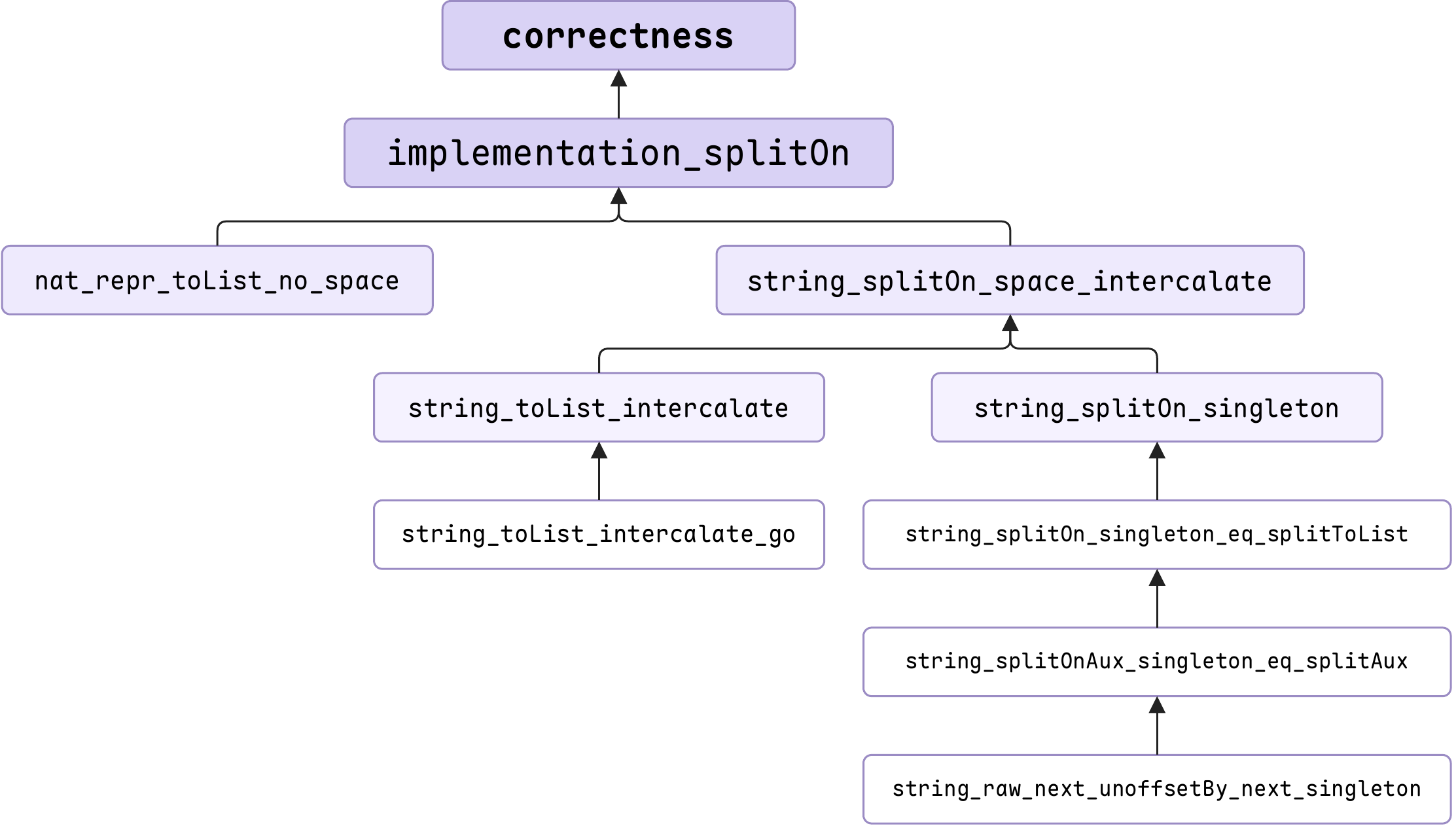}
  \caption{\textbf{Lemma-dependency graph of the final proof of CLEVER \texttt{prob\_15}.}
    The root is the specification-level \texttt{correctness} theorem; every edge points from a helper lemma to the declaration whose proof consumes it.
    Depth corresponds to the layer of abstraction each iteration reached: \texttt{implementation\_splitOn} instantiates the specification to the implementation, the round-trip and no-space lemmas work at the string API, the \texttt{toList} and \texttt{splitOn} bridges move to character lists (including the private accumulator worker), and the lowest branch descends through the byte-level scanner agreement to UTF-8 offset arithmetic.
    Each declaration was proposed with a \texttt{sorry} body, checked for counterexamples, and proved exactly once (\Cref{tab:case-study-decls}).}
  \label{fig:lemmas-case-1}
\end{figure}

\begin{table}[h]
  \centering
  \caption{\textbf{Every declaration in the final proof.}
    Each lemma was proposed with a \texttt{sorry} body and a structured header (\texttt{@description}, \texttt{@lib\_depends\_on}, \texttt{@used\_in}, and \texttt{@depends\_on} once it has generated dependencies), tested for counterexamples before acceptance, and proved exactly once, in the iteration after it was proposed.}
  \label{tab:case-study-decls}
  \scriptsize
  \setlength{\tabcolsep}{4pt}
  \resizebox{\linewidth}{!}{    \begin{tabular}{clllr}
      \toprule
      \# & Lemma/Theorem                                           & Proposed       & Counterexample check        & Lines \\
      \midrule
      1  & \texttt{correctness}                                    & target theorem & ---                         & 14    \\
      2  & \texttt{implementation\_splitOn}                        & iter.~1        & no counterexample           & 9     \\
      3  & \texttt{nat\_repr\_toList\_no\_space}                   & iter.~2        & no counterexample           & 9     \\
      4  & \texttt{string\_splitOn\_space\_intercalate}            & iter.~2        & inconclusive (timeout)      & 28    \\
      5  & \texttt{string\_toList\_intercalate}                    & iter.~3        & no counterexample           & 9     \\
      6  & \texttt{string\_splitOn\_singleton}                     & iter.~3        & no counterexample           & 5     \\
      7  & \texttt{string\_toList\_intercalate\_go}                & iter.~4        & no counterexample           & 11    \\
      8  & \texttt{string\_splitOn\_singleton\_eq\_splitToList}    & iter.~4        & no counterexample           & 5     \\
      9  & \texttt{string\_splitOnAux\_singleton\_eq\_splitAux}    & iter.~5        & inconclusive (not testable) & 55    \\
      10 & \texttt{string\_raw\_next\_unoffsetBy\_next\_singleton} & iter.~6        & inconclusive (not testable) & 15    \\
      \bottomrule
    \end{tabular}  }
\end{table}

\paragraph{Lemma-by-lemma Lean code.}

We list every declaration in discovery order, from the top of the DAG down: its role, the planner's plan (quoted from the transcript), and the final Lean code.

\paragraph{1. \texttt{correctness}: the target theorem.}
Deterministic Automation applied a partial script at zero model cost (static analysis check in rounter), leaving the two conjuncts open; the first Lemma Planner invocation (11 calls, \$0.229) proposed \texttt{implementation\_splitOn} and Proof Writer closed both conjuncts using exactly the planned tactics.

\begin{lstlisting}[style=lean]
theorem correctness
(n: Nat)
: problem_spec implementation n := by
  unfold problem_spec          -- Deterministic Automation skeleton
  unfold implementation        --   (no LLM tokens)
  intro spec
  simp_all [spec]
  apply And.intro
  · rw [implementation_splitOn]                      -- plan, conjunct 1
    simp only [List.length_map, List.length_range]
  · intro i a                                        -- plan, conjunct 2
    rw [implementation_splitOn]
    rw [List.getElem?_map, List.getElem?_range a]
    rfl
\end{lstlisting}

\paragraph{2. \texttt{implementation\_splitOn}: specialize the round-trip to the implementation.}
Every lemma is proposed with a structured header; this one reads:

\begin{lstlisting}[style=lean]
/-
@name: implementation_splitOn
@description: Splitting the space-intercalated decimal representations of the
  numbers in `List.range (n + 1)` recovers exactly that mapped range.
@lib_depends_on: [String.splitOn, String.intercalate, Nat.repr, List.range]
@used_in: [correctness]
-/
\end{lstlisting}

The iteration-2 Lemma Planner proposed the two lemmas the plan relies on (\#3 and \#4).
The Proof Writer closed it in 3 calls:

\begin{lstlisting}[style=lean]
lemma implementation_splitOn (n : Nat) :
    (" ".intercalate ((List.range (n + 1)).map (fun i => i.repr))).splitOn =
      (List.range (n + 1)).map (fun i => i.repr) := by
  apply string_splitOn_space_intercalate
  · intro s hs
    rw [List.mem_map] at hs
    rcases hs with ⟨i, hi, rfl⟩
    exact nat_repr_toList_no_space i
  · simp
\end{lstlisting}

\paragraph{3. \texttt{nat\_repr\_toList\_no\_space}: decimal digits contain no space.}
This is Ax-Prover-Base's \texttt{hnoSpace} blocker (\Cref{app:case-study-baselines}).
The plan argues by contradiction through the \emph{library} fact \texttt{Nat.isDigit\_of\_mem\_toDigits} instead of unfolding \texttt{Nat.toDigits}---unfolding is exactly where Ax-Prover-Base got stuck.
The Proof Writer wrote \texttt{(10 : Nat)} explicitly and added a final \texttt{contradiction}:

\begin{lstlisting}[style=lean]
lemma nat_repr_toList_no_space (n : Nat) : (' ' : Char) ∉ n.repr.toList := by
  intro h
  have hm : (' ' : Char) ∈ (10 : Nat).toDigits n := by
    simpa [Nat.repr] using h
  have hd : (' ' : Char).isDigit = true :=
    Nat.isDigit_of_mem_toDigits (by norm_num) (by norm_num) hm
  norm_num [Char.isDigit] at hd
  contradiction
\end{lstlisting}

\paragraph{4. \texttt{string\_splitOn\_space\_intercalate}: the string round-trip.}
The central string-level fact; Humanize rounds 1--2 reported it as missing from the library.
The Lemma Planner first proposed the two character-list bridge lemmas (\#5 and \#6), then the Proof Writer used the four-step \texttt{calc} structure and fixed three details against Lean: \texttt{rintro} became \texttt{rcases List.mem\_map.mp}, \texttt{String.singleton\_eq} became \texttt{String.singleton\_eq\_ofList}, and a final \texttt{simp [List.map\_map]} that did not close the goal was replaced by an explicit induction using \texttt{String.ofList\_toList}:

\begin{lstlisting}[style=lean]
lemma string_splitOn_space_intercalate (ss : List String)
    (hspace : ∀ s ∈ ss, (' ' : Char) ∉ s.toList) (hne : ss ≠ []) :
    (" ".intercalate ss).splitOn = ss := by
  have hspace' : ∀ l ∈ ss.map String.toList, (' ' : Char) ∉ l := by
    intro l hl
    rcases List.mem_map.mp hl with ⟨s, hs, rfl⟩
    exact hspace s hs
  have hne' : ss.map String.toList ≠ [] := by
    simpa using hne
  calc
    -- (a) String.splitOn → List.splitOn on characters      [lemma #6]
    (" ".intercalate ss).splitOn =
        (List.splitOn (' ' : Char) ((" ".intercalate ss).toList)).map
          String.ofList := by
      simpa [String.singleton_eq_ofList] using
        (string_splitOn_singleton (" ".intercalate ss) (' ' : Char))
    -- (b) String.intercalate → List.intercalate on characters  [lemma #5]
    _ = (List.splitOn (' ' : Char)
        (([' '] : List Char).intercalate (ss.map String.toList))).map
          String.ofList := by
      rw [string_toList_intercalate]
      rfl
    -- (c) the list-level round-trip from the library
    _ = (ss.map String.toList).map String.ofList := by
      rw [List.splitOn_intercalate (ss.map String.toList) (' ' : Char)
        hspace' hne']
    -- (d) back to strings
    _ = ss := by
      rw [List.map_map]
      clear hspace' hne' hspace hne
      induction ss with
      | nil => rfl
      | cons s ss ih =>
        simp only [List.map_cons, Function.comp_apply]
        rw [String.ofList_toList, ih]
\end{lstlisting}

\paragraph{5. \texttt{string\_toList\_intercalate}: \texttt{String.intercalate} as a list operation.}
Ax-Prover-Base's \texttt{hintercalate} blocker.
The Lemma Planner first proposed \#7, a lemma about the private worker, then split on the list of strings; the Proof Writer completed:

\begin{lstlisting}[style=lean]
open private String.intercalate.go from Init.Data.String.Defs in
lemma string_toList_intercalate (sep : String) (ss : List String) :
    (sep.intercalate ss).toList = sep.toList.intercalate (ss.map String.toList) := by
  cases ss with
  | nil =>
    rfl
  | cons a as =>
    simpa only [String.intercalate, List.map_cons] using
      (string_toList_intercalate_go a sep as)
\end{lstlisting}

\paragraph{6. \texttt{string\_splitOn\_singleton}: \texttt{String.splitOn} as a list operation.}
Part of Ax-Prover-Base's \texttt{hbridge} blocker.
The Lemma Planner separated the byte-scanner part into \#8 and connected the rest to the library theorem \texttt{String.splitToList\_of\_valid}. The Proof Writer completed:

\begin{lstlisting}[style=lean]
lemma string_splitOn_singleton (s : String) (c : Char) :
    s.splitOn (String.singleton c) = (s.toList.splitOn c).map String.ofList := by
  rw [string_splitOn_singleton_eq_splitToList]
  rw [String.splitToList_of_valid]
  rfl
\end{lstlisting}

\paragraph{7. \texttt{string\_toList\_intercalate\_go}: a lemma about a private function.}
The Proof Writer strengthened a plan ``induct on \texttt{ss}, generalizing \texttt{acc}, since \texttt{String.intercalate.go} advances precisely by replacing the accumulator with \texttt{acc ++ sep ++ head}.'' with a \texttt{cases} split and two extra \texttt{simp} lemmas:

\begin{lstlisting}[style=lean]
open private String.intercalate.go from Init.Data.String.Defs in
lemma string_toList_intercalate_go (acc sep : String) (ss : List String) :
    (String.intercalate.go acc sep ss).toList =
      sep.toList.intercalate (acc.toList :: ss.map String.toList) := by
  induction ss generalizing acc with
  | nil =>
    simp [String.intercalate.go, List.intercalate]
  | cons s ss ih =>
    rw [String.intercalate.go]
    rw [ih (acc := acc ++ sep ++ s)]
    cases ss <;> simp [String.toList_append, List.intercalate,
      List.intersperse, List.append_assoc]
\end{lstlisting}

\paragraph{8. \texttt{string\_splitOn\_singleton\_eq\_splitToList}: pattern split $=$ predicate split.}
The Lemma Planner pushed the scanner itself into lemma \#9; the Proof Writer used the lemma as written:

\begin{lstlisting}[style=lean]
lemma string_splitOn_singleton_eq_splitToList (s : String) (c : Char) :
    s.splitOn (String.singleton c) = s.splitToList (fun d => d == c) := by
  unfold String.splitOn String.splitToList
  rw [if_neg (by simpa only [beq_iff_eq] using String.singleton_ne_empty)]
  exact string_splitOnAux_singleton_eq_splitAux s c 0 0 []
\end{lstlisting}

\paragraph{9. \texttt{string\_splitOnAux\_singleton\_eq\_splitAux}: the byte-level scanners agree.}
The hardest step.
Ax-Prover-Base's attempts to unfold this recursion hit ``recursion-depth failures''; Humanize's round 1 called the scanner ``irreducible.''
The Lemma Planner invocation separated the offset arithmetic into lemma \#10. Then the Proof Writer planned the four cases of a well-founded \texttt{String.splitOnAux.induct}, ``retaining the invariant only for the reachable singleton-separator state $j = 0$ \ldots\ because the recursor also exposes unreachable states having a nonzero separator position.''
The Proof Writer then turned the prose plan into 55 lines, discovering the exact \texttt{if}/\texttt{dif} rewrite sequences case by case:

\begin{lstlisting}[style=lean]
lemma string_splitOnAux_singleton_eq_splitAux
    (s : String) (c : Char) (b i : String.Pos.Raw) (r : List String) :
    s.splitOnAux (String.singleton c) b i 0 r =
      s.splitAux (fun d => d == c) b i r := by
  refine String.splitOnAux.induct s (String.singleton c)
    (motive := fun b i j r => j = 0 →
      s.splitOnAux (String.singleton c) b i j r =
        s.splitAux (fun d => d == c) b i r)
    ?_ ?_ ?_ ?_ b i 0 r rfl
  -- case 1: at the end of `s`
  · intro b i j r hi hj
    simp [String.splitOnAux.eq_def, String.splitAux.eq_def, hi]
  -- case 2: character matches; the (one-character) separator is complete
  · intro b i j r hi hmatch i1 j1 hend ih hj
    subst j
    have hsingle : String.Pos.Raw.get (String.singleton c) 0 = c := by
      rw [← String.front_eq_get, String.front_eq]
      simp [String.toList_singleton]
    have hnext : String.Pos.Raw.next (String.singleton c) 0 =
        { byteIdx := c.utf8Size } := by
      simpa only [String.singleton_eq_ofList, List.nil_append,
        List.cons_append, String.utf8Len_nil, Nat.zero_add] using
        String.next_of_valid [] c []
    have hsingle_end : String.Pos.Raw.atEnd (String.singleton c)
        (String.Pos.Raw.next (String.singleton c) 0) = true := by
      rw [hnext]
      simp [String.Pos.Raw.atEnd, String.utf8ByteSize_singleton]
    have hget : String.Pos.Raw.get s i = c := by
      simpa [hsingle] using hmatch
    rw [string_raw_next_unoffsetBy_next_singleton s i c hget] at ih
    rw [String.splitOnAux.eq_def]
    rw [String.splitAux.eq_def]
    rw [if_neg hi, if_pos hmatch, if_pos hsingle_end]
    simp only [hget]
    rw [string_raw_next_unoffsetBy_next_singleton s i c hget]
    rw [dif_neg hi, if_pos (by simp)]
    exact ih rfl
  -- case 3: match, but separator incomplete (impossible for a singleton)
  · intro b i j r hi hmatch i1 j1 hend ih hj
    subst j
    have hnext : String.Pos.Raw.next (String.singleton c) 0 =
        { byteIdx := c.utf8Size } := by
      simpa only [String.singleton_eq_ofList, List.nil_append,
        List.cons_append, String.utf8Len_nil, Nat.zero_add] using
        String.next_of_valid [] c []
    have hsingle_end : String.Pos.Raw.atEnd (String.singleton c)
        (String.Pos.Raw.next (String.singleton c) 0) = true := by
      rw [hnext]
      simp [String.Pos.Raw.atEnd, String.utf8ByteSize_singleton]
    exact False.elim (hend hsingle_end)
  -- case 4: no match; separator position resets to 0
  · intro b i j r hi hmatch ih hj
    subst j
    have hsingle : String.Pos.Raw.get (String.singleton c) 0 = c := by
      rw [← String.front_eq_get, String.front_eq]
      simp [String.toList_singleton]
    have hnomatch : ¬(String.Pos.Raw.get s i == c) = true := by
      simpa [hsingle] using hmatch
    have hunoffset : i.unoffsetBy 0 = i := by
      apply String.Pos.Raw.ext
      simp [String.Pos.Raw.unoffsetBy]
    rw [String.splitOnAux.eq_def]
    rw [String.splitAux.eq_def]
    rw [if_neg hi, if_neg hmatch]
    simp only [hsingle, hunoffset]
    rw [dif_neg hi, if_neg hnomatch]
    exact ih rfl
\end{lstlisting}

\paragraph{10. \texttt{string\_raw\_next\_unoffsetBy\_next\_singleton}: UTF-8 offset arithmetic.}
The Proof Writer first reasoned a plan which reduces position equality to byte-index equality, computes the singleton advance with the library theorem \texttt{String.next\_of\_valid}, and cancels the subtraction with \texttt{Nat.add\_sub\_cancel}.
The Proof Writer then added \texttt{String.Pos.Raw.mk\_zero} and \texttt{Nat.zero\_add} to the planned \texttt{simpa only} list:

\begin{lstlisting}[style=lean]
lemma string_raw_next_unoffsetBy_next_singleton
    (s : String) (i : String.Pos.Raw) (c : Char)
    (hget : String.Pos.Raw.get s i = c) :
    (String.Pos.Raw.next s i).unoffsetBy
        (String.Pos.Raw.next (String.singleton c) 0) = i := by
  have hsingleton : String.Pos.Raw.next (String.singleton c) 0 =
      { byteIdx := c.utf8Size } := by
        simpa only [String.singleton_eq_ofList, List.nil_append,
          String.utf8Len_nil, String.Pos.Raw.mk_zero, Nat.zero_add] using
          (String.next_of_valid [] c [])
  apply String.Pos.Raw.ext
  rw [String.Pos.Raw.byteIdx_unoffsetBy, hsingleton]
  simp only [String.Pos.Raw.next, hget, String.Pos.Raw.addChar_eq]
  exact Nat.add_sub_cancel _ _
\end{lstlisting}

\paragraph{What the lemma-by-lemma view shows.}
The Lemma Planner decomposed subgoals. Every blocker became a lemma one layer down including \#7 (the private worker), \#8/\#9 (the byte scanner), \#10 (offset arithmetic). The Proof Writer often reasoned about the mathematics then fixes the Lean details.
And library facts were found and used when they existed (\texttt{Nat.isDigit\_of\_mem\_toDigits}, \texttt{List.splitOn\_intercalate}, \texttt{String.splitToList\_of\_valid}, \texttt{String.splitOnAux.induct}, \texttt{String.next\_of\_valid}). New lemmas were written only for facts missing from the library.

\subsubsection{How the baselines spent their budgets}
\label{app:case-study-baselines}

\paragraph{Codex: \$8.67, proved.}

Codex worked in one continuous turn of about 193 tool steps---101 shell commands (92 of them Lean checks) and 92 scratch-file edits---plus 5 progress messages.
Its messages show it reached the same mathematical plan as ours: ``the remaining work is the string round-trip fact for \texttt{intercalate}/\texttt{splitOn} \ldots\ reducing \texttt{splitOn " "} to the library's character-splitting semantics. The proof is now at the low-level auxiliary recursion, with termination decreasing by UTF-8 position.''
Its final proof has a matching shape (excerpt):

\begin{lstlisting}[style=lean]
theorem splitOn_space_aux (s : String) (b i : String.Pos.Raw) (r : List String) :
    s.splitOnAux " " b i 0 r = s.splitAux (fun c => c == ' ') b i r := by
  ...
termination_by s.utf8ByteSize - i.byteIdx
decreasing_by
  all_goals
    apply dec_next s i
    simp_all [String.Pos.Raw.atEnd]

theorem splitOn_space (s : String) :
    s.splitOn " " = s.splitToList (fun c => c == ' ') := ...

theorem toList_intercalate (sep : String) (xs : List String) :
    (sep.intercalate xs).toList = sep.toList.intercalate (xs.map String.toList) := ...

theorem space_not_mem_repr (n : Nat) : ' ' ∉ n.repr.toList := ...

theorem split_intercalate_repr (n : Nat) :
    (" ".intercalate ((List.range (n + 1)).map (fun i => i.repr))).splitOn " " =
      (List.range (n + 1)).map (fun i => i.repr) := ...
\end{lstlisting}

For the private \texttt{String.intercalate.go}, Codex wrote a custom meta-level tactic that rebuilds the private declaration's internal mangled name by hand:

\begin{lstlisting}[style=lean]
open Lean Elab Tactic Meta
elab "unfoldStringIntercalateGo" : tactic => liftMetaTactic fun goal => do
  let name := Name.anonymous.str "_private" |>.str "Init" |>.str "Data"
    |>.str "String" |>.str "Defs" |>.num 0 |>.str "String"
    |>.str "intercalate" |>.str "go"
  let goal ← unfoldTarget goal name
  return [goal]
\end{lstlisting}

This works but depends on the private name's internal encoding.
Where the money went: every step re-sent the whole transcript, including all earlier Lean outputs and edits---27.08M prompt tokens, of which 26.68M were cache reads (\$6.67, 77\% of its cost), roughly 140k prompt tokens per step against our 15.8k per call (2.54M over 161 calls).
The mathematics was the same and the difference is context management, which demonstrates the advantage of the metalevel decision-making on agent orchestration.

\paragraph{Humanize: \$10.74, proved on round 9 of 9.}

Humanize runs a fresh worker each round, with gate checks and a final reviewer.
Rounds 1--8 all ended without a proof, costing \$9.05 (84\% of the total); each worker restarted the exploration and hit the same wall (\Cref{tab:case-study-humanize-rounds}).

\begin{table}[h]
  \centering
  \caption{\textbf{Humanize's nine rounds.} The eight failed rounds name the same two obstacles that our lemmas \#7--\#10 dissolve.}
  \label{tab:case-study-humanize-rounds}
  \small
  \resizebox{\linewidth}{!}{    \begin{tabular}{clp{9.2cm}}
      \toprule
      Round & Cost                       & Worker's stated blocker (from the run log)                                                                                                                                                  \\
      \midrule
      1     & \$0.83                     & ``\texttt{String.splitOn}'s irreducible positional scanner; the needed semantic bridge is absent''                                                                                          \\
      2     & \$0.98                     & ``\texttt{String.splitOn}'s irreducible UTF-8 scanner; no applicable semantic split/intercalate lemma''                                                                                     \\
      3     & \$2.77                     & established the \texttt{splitOn " "} $\leftrightarrow$ character-splitting bridge, but ``the remaining obstacle is \ldots\ the opaque/internal \texttt{String.intercalate} implementation'' \\
      4     & \$0.99                     & ``the recursive join helper is inaccessible for induction''                                                                                                                                 \\
      5     & \$0.79                     & ``the required semantic bridge \ldots\ is not exported; \texttt{List.splitOn\_intercalate} cannot be applied''                                                                              \\
      6     & \$0.99                     & ``the unresolved piece is a public recursive lemma for this Lean version's private \texttt{String.intercalate} helper''                                                                     \\
      7     & \$0.88                     & ``its recursive worker is private/opaque''                                                                                                                                                  \\
      8     & \$0.83                     & ``an unexported/private recursive helper behind \texttt{String.intercalate}''                                                                                                               \\
      9     & \$1.54 $+$ \$0.15 reviewer & proof completed; gates pass                                                                                                                                                                 \\
      \bottomrule
    \end{tabular}  }
\end{table}

Two structural problems stand out.
First, progress is not kept: round~3 proved the string-to-character splitting bridge but left the candidate file unchanged (``The candidate remains unchanged''), so rounds 4--8 rediscovered it.
Second, a blocker is never turned into a goal: four of the five rounds 4--8 named the private \texttt{String.intercalate.go} as the obstacle, where \ToolName{} stated it as a checked lemma \emph{about} the private function and proved it for \$0.037.
(A second, earlier Humanize run also proved the task, for \$6.01 over 3 worker rounds---still $2.5\times$ our cost.)

\paragraph{Ax-Prover-Base: \$5.48, failed.}

Ax-Prover-Base repeatedly proposes a complete proof of \texttt{correctness} and compiles it: over 50 iterations and 151 LLM calls, 49 of 50 builds failed to compile, and the run ended by exhausting its iteration limit.
Its final self-assessment lists three blocking inline \texttt{have}s that it kept trying to prove inside the body of \texttt{correctness}.
The sketch below shows that the three \texttt{have} statements and the quoted failure reasons are verbatim, and the surrounding lines follow its described strategy:

\begin{lstlisting}[style=lean]
theorem correctness (n: Nat) : problem_spec implementation n := by
  unfold problem_spec
  refine ⟨implementation n, rfl, ?_⟩
  -- stable parts: htoList_inj, hmap_inj, the List.splitOn_intercalate
  -- step, both conjuncts
  have hbridge : (s.splitOn " ").map String.toList = s.toList.splitOn ' ' := by
    ...  -- "recursive unfolding caused recursion-depth failures"
  have hintercalate : (" ".intercalate xs).toList =
      [' '].intercalate (xs.map String.toList) := by
    ...  -- "simplification did not reduce its base and inductive cases"
  have hnoSpace : ' ' ∉ i.repr.toList := by
    ...  -- "got stuck on the conditional implementation of `Nat.toDigits`"
  ...
\end{lstlisting}

Each blocker corresponds to a lemma or chain \ToolName{} proved separately: \texttt{hbridge} to lemmas \#6, \#8, \#9, and \#10; \texttt{hintercalate} to \#5 and \#7; and \texttt{hnoSpace} to \#3.
A monolithic attempt fails if \emph{any} part fails, so the parts that did work (\texttt{htoList\_inj}, \texttt{hmap\_inj}, and the use of \texttt{List.splitOn\_intercalate}) were never kept, and the model spent \$4.33 on output, including 228k reasoning tokens, re-deriving the whole proof each iteration.

\subsubsection{Why \ToolName{} is cheaper}
\label{app:case-study-analysis}

Six mechanisms, each visible in the trace, account for the gap; each is a design component of \Cref{sec:architecture-control-loop} doing exactly the job it was built for.
\emph{(1) Small, focused contexts---two packets, neither of them the whole search.}
The router's packet is frontier-wide but shallow: tactic states, the graph neighborhood, and failure memory restricted to the open goals, with no proof transcripts. The specialist's is narrow and deep: the group the router chose, its preliminary analysis, and the excerpt of the router's conversation the invocation carries. Neither packet grows with the search, so each call carries 15.8k prompt tokens, against $\sim$140k per Codex step and 1.4M--7.6M cache reads per Humanize round. This alone explains the $10\times$ gap in prompt tokens at similar proof sizes.
\emph{(2) Verified progress is never repeated---the persistent two-level proof graph.}
Kernel-checked promotion commits every proved lemma to the dependency graph, and the Kahn frontier hands later invocations only what remains open, so all 18 invocations contributed to the final proof. However, Humanize, restarting a fresh worker per round, spent 84\% of its budget re-exploring, and Ax-Prover-Base, keeping only a memory of recent failures rather than persistent proof state, discarded 49 of 50 builds.
\emph{(3) Blockers become subgoals---the Lemma Planner.}
Every baseline correctly \emph{named} the missing facts but had no agent whose job is to act on them. The Lemma Planner states each blocker as a checked lemma and admits it to the dependency graph only after elaboration and acyclicity, whereupon its proof obligations enter the frontier as ordinary goals small enough for the Proof Writer to close.
\emph{(4) Expensive reasoning where it matters---the router's configuration choice $\theta$.}
Because every action is a separately priced purchase $(G,o,\theta)$, the router bought lemma planning at a high-thinking and script synthesis (proof writer) at a low-thinking $\theta$.
\emph{(5) Free goals stay free---Deterministic Automation under the same-provability cheap-first rule.}
It produced the opening skeleton of \texttt{correctness} at zero model cost and failed fast everywhere else, so no LLM budget was ever spent on a goal that symbolic procedures could open or close.
\emph{(6) Conjectures are tested before they are trusted---the counterexample check under the tests-do-not-certify invariant.}
All 9 proposed lemmas were tested for \$0.12 before anything was built on them. Tests may reject a candidate but never promote one, so refutable statements are filtered before proof effort, while soundness stays with the kernel.

\subsection{Vero \texttt{prob\_14}: interval-set algebra, 56 specifications}
\label{app:case-study-vero}

Vero \texttt{prob\_14} asks for proofs of 56 specification theorems about a small interval-set library.
\ToolName{}, running on GPT-5.6-Terra, proved all 56, and every proof is checked by Lean. It built 55 helper lemmas along the way, organized into a dependency hierarchy up to six levels deep.
The coding-agent baseline on the same backbone LLM proved 5 of the 56.
This subsection explains how full provability was reached: how the targets were split up, which lemmas carry the load, and how the proofs of the hardest facts were constructed.

\subsubsection{The task}
\label{app:vero-task}

An interval \texttt{Iv} is a half-open range $[\mathit{lo},\mathit{hi})$ of integers; a \texttt{List Iv} denotes the union of its intervals.
The library under verification has three operations:

\begin{lstlisting}[style=lean]
structure Iv where
  lo : Int
  hi : Int
deriving DecidableEq, Repr, Inhabited

-- frozen point-set semantics that every spec is written against
def covered (x : Int) (ivs : List Iv) : Bool :=
  ivs.any (fun iv => decide (iv.lo ≤ x ∧ x < iv.hi))

def disjointSortedNonempty (ivs : List Iv) : Prop :=
  (∀ iv ∈ ivs, iv.lo < iv.hi) ∧
  (∀ (i : Nat) (h : i + 1 < ivs.length), (ivs[i]).hi < (ivs[i+1]).lo)

def Intervaltree.overlaps : Intervaltree.OverlapsSig :=
  fun a b => decide (a.lo < b.hi ∧ b.lo < a.hi)

-- mergeOverlaps = insertion sort by `lo` (dropping empty intervals),
-- then coalescing
def Intervaltree.insertByLo (iv : Iv) : List Iv → List Iv
  | [] => if iv.lo < iv.hi then [iv] else []
  | y :: ys =>
    if iv.lo < iv.hi then
      if iv.lo ≤ y.lo then iv :: y :: ys
      else y :: Intervaltree.insertByLo iv ys
    else y :: ys

def Intervaltree.sortByLo (ivs : List Iv) : List Iv :=
  ivs.foldr Intervaltree.insertByLo []

def Intervaltree.coalesce : List Iv → List Iv
  | [] => []
  | [x] => [x]
  | x :: y :: rest =>
    if y.lo ≤ x.hi then
      Intervaltree.coalesce
        ({ lo := x.lo, hi := if x.hi < y.hi then y.hi else x.hi } :: rest)
    else
      x :: Intervaltree.coalesce (y :: rest)
  termination_by l => l.length

def Intervaltree.mergeOverlaps : Intervaltree.MergeOverlapsSig :=
  fun ivs => Intervaltree.coalesce (Intervaltree.sortByLo ivs)

-- chop b e = remove [b, e) from every interval, splitting straddlers
def Intervaltree.chopOne (b e : Int) (iv : Iv) : List Iv :=
  let left  : List Iv :=
    if iv.lo < b then [{ lo := iv.lo, hi := if iv.hi < b then iv.hi else b }] else []
  let right : List Iv :=
    if e < iv.hi then [{ lo := if iv.lo < e then e else iv.lo, hi := iv.hi }] else []
  (left ++ right).filter (fun p => decide (p.lo < p.hi))

def Intervaltree.chop : Intervaltree.ChopSig :=
  fun b e ivs => ivs.flatMap (Intervaltree.chopOne b e)
\end{lstlisting}

The 56 targets fall into three groups: \emph{overlaps} (6 targets: symmetry, self-overlap, a characterization by a common point, and one target relating \texttt{overlaps} to runs of merged intervals); \emph{mergeOverlaps} (39 targets: coverage, canonical form, uniqueness, invariance under permutation and reversal, idempotence, measure bounds, hull endpoints, and point-count properties); and \emph{chop} (11 targets: pointwise removal, idempotence, preservation of canonical form, and interaction with merging).
Each target has the form \texttt{theorem prove\_X : spec\_X canonical}; some representative statements:

\begin{lstlisting}[style=lean]
def spec_merge_coverage (impl : RepoImpl) : Prop :=
  ∀ (ivs : List Iv) (x : Int),
    covered x (impl.intervaltree.mergeOverlaps ivs) = covered x ivs

def spec_merge_disjoint_strict (impl : RepoImpl) : Prop :=
  ∀ (ivs : List Iv),
    let out := impl.intervaltree.mergeOverlaps ivs
    (∀ iv ∈ out, iv.lo < iv.hi) ∧
    (∀ (i : Nat) (h : i + 1 < out.length), (out[i]).hi < (out[i+1]).lo)

def spec_merge_canonical_unique (impl : RepoImpl) : Prop :=
  ∀ (ivs c : List Iv),
    disjointSortedNonempty c →
    (∀ x, covered x c = covered x ivs) →
      impl.intervaltree.mergeOverlaps ivs = c

def spec_merge_perm_eq_any (impl : RepoImpl) : Prop :=
  ∀ (xs ys : List Iv),
    List.Perm xs ys →
      impl.intervaltree.mergeOverlaps xs = impl.intervaltree.mergeOverlaps ys

def spec_chop_canonical_merge_fixpoint (impl : RepoImpl) : Prop :=
  ∀ (b e : Int) (ivs : List Iv),
    b < e →
      disjointSortedNonempty ivs →
        impl.intervaltree.mergeOverlaps (impl.intervaltree.chop b e ivs)
          = impl.intervaltree.chop b e ivs
\end{lstlisting}

Four features make the problem hard.
\emph{Three different recursion schemes}: \texttt{insertByLo} is structural recursion, \texttt{sortByLo} is a fold, and \texttt{coalesce} is well-founded recursion whose merge branch recurses on a list whose head has been changed. Unfolding them by hand gives nested \texttt{match}/\texttt{if} terms with no usable induction hypothesis.
\emph{Semantic specifications}: the specs are phrased through \texttt{covered} (a point set) and \texttt{disjointSortedNonempty} (index-based gap conditions), not through the implementation's structure, so every proof must connect the recursive code to these frozen predicates.
\emph{Genuinely global targets}: uniqueness and permutation-invariance need an extensionality principle (\eg two canonical lists covering the same points are equal) which no single unfolding of \texttt{mergeOverlaps} gives.
\emph{Scale}: 56 targets share structure, and proving each independently would repeat the same inductions dozens of times.

\subsubsection{Outcome}
\label{app:vero-outcome}

\begin{table}[h]
  \centering
  \caption{\textbf{Vero \texttt{prob\_14} outcome.} The final 2{,}471-line proof file passes the checker: all 56 targets proved with no \texttt{sorry} and no axioms beyond Lean's standard ones, and all 56 target statements identical to the originals (no restated targets).}
  \label{tab:vero-outcome}
  \small
  \begin{tabular}{lcc}
    \toprule
                             & \textbf{\ToolName{}} (GPT-5.6-Terra) & Coding-agent baseline (GPT-5.6-Terra) \\
    \midrule
    Targets proved           & \textbf{56 / 56}                     & 5 / 56                                \\
    Full file checks in Lean & \textbf{yes}                         & no                                    \\
    \bottomrule
  \end{tabular}
\end{table}

The complete run used about \$19 of model usage, roughly \$0.35 per target theorem.

\subsubsection{How \ToolName{} proved all 56 targets}
\label{app:vero-how}

The proof was built over 34 controller iterations; the number of proved lemmas/theorems grew steadily: 6 at the start of iteration 2, 25 at iteration 5, 60 at iteration 15, 91 at iteration 26, and all 111 (56 targets $+$ 55 lemmas) by iteration 34.
Deterministic Automation ran first on every open goal: it closed 24 goal nodes outright (counted per goal, not per declaration as in \Cref{tab:vero-closure}) and 59 times applied a partial script that a specialist then completed.
The run made 175 routed invocations---72 planning (router reasoning and the Lemma Planner) and 103 Proof Writer---and accepted 56 helper lemmas, each first passing the counterexample check; 55 appear in the final proof, and one ended up unnecessary.
Unlike the zero-backtrack CLEVER run, this run made \textbf{52 backtracks}, each with a written reason accepted by the Recovery Reviewer: the controller routinely rewound a branch where automation had unfolded a recursive definition into an unmanageable term, and replaced it with an induction or a lemma application at the nearest safe node (\Cref{app:vero-episodes}).

\begin{table}[h]
  \centering
  \caption{\textbf{How each declaration was closed.}
    The 8 targets closed by Deterministic Automation alone are local facts, including \texttt{overlaps} symmetry, self-overlap, disjointness, \texttt{merge\_nil}, \texttt{merge\_singleton}, \texttt{chop\_removes}, and \texttt{chop\_append\_coverage}; everything else is built through lemmas, and 7 of the 46 lemma-built targets are themselves reused as lemmas by later targets.}
  \label{tab:vero-closure}
  \small
  \resizebox{\linewidth}{!}{    \begin{tabular}{lccc}
      \toprule
                         & Closed by Automation & Closed by an LLM script (no new lemma) & Closed using lemmas or earlier targets \\
      \midrule
      56 target theorems & 8                    & 2                                      & \textbf{46}                            \\
      55 helper lemmas   & 12                   & 20                                     & 23                                     \\
      \bottomrule
    \end{tabular}  }
\end{table}

For almost every non-trivial target the planner followed one pattern: (1) avoid unfolding the implementation, instead unfold only the spec and the \texttt{canonical} record, down to \texttt{coalesce (sortByLo ivs)}; (2) split the claim along the implementation's pipeline: a fact about \texttt{insertByLo}, lifted to \texttt{sortByLo}, combined with a fact about \texttt{coalesce} that assumes a sortedness certificate (\texttt{Pairwise ($\cdot$.lo $\le$ $\cdot$.lo)}); (3) prove each layer by the induction matching its recursion: structural induction for \texttt{insertByLo} and \texttt{sortByLo}, the functional induction principle \texttt{coalesce.induct} for \texttt{coalesce}; (4) close the target in a few lines by rewriting with the layer lemmas.
The plan for \texttt{prove\_merge\_coverage} states it directly: ``the existing coverage lemmas exactly factor merge into sorting and coalescing; do not unfold either implementation.''

\subsubsection{Key proof episodes, with Lean code}
\label{app:vero-episodes}

\paragraph{The two central targets: coverage and canonical form.}
Almost every \texttt{mergeOverlaps} target depends on \texttt{prove\_merge\_coverage} (used directly by 15 declarations) or \texttt{prove\_merge\_disjoint\_strict} (used directly by 11); both reduce to lemmas about the implementation's layers:

\begin{lstlisting}[style=lean]
theorem prove_merge_coverage : spec_merge_coverage canonical := by
  intro ivs x
  change covered x (Intervaltree.coalesce (Intervaltree.sortByLo ivs)) = covered x ivs
  rw [coalesce_coverage_of_pairwise_lo (Intervaltree.sortByLo ivs) x
    (sortByLo_pairwise_lo ivs)]
  exact sortByLo_coverage ivs x

theorem prove_merge_disjoint_strict : spec_merge_disjoint_strict canonical := by
  intro ivs
  change disjointSortedNonempty
    (Intervaltree.coalesce (Intervaltree.sortByLo ivs))
  exact coalesce_disjointSortedNonempty (Intervaltree.sortByLo ivs)
    (sortByLo_output_nonempty ivs) (sortByLo_pairwise_lo ivs)
\end{lstlisting}

The sorting layer is handled by structural induction; \texttt{sortByLo\_coverage} lifts the one-step fact \texttt{insertByLo\_coverage} through the fold:

\begin{lstlisting}[style=lean]
lemma sortByLo_coverage (ivs : List Iv) (x : Int) :
    covered x (Intervaltree.sortByLo ivs) = covered x ivs := by
  induction ivs with
  | nil =>
    rfl
  | cons iv ivs ih =>
    calc
      covered x (Intervaltree.sortByLo (iv :: ivs)) =
          covered x (Intervaltree.insertByLo iv (Intervaltree.sortByLo ivs)) := rfl
      _ = covered x (iv :: Intervaltree.sortByLo ivs) :=
        insertByLo_coverage iv (Intervaltree.sortByLo ivs) x
      _ = covered x (iv :: ivs) := by
        simpa only [covered, List.any_cons] using
          congrArg (fun z : Bool => decide (iv.lo ≤ x ∧ x < iv.hi) || z) ih
\end{lstlisting}

The coalescing layer is where the difficulty lies.
The first attempt on \texttt{prove\_merge\_coverage} let automation unfold \texttt{coalesce} and case-split; the Recovery Reviewer rewound it, recording: ``the current branches have unfolded \texttt{Intervaltree.coalesce} repeatedly into arbitrarily deep recursive matches, but carry no induction hypothesis or sortedness invariant; closing them would require reconstructing a global induction invariant from the expanded terms.''
The Lemma Planner then proposed \texttt{coalesce\_coverage\_of\_pairwise\_lo}, planned as a functional induction over \texttt{coalesce} in which $x$ and the sortedness certificate are \emph{reverted}, so the induction hypothesis applies to the merged list:

\begin{lstlisting}[style=lean]
lemma coalesce_coverage_of_pairwise_lo (ivs : List Iv) (x : Int)
    (hsorted : ivs.Pairwise (fun a b => a.lo ≤ b.lo)) :
    covered x (Intervaltree.coalesce ivs) = covered x ivs := by
  revert x hsorted
  induction ivs using Intervaltree.coalesce.induct with
  | case1 =>
    intro x _
    rw [Intervaltree.coalesce.eq_1]
  | case2 a =>
    intro x _
    rw [Intervaltree.coalesce.eq_2]
  | case3 a b rest hab ih =>                       -- merge branch: b.lo ≤ a.hi
    intro x hs
    rcases (List.pairwise_cons.mp hs) with ⟨ha, htail⟩
    have hablo : a.lo ≤ b.lo := ha b (by simp)
    by_cases hhi : a.hi < b.hi
    · have hsMerged :
          ({ lo := a.lo, hi := b.hi } :: rest).Pairwise
            (fun u v => u.lo ≤ v.lo) := by
              rw [List.pairwise_cons]
              constructor
              · intro r hr
                exact ha r (by simp [hr])
              · exact List.Pairwise.of_cons htail
      rw [Intervaltree.coalesce.eq_3, if_pos hab]
      simp only [dif_pos hhi, if_pos hhi] at ih ⊢
      rw [ih x hsMerged]
      by_cases hal : a.lo ≤ x <;>
        by_cases hax : x < a.hi <;>
        by_cases hbl : b.lo ≤ x <;>
        by_cases hbx : x < b.hi <;>
        simp [covered, hal, hax, hbl, hbx] <;> omega
    · have hsMerged :
          ({ lo := a.lo, hi := a.hi } :: rest).Pairwise
            (fun u v => u.lo ≤ v.lo) := by
              rw [List.pairwise_cons]
              constructor
              · intro r hr
                exact ha r (by simp [hr])
              · exact List.Pairwise.of_cons htail
      rw [Intervaltree.coalesce.eq_3, if_pos hab]
      simp only [dif_neg hhi, if_neg hhi] at ih ⊢
      rw [ih x hsMerged]
      by_cases hal : a.lo ≤ x <;>
        by_cases hax : x < a.hi <;>
        by_cases hbl : b.lo ≤ x <;>
        by_cases hbx : x < b.hi <;>
        simp [covered, hal, hax, hbl, hbx] <;> omega
  | case4 a b rest hab ih =>                       -- gap branch: a.hi < b.lo
    intro x hs
    rcases (List.pairwise_cons.mp hs) with ⟨ha, htail⟩
    rw [Intervaltree.coalesce.eq_3, if_neg hab]
    have htailCov := ih x htail
    simpa [covered] using
      congrArg (fun q : Bool =>
        (decide (a.lo ≤ x ∧ x < a.hi) : Bool) || q) htailCov
\end{lstlisting}

\paragraph{The counterexample check rejects a false lemma.}
While planning \texttt{prove\_merge\_disjoint\_strict}, the Lemma Planner proposed that every interval produced by \texttt{insertByLo} is nonempty:

\begin{lstlisting}[style=lean]
lemma insertByLo_output_nonempty (iv : Iv) (ys : List Iv) :
    ∀ o ∈ Intervaltree.insertByLo iv ys, o.lo < o.hi
\end{lstlisting}

This is false: when \texttt{iv} is empty, \texttt{insertByLo} returns \texttt{ys} unchanged, so an empty interval already in \texttt{ys} survives.
The counterexample check rejected the lemma before any proof effort was spent on it:

\begin{lstlisting}[style=lean]
Counterexample 1:
iv := { lo := 0, hi := 0 }, ys := [{ lo := 0, hi := 0 }]
issue: refuted deductively; the statement is not evaluable (no cocheck verdict)
note: verified by tactic ladder, not by cocheck_refute
\end{lstlisting}

In the same iteration the Lemma Planner restated it with the missing invariant as a hypothesis which was exactly the form the fold in \texttt{sortByLo} needs, since the list built so far is known to contain only nonempty intervals:

\begin{lstlisting}[style=lean]
lemma insertByLo_output_nonempty (iv : Iv) (ys : List Iv)
    (hys : ∀ o ∈ ys, o.lo < o.hi) :
    ∀ o ∈ Intervaltree.insertByLo iv ys, o.lo < o.hi := by
  induction ys with
  | nil =>
    intro o ho
    unfold Intervaltree.insertByLo at ho
    split at ho
    · simp_all
    · simp_all
  | cons y ys ih =>
    have hy : y.lo < y.hi := hys y (by simp)
    have htail : ∀ z ∈ ys, z.lo < z.hi := fun z hz => hys z (by simp [hz])
    intro o ho
    unfold Intervaltree.insertByLo at ho
    split at ho
    · rename_i hp
      split at ho
      · rename_i hle
        simp only [List.mem_cons] at ho
        rcases ho with rfl | rfl | ho
        · exact hp
        · exact hy
        · exact htail o ho
      · rename_i hle
        simp only [List.mem_cons] at ho
        rcases ho with rfl | ho
        · exact hy
        · exact ih htail o ho
    · rename_i hp
      exact hys o (by simpa using ho)

lemma sortByLo_output_nonempty (ivs : List Iv) :
    ∀ o ∈ Intervaltree.sortByLo ivs, o.lo < o.hi := by
  induction ivs with
  | nil =>
    simp [Intervaltree.sortByLo]
  | cons iv ivs ih =>
    simpa [Intervaltree.sortByLo, List.foldr_cons] using
      (insertByLo_output_nonempty iv (Intervaltree.sortByLo ivs) ih)
\end{lstlisting}

\texttt{sortByLo\_output\_nonempty} is used directly by 7 declarations.

\paragraph{Canonical form of \texttt{coalesce}.}
\texttt{coalesce\_disjointSortedNonempty} is the canonical-form half of the core: coalescing a sorted list of nonempty intervals gives a canonical list.
The Proof Writer reasoned about a plan which split off a small invariant first, \texttt{coalesce\_lower\_bound} (a lower bound on every input \texttt{lo} is preserved in the output), and then planned a four-case functional induction, e.g., ``in the nonmerge branch, use \texttt{Int.lt\_of\_not\_ge hyx} to obtain \texttt{x.hi < y.lo} \ldots\ for gap index zero, the index hypothesis makes the coalesced tail nonempty; apply the reusable lemma \texttt{coalesce\_lower\_bound (y :: rest) y.lo} to its first element \ldots\ for a successor index, reduce the list indexing with \texttt{simp} and invoke IH.2 at the predecessor index'':

\begin{lstlisting}[style=lean]
lemma coalesce_disjointSortedNonempty (ivs : List Iv)
    (hnonempty : ∀ iv ∈ ivs, iv.lo < iv.hi)
    (hsorted : ivs.Pairwise (fun a b => a.lo ≤ b.lo)) :
    disjointSortedNonempty (Intervaltree.coalesce ivs) := by
  revert hnonempty hsorted
  induction ivs using Intervaltree.coalesce.induct with
  | case1 =>
    intro hnonempty hsorted
    unfold disjointSortedNonempty
    simp [Intervaltree.coalesce]
  | case2 x =>
    intro hnonempty hsorted
    unfold disjointSortedNonempty
    simp [Intervaltree.coalesce, hnonempty x (by simp)]
  | case3 x y rest hyx ih =>                       -- merge branch
    intro hnonempty hsorted
    have hx : x.lo < x.hi := hnonempty x (by simp)
    have hy : y.lo < y.hi := hnonempty y (by simp)
    have hp := List.pairwise_cons.mp hsorted
    have hxy : x.lo ≤ y.lo := hp.1 y (by simp)
    have hp' := List.pairwise_cons.mp hp.2
    let u : Iv := { lo := x.lo, hi := if x.hi < y.hi then y.hi else x.hi }
    have hu : u.lo < u.hi := by
      dsimp [u]
      split <;> omega
    have hnu : ∀ iv ∈ u :: rest, iv.lo < iv.hi := by
      intro iv hiv
      rcases List.mem_cons.mp hiv with rfl | hiv
      · exact hu
      · exact hnonempty iv (by simp [hiv])
    have hpu : List.Pairwise (fun a b : Iv => a.lo ≤ b.lo) (u :: rest) := by
      apply List.pairwise_cons.mpr
      constructor
      · intro z hz
        dsimp [u]
        exact Int.le_trans hxy (hp'.1 z hz)
      · exact hp'.2
    have hres := ih hnu hpu
    simpa [Intervaltree.coalesce, hyx, u] using hres
  | case4 x y rest hyx ih =>                       -- gap branch
    intro hnonempty hsorted
    have hx : x.lo < x.hi := hnonempty x (by simp)
    have hp := List.pairwise_cons.mp hsorted
    have hgap : x.hi < y.lo := Int.lt_of_not_ge hyx
    have htailnon : ∀ iv ∈ y :: rest, iv.lo < iv.hi := by
      intro iv hiv
      exact hnonempty iv (by simp [hiv])
    have hres := ih htailnon hp.2
    have hlower : ∀ iv ∈ y :: rest, y.lo ≤ iv.lo := by
      intro iv hiv
      rcases List.mem_cons.mp hiv with rfl | hiv
      · exact Int.le_refl _
      · exact (List.pairwise_cons.mp hp.2).1 iv hiv
    simp only [Intervaltree.coalesce, hyx, ↓reduceIte]
    unfold disjointSortedNonempty at hres ⊢
    constructor
    · intro iv hiv
      rcases List.mem_cons.mp hiv with rfl | hiv
      · exact hx
      · exact hres.1 iv hiv
    · intro i hi
      cases i with
      | zero =>
        simp only [List.getElem_cons_zero, List.getElem_cons_succ]
        simp only [List.length_cons] at hi
        have hzero : 0 < (Intervaltree.coalesce (y :: rest)).length := by
          omega
        have hzlo := coalesce_lower_bound (y :: rest) y.lo hlower
          (Intervaltree.coalesce (y :: rest))[0] (List.getElem_mem hzero)
        exact lt_of_lt_of_le hgap hzlo
      | succ i =>
        simp only [List.length_cons] at hi
        simpa using hres.2 i (by omega)
\end{lstlisting}

\paragraph{Uniqueness of canonical form: the extensionality lemma.}
The deepest mathematical step is independent of the implementation: two canonical interval lists that cover the same points are equal.
The Lemma Planner stated it as a standalone lemma in iteration 13. With it, the uniqueness, determination, and permutation-invariance targets follow in a few lines each.
The proof plan in Proof Writer is an induction on the first list, generalizing the second: (1) the heads' lower endpoints agree, because the smallest covered point of each canonical list is its head's \texttt{lo}; (2) the upper endpoints agree, because the point at a canonical head's \texttt{hi} is \emph{uncovered}, which rules out either head extending past the other; (3) the tails cover the same points, so the induction hypothesis applies.

\begin{lstlisting}[style=lean]
lemma disjointSortedNonempty_ext_of_coverage (c d : List Iv)
    (hc : disjointSortedNonempty c) (hd : disjointSortedNonempty d)
    (hcov : ∀ x : Int, covered x c = covered x d) :
    c = d := by
  induction c generalizing d with
  | nil =>
    cases d with
    | nil =>
      rfl
    | cons b ds =>
      exfalso
      have hb : b.lo < b.hi := hd.1 b (by simp)
      have hdtrue : covered b.lo (b :: ds) = true :=
        (covered_eq_true_iff b.lo (b :: ds)).2 ⟨b, by simp, le_rfl, hb⟩
      have heq := hcov b.lo
      simp [covered] at heq
      omega
  | cons a cs ih =>
    cases d with
    | nil =>
      exfalso
      have ha : a.lo < a.hi := hc.1 a (by simp)
      have hctrue : covered a.lo (a :: cs) = true :=
        (covered_eq_true_iff a.lo (a :: cs)).2 ⟨a, by simp, le_rfl, ha⟩
      have heq := hcov a.lo
      simp [covered] at heq
      omega
    | cons b ds =>
      have ha := hc.1 a (by simp)
      have hb := hd.1 b (by simp)
      have hcgap := disjointSortedNonempty_pairwise_gap (a :: cs) hc
      have hdgap := disjointSortedNonempty_pairwise_gap (b :: ds) hd
      -- every member's lo is bounded below by the head's lo
      have hclo : ∀ v ∈ a :: cs, a.lo ≤ v.lo := by
        intro v hv
        rcases List.mem_cons.mp hv with rfl | hv
        · exact le_rfl
        · have hgapv := (List.pairwise_cons.mp hcgap).1 v hv
          omega
      have hdlo : ∀ v ∈ b :: ds, b.lo ≤ v.lo := by
        intro v hv
        rcases List.mem_cons.mp hv with rfl | hv
        · exact le_rfl
        · have hgapv := (List.pairwise_cons.mp hdgap).1 v hv
          omega
      -- (1) the heads have the same lower endpoint
      have hba : b.lo ≤ a.lo := by
        have hatrue : covered a.lo (a :: cs) = true :=
          (covered_eq_true_iff a.lo (a :: cs)).2 ⟨a, by simp, le_rfl, ha⟩
        have hbtrue : covered a.lo (b :: ds) = true := by
          rw [← hcov a.lo]
          exact hatrue
        obtain ⟨v, hv, hvlo, hvhi⟩ := (covered_eq_true_iff a.lo (b :: ds)).1 hbtrue
        have hlow := hdlo v hv
        omega
      have hab : a.lo ≤ b.lo := by
        have hbtrue : covered b.lo (b :: ds) = true :=
          (covered_eq_true_iff b.lo (b :: ds)).2 ⟨b, by simp, le_rfl, hb⟩
        have hatrue : covered b.lo (a :: cs) = true := by
          rw [hcov b.lo]
          exact hbtrue
        obtain ⟨v, hv, hvlo, hvhi⟩ := (covered_eq_true_iff b.lo (a :: cs)).1 hatrue
        have hlow := hclo v hv
        omega
      have hlo : a.lo = b.lo := le_antisymm hab hba
      -- (2) a canonical head's upper endpoint is uncovered
      have hcaHi : covered a.hi (a :: cs) = false := by
        apply Bool.eq_false_of_not_eq_true
        intro ht
        obtain ⟨v, hv, hvlo, hvhi⟩ := (covered_eq_true_iff a.hi (a :: cs)).1 ht
        rcases List.mem_cons.mp hv with rfl | hv
        · omega
        · have hgapv := (List.pairwise_cons.mp hcgap).1 v hv
          omega
      have hdbHi : covered b.hi (b :: ds) = false := by
        apply Bool.eq_false_of_not_eq_true
        intro ht
        obtain ⟨v, hv, hvlo, hvhi⟩ := (covered_eq_true_iff b.hi (b :: ds)).1 ht
        rcases List.mem_cons.mp hv with rfl | hv
        · omega
        · have hgapv := (List.pairwise_cons.mp hdgap).1 v hv
          omega
      have hbia : b.hi ≤ a.hi := by
        by_contra h
        have hlt : a.hi < b.hi := lt_of_not_ge h
        have hbt : covered a.hi (b :: ds) = true :=
          (covered_eq_true_iff a.hi (b :: ds)).2 ⟨b, by simp, by omega, hlt⟩
        have hct : covered a.hi (a :: cs) = true := by
          rw [hcov a.hi]
          exact hbt
        rw [hcaHi] at hct
        simp at hct
      have haib : a.hi ≤ b.hi := by
        by_contra h
        have hlt : b.hi < a.hi := lt_of_not_ge h
        have hat : covered b.hi (a :: cs) = true :=
          (covered_eq_true_iff b.hi (a :: cs)).2 ⟨a, by simp, by omega, hlt⟩
        have hdt : covered b.hi (b :: ds) = true := by
          rw [← hcov b.hi]
          exact hat
        rw [hdbHi] at hdt
        simp at hdt
      have hhi : a.hi = b.hi := le_antisymm haib hbia
      have habeq : a = b := by
        cases a
        cases b
        simp_all
      subst b
      -- (3) the tails cover the same points; apply the IH
      have hcTail : disjointSortedNonempty cs := disjointSortedNonempty_tail a cs hc
      have hdTail : disjointSortedNonempty ds := disjointSortedNonempty_tail a ds hd
      have htailcov : ∀ x : Int, covered x cs = covered x ds := by
        intro x
        by_cases hx : a.lo ≤ x ∧ x < a.hi
        · have hcsfalse : covered x cs = false := by
            apply Bool.eq_false_of_not_eq_true
            intro ht
            obtain ⟨v, hv, hvlo, hvhi⟩ := (covered_eq_true_iff x cs).1 ht
            have hgapv := (List.pairwise_cons.mp hcgap).1 v hv
            omega
          have hdsfalse : covered x ds = false := by
            apply Bool.eq_false_of_not_eq_true
            intro ht
            obtain ⟨v, hv, hvlo, hvhi⟩ := (covered_eq_true_iff x ds).1 ht
            have hgapv := (List.pairwise_cons.mp hdgap).1 v hv
            omega
          rw [hcsfalse, hdsfalse]
        · have hhead : decide (a.lo ≤ x ∧ x < a.hi) = false :=
            decide_eq_false_iff_not.mpr hx
          have hhcov := hcov x
          change (decide (a.lo ≤ x ∧ x < a.hi) || covered x cs) =
            (decide (a.lo ≤ x ∧ x < a.hi) || covered x ds) at hhcov
          rw [hhead, Bool.false_or, Bool.false_or] at hhcov
          exact hhcov
      have htail := ih ds hcTail hdTail htailcov
      exact congrArg (fun t : List Iv => a :: t) htail
\end{lstlisting}

Combined with the two central targets, extensionality gives several targets almost for free, and the agent proved permutation and reversal invariance \emph{semantically}: a permutation preserves coverage (\ie \texttt{covered\_perm}, closed by Deterministic Automation), and extensionality turns equal coverage into equal canonical outputs, avoiding any reasoning about how \texttt{insertByLo} handles ties:

\begin{lstlisting}[style=lean]
theorem prove_merge_coverage_determines : spec_merge_coverage_determines canonical := by
  unfold spec_merge_coverage_determines
  intro a b hab
  apply disjointSortedNonempty_ext_of_coverage
  · exact prove_merge_disjoint_strict a
  · exact prove_merge_disjoint_strict b
  · intro x
    calc
      covered x (canonical.intervaltree.mergeOverlaps a) = covered x a :=
        prove_merge_coverage a x
      _ = covered x b := hab x
      _ = covered x (canonical.intervaltree.mergeOverlaps b) :=
        (prove_merge_coverage b x).symm

lemma mergeOverlaps_perm (xs ys : List Iv) (hperm : xs.Perm ys) :
    Intervaltree.mergeOverlaps xs = Intervaltree.mergeOverlaps ys := by
  apply disjointSortedNonempty_ext_of_coverage
  · simpa [spec_merge_disjoint_strict, canonical] using
      (prove_merge_disjoint_strict xs)
  · simpa [spec_merge_disjoint_strict, canonical] using
      (prove_merge_disjoint_strict ys)
  · intro x
    calc
      covered x (Intervaltree.mergeOverlaps xs) = covered x xs := by
        simpa [spec_merge_coverage, canonical] using (prove_merge_coverage xs x)
      _ = covered x ys := covered_perm x xs ys hperm
      _ = covered x (Intervaltree.mergeOverlaps ys) := by
        symm
        simpa [spec_merge_coverage, canonical] using (prove_merge_coverage ys x)

theorem prove_merge_reverse_eq : spec_merge_reverse_eq canonical := by
  unfold spec_merge_reverse_eq
  unfold canonical
  unfold Intervaltree.overlaps Intervaltree.mergeOverlaps Intervaltree.chop
  try unfold Intervaltree.coalesce
  try unfold Intervaltree.chopOne
  unfold Intervaltree.sortByLo
  intro ivs
  rw [← Intervaltree.coalesce.eq_def, ← Intervaltree.coalesce.eq_def]
  exact mergeOverlaps_perm ivs ivs.reverse (List.reverse_perm ivs).symm
\end{lstlisting}

\paragraph{The deepest chain: \texttt{chop} preserves merge fixpoints.}
\path{prove_chop_canonical_merge_fixpoint} sits at dependency depth 6, the deepest point in the proof; it combines both halves of the library (chopping a canonical list keeps it canonical; merging a canonical list is the identity):
\Cref{fig:lemmas-case-2} shows its dependency graph.

\begin{figure}[h]
  \centering
  \includegraphics[width=\linewidth]{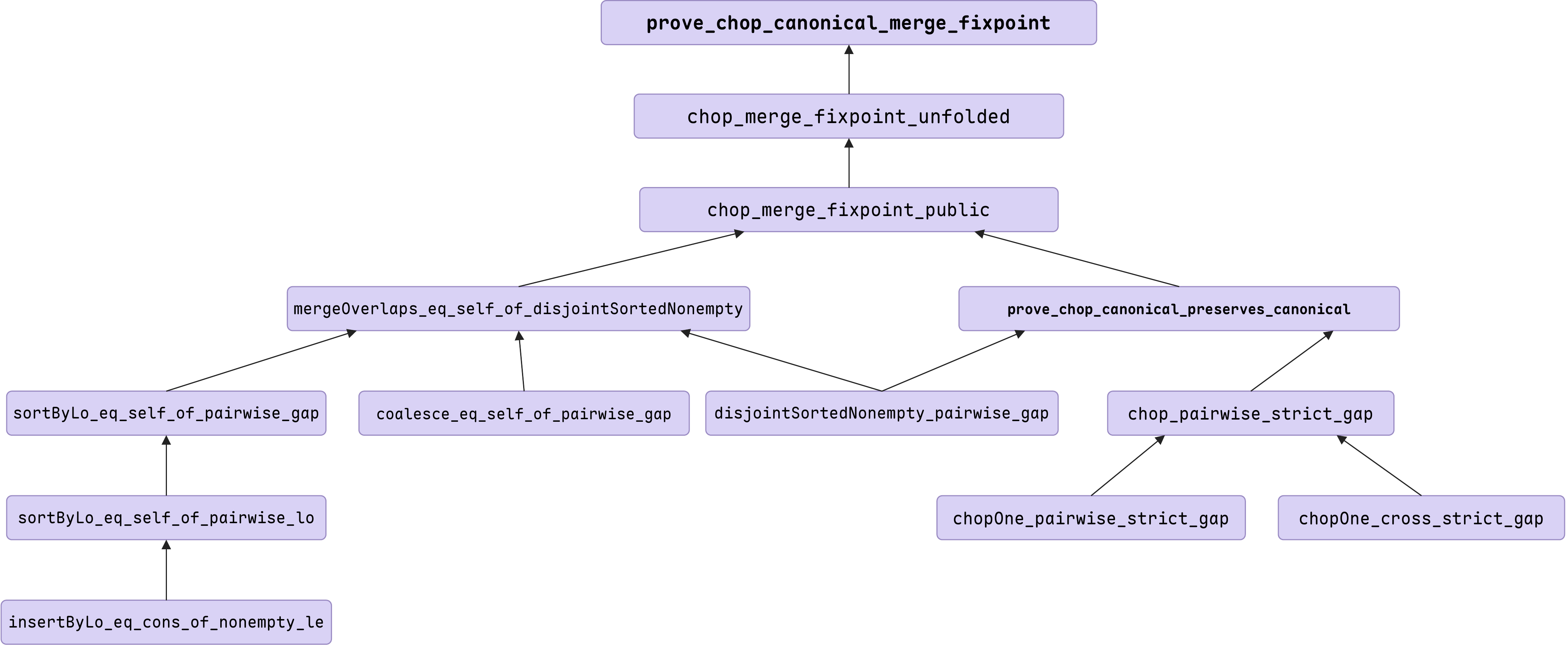}
  \caption{\textbf{Lemma-dependency graph of \texttt{prove\_chop\_canonical\_merge\_fixpoint}}, the deepest target in the Vero \texttt{intervaltree} run; edges point from a helper lemma to the declaration whose proof consumes it.
    Below the target, \texttt{chop\_merge\_fixpoint\_unfolded} bridges to the unfolded goal shape left by the automation prefix and \texttt{chop\_merge\_fixpoint\_public} restates it over the public definitions; the proof then splits into the two halves of the library, merging a canonical list is the identity (left) and chopping preserves canonicity (right).
    Two forms of reuse are visible.
    \texttt{prove\_chop\_canonical\_preserves\_canonical} (bold) is another benchmark target that this proof consumes as a lemma, and \texttt{disjointSortedNonempty\_pairwise\_gap} is a single node feeding both halves rather than a fact re-derived in each.}
  \label{fig:lemmas-case-2}
\end{figure}

Here the agent kept the mathematical statement and the syntactic shape of the goal separate.
\texttt{chop\_merge\_fixpoint\_public} is stated over the public definitions and proved in two lines from two verified results:

\begin{lstlisting}[style=lean]
lemma chop_merge_fixpoint_public (b e : Int) (ivs : List Iv)
    (hbe : b < e)
    (hnonempty : ∀ iv ∈ ivs, iv.lo < iv.hi)
    (hconsecutive : ∀ (i : Nat) (h : i + 1 < ivs.length),
      (ivs[i]).hi < (ivs[i + 1]).lo) :
    Intervaltree.mergeOverlaps (Intervaltree.chop b e ivs) =
      Intervaltree.chop b e ivs := by
  apply mergeOverlaps_eq_self_of_disjointSortedNonempty
  exact prove_chop_canonical_preserves_canonical b e ivs hbe ⟨hnonempty, hconsecutive⟩
\end{lstlisting}

The target's goal, however, had already been unfolded by the automation prefix into an explicit \texttt{match}/\texttt{flatMap} term, and an earlier attempt to \texttt{change} the goal back to the public form had failed (recorded in the plan as ``the failed earlier approach'').
The Lemma Planner therefore proposed a bridge lemma, \texttt{chop\_merge\_fixpoint\_unfolded}, stated in exactly that unfolded form and proved by unfolding the public lemma's statement the same way, which reduces the target to a single \texttt{exact}:

\begin{lstlisting}[style=lean]
theorem prove_chop_canonical_merge_fixpoint : spec_chop_canonical_merge_fixpoint canonical := by
  unfold spec_chop_canonical_merge_fixpoint
  unfold disjointSortedNonempty canonical
  unfold Intervaltree.overlaps Intervaltree.mergeOverlaps Intervaltree.chop
  try unfold Intervaltree.coalesce
  try unfold Intervaltree.chopOne
  unfold Intervaltree.sortByLo
  intro b e ivs a a_1
  simp_all
  exact chop_merge_fixpoint_unfolded b e ivs a a_1.1 a_1.2
\end{lstlisting}

\paragraph{Backtracking to an induction point.}
All 52 backtracks in the run follow one pattern: automation or an eager script unfolds a recursive definition several levels deep and loses the induction structure; the router and Recovery Reviewer recognize this, rewinds to a node where the recursive object is still abstract (\ie the nearest safe point) and proposes an induction there or a lemma to apply.
Two representative recorded reasons: for \texttt{insertByLo\_length}, ``the repeated \texttt{unfold}/\texttt{split} path has exposed several fixed recursive layers of \texttt{insertByLo} but introduced no induction hypothesis for the arbitrary remaining tail \texttt{ys}; closing the displayed goal would require proving the same length property for \texttt{ys}, whereas node 66234 permits structural induction on \texttt{ys}, yielding exactly that induction hypothesis''; and for \texttt{prove\_merge\_coverage}, ``at node 65680 both \texttt{covered} and \texttt{sortByLo} have already been unfolded, so the available coverage lemmas no longer syntactically match \ldots\ at node 65541, unfolding only the specification and the canonical record preserves it sufficiently to apply the three existing semantic lemmas directly.''
Because every backtrack keeps a record of why the abandoned path failed, later planning passes over the same region do not repeat it.

\subsubsection{Dependency relationships}
\label{app:vero-deps}

The 111 lemmas/theorems form the dependency graph of maximum depth 6, layered from single-step facts up through the pipeline:

\begin{lstlisting}[style=lean]
Layer 0: facts about one interval / one insertion step
         insertByLo_coverage, insertByLo_output_nonempty, mem_insertByLo,
         chopOne_coverage, covered_eq_true_iff, covered_perm,
         disjointSortedNonempty_tail, disjointSortedNonempty_pairwise_gap, ...
Layer 1: lifted through the fold / flatMap; functional induction on coalesce
         sortByLo_coverage, sortByLo_output_nonempty,
         coalesce_coverage_of_pairwise_lo, coalesce_disjointSortedNonempty,
         chop_pairwise_strict_gap, disjointSortedNonempty_ext_of_coverage
Layer 2: sortedness certificate
         sortByLo_pairwise_lo, mem_sortByLo_iff, sortByLo_eq_self_of_pairwise_gap
Layer 3: the central targets
         prove_merge_coverage, prove_merge_disjoint_strict, prove_merge_is_canonical,
         mergeOverlaps_eq_self_of_disjointSortedNonempty
Layer 4: consequences of coverage + canonicality (+ extensionality)
         prove_merge_canonical_unique, prove_merge_coverage_determines,
         mergeOverlaps_perm, ...
Layer 5: consequences of determination / permutation invariance
         prove_merge_reverse_eq, prove_merge_perm_eq_any, prove_merge_chop_merge, ...
Layer 6: prove_chop_canonical_merge_fixpoint
\end{lstlisting}

\begin{table}[h]
  \centering
  \caption{\textbf{The most reused declarations.}
    A handful of layer lemmas (sortedness, nonemptiness, the strict-gap conversion) feed more than half of the 56 targets, and seven proved targets are themselves reused as lemmas---this is why decomposing into lemmas scales here: each hard induction is done once and then reused.}
  \label{tab:vero-reuse}
  \small
  \resizebox{\linewidth}{!}{    \begin{tabular}{llcc}
      \toprule
      Declaration                                        & Kind   & Direct users & Targets depending on it (transitively) \\
      \midrule
      \texttt{prove\_merge\_coverage}                    & target & 15           & 18                                     \\
      \texttt{disjointSortedNonempty\_pairwise\_gap}     & lemma  & 11           & 24                                     \\
      \texttt{prove\_merge\_disjoint\_strict}            & target & 11           & 18                                     \\
      \texttt{sortByLo\_pairwise\_lo}                    & lemma  & 7            & 31                                     \\
      \texttt{sortByLo\_output\_nonempty}                & lemma  & 7            & 28                                     \\
      \texttt{prove\_chop\_pointwise}                    & target & 5            & 6                                      \\
      \texttt{prove\_merge\_coverage\_determines}        & target & 5            & 5                                      \\
      \texttt{coalesce\_disjointSortedNonempty}          & lemma  & 4            & 23                                     \\
      \texttt{disjointSortedNonempty\_ext\_of\_coverage} & lemma  & 4            & 10                                     \\
      \texttt{covered\_eq\_true\_iff}                    & lemma  & 4            & 14                                     \\
      \texttt{disjointSortedNonempty\_tail}              & lemma  & 4            & 14                                     \\
      \bottomrule
    \end{tabular}  }
\end{table}

Of the 56 targets, 10 close at depth 0 (no helper lemma), and the rest sit at depths 1--6 with direct dependency lists of one to four declarations each.

\subsubsection{What made all 56 provable}
\label{app:vero-analysis}

Five mechanisms, each visible in the trace, account for this outcome. As in the CLEVER study, each is a design component of \Cref{sec:architecture-control-loop} doing its intended job, yet here the components carry \emph{provability} rather than cost.
\emph{(1) Decomposition along the implementation---the Lemma Planner writing into the dependency DAG.}
Every \texttt{mergeOverlaps} fact was split into an \texttt{insertByLo} step, a \texttt{sortByLo} lift, and a \texttt{coalesce} fact taking a sortedness certificate. Because each layer lemma is a DAG node served to every consumer, the certificate (\texttt{sortByLo\_pairwise\_lo}) and the nonemptiness invariant (\texttt{sortByLo\_output\_nonempty}) were proved once and reused by 31 and 28 of the 56 targets, respectively.
\emph{(2) The right induction for each recursion scheme---the router's planning.}
The agent built for structural obstructions chose structural induction for insertion and the fold and the functional induction principle \texttt{coalesce.induct} for the well-founded coalescing, reverting or generalizing hypotheses so the induction hypothesis applies to the modified recursive argument; its plans then guided the Proof Writer's scripts.
\emph{(3) Semantic lemmas instead of syntactic unfolding---structure-before-repetition.}
Where redrawing on an unfolded goal could only fail again, the router escalated to statement-level structure: the extensionality lemma turns uniqueness, determination, and permutation-invariance targets into short applications of the two central results, with no reasoning about the sorting algorithm.
\emph{(4) Proved targets reused as lemmas---the DAG makes targets first-class nodes.}
Nothing in the graph distinguishes a stable target from a stable helper, so seven proved targets serve as dependencies of later targets and the proof reaches depth 6 without duplicating work (\ie cross-target reuse that no per-theorem pool, sketch, or blueprint expresses).
\emph{(5) Checking and backtracking---the counterexample check and the Recovery Reviewer.}
The check refuted a false candidate lemma before any proof effort was staked on it, and the Reviewer accepted 52 backtracks to the nearest safe node, each with a recorded reason entering the failure memory, so later planning passes over the same region never repeated a dead end.

%% file: appendix/prompts.tex
\section{Prompt Templates}
\label{app:prompts}

This section lists the prompt templates of the agents shown in \Cref{fig:router-agent-loop} and described in \Cref{sec:specialist-agents}.

\promptbox{System Prompt}{appendix/prompts/system.txt}

\promptbox{Agentic Router Prompt}{appendix/prompts/router.txt}

\promptbox{Lemma Planner Prompt}{appendix/prompts/lemma_planner.txt}

\promptbox{Proof Writer Prompt}{appendix/prompts/proof_writer.txt}

\promptbox{Recovery Reviewer Prompt}{appendix/prompts/recovery_reviewer.txt}

%% file: appendix/limitations.tex
\section{Limitations}
\label{app:limitations}

\textbf{Benchmark coverage.}
Our evaluation covers five Lean~4 benchmarks, but only two of them, NTP4VC and Vero, are repository-level, so the evidence on repository-scale lemma reuse rests on fewer tasks than the function-level comparison.
All benchmarks are in Lean~4 since we don't implement \ToolName{} on Rocq or Isabelle/HOL.

\textbf{Proof-only setting.}
Implementations and specifications are fixed inputs since \ToolName{} only operates on proofs as a theorem prover in program verification.

\textbf{Conditional theory.}
In theory, we do not prove that the implemented LLM router must attain low regret. The analysis fixes the target of the design and explains when orchestration pays off. And our experiment results demonstrate that the implemented agentic router can attain low regret.

\textbf{Reliance on LLM judgment.}
Routing, lemma proposal, planning, and recovery all depend on the semantic judgment of the backbone model.
The symbolic guards are syntactic and do not judge whether a decision is good.

\textbf{Wall-clock time and cost accounting.}
\ToolName{} is slower than the coding agents in mean wall-clock time under GPT-5.6-Terra and Gemini-3.5-Flash, which we attribute to checking each bounded invocation with Lean before choosing the next one.
Our cost model counts only model tokens at official API prices; Lean checking, retrieval, embedding, and other local computation run on our own machine and are not charged.
Dollar costs also depend on each provider's caching behavior, which means the baselines like coding agents using one single conversation will take advantage of the caching cost.